\definecolor{darkred}{RGB}{150,0,0}
\definecolor{darkgreen}{RGB}{0,150,0}
\definecolor{darkblue}{RGB}{0,0,200}
\newtheorem{theorem}{Theorem}[section]
\newtheorem{lemma}[theorem]{Lemma}
\newtheorem{corollary}[theorem]{Corollary}
\newtheorem{definition}[theorem]{Definition}
\newtheorem{condition}[theorem]{Condition}
\newcommand{\vb}{{\mtx{v}}}
\newcommand{\tn}[1]{\left\|#1\right\|_{\ell_2}}
\newcommand{\Cc}{\mathcal{C}}
\newcommand{\ub}{\mtx{u}}
\newcommand{\Bc}{\mathcal{B}}
\newcommand{\Pc}{\mathcal{P}}
\newcommand{\Dc}{\mathcal{D}}
\newcommand{\des}{{\z}}
\newcommand{\z}{{\mtx{z}}}
\newcommand{\h}{\vct{h}}
\newcommand{\fronorm}[1]{\left\|#1\right\|_{F}}
\newcommand{\onenorm}[1]{\left\|#1\right\|_{\ell_1}}
\newcommand{\twonorm}[1]{\left\|#1\right\|_{\ell_2}}
\newcommand{\infnorm}[1]{\left\|#1\right\|_{\ell_\infty}}
\newcommand{\abs}[1]{\left|#1\right|}
\newcommand{\x}{\vct{x}}
\newcommand{\R}{\mathbb{R}}
\newcommand{\C}{\mathbb{C}}
\newcommand{\sgn}[1]{\textrm{sgn}(#1)}
\newcommand{\E}{\operatorname{\mathbb{E}}}
\newcommand{\vct}[1]{\bm{#1}}
\newcommand{\mtx}[1]{\bm{#1}}
\definecolor{ejc}{RGB}{0,0,255}
\numberwithin{equation}{section} 
\def \endprf{\hfill {\vrule height6pt width6pt depth0pt}\medskip}
\newenvironment{proof}{\noindent {\bf Proof} }{\endprf\par}
\title{Structured signal recovery from quadratic measurements:\\
Breaking sample complexity barriers via nonconvex optimization}
\author{Mahdi Soltanolkotabi\\Ming Hesieh Department of Electrical Engineering\\University of Southern California, Los Angeles, CA, 90089}
\date{February 20, 2017}
\begin{document}
\maketitle
\begin{abstract}
This paper concerns the problem of recovering an unknown but structured signal $\vct{x}\in\R^n$ from $m$ quadratic measurements of the form $\vct{y}_r=\abs{\langle\vct{a}_r,\vct{x}\rangle}^2$ for $r=1,2,\ldots,m$. We focus on the under-determined setting where the number of measurements is significantly smaller than the dimension of the signal ($m<<n$). We formulate the recovery problem as a nonconvex optimization problem where prior structural information about the signal is enforced through constrains on the optimization variables. We prove that projected gradient descent, when initialized in a neighborhood of the desired signal, converges to the unknown signal at a linear rate. These results hold for \emph{any} constraint set (convex or nonconvex) providing convergence guarantees to the global optimum even when the objective function and constraint set is nonconvex. Furthermore, these results hold with a number of measurements that is only a constant factor away from the minimal number of measurements required to uniquely identify the unknown signal. Our results provide the first provably tractable algorithm for this data-poor regime, breaking local sample complexity barriers that have emerged in recent literature. In a companion paper we demonstrate favorable properties for the optimization problem that may enable similar results to continue to hold more globally (over the entire ambient space). Collectively these two papers utilize and develop powerful tools for uniform convergence of empirical processes that may have broader implications for rigorous understanding of constrained nonconvex optimization heuristics. The mathematical results in this paper also pave the way for a new generation of data-driven phase-less imaging systems that can utilize prior information to significantly reduce acquisition time and enhance image reconstruction, enabling nano-scale imaging at unprecedented speeds and resolutions. 
\end{abstract}
\section{Introduction}
Signal reconstruction from quadratic measurements is at the heart of many applications in signal and image processing. In this problem we acquire quadratic measurements of the form
\begin{align}
\label{meas}
y_r=\abs{\langle\vct{a}_r,\vct{x}\rangle}^2,\quad r=1,2,\ldots,m,
\end{align}
from an unknown, structured signal $\vct{x}\in\C^n$. Here, $\vct{a}_r\in\C^n$ are known sampling vectors and $y_r\in\R$ are observed measurements. Such quadratic signal recovery problems are of interest in a variety of domains ranging from combinatorial optimization, to wireless communications and imaging. Focusing on signal processing applications, recovering a signal from measurements of the form \eqref{meas} is usually referred to as the \emph{generalized phase retrieval problem}. The connection with phase retrieval is due to the fact that optical detectors, specially at small wavelengths, can often only record the intensity of the light field and not its phase.  Indeed, the acquired measurements in many popular coherent diffraction imaging systems such as those based on Ptychography or phase from defocus are of the form \eqref{meas}, with $\vct{x}$ corresponding to the object of interest, $\vct{a}_r$ modulated sinusoids, and $y_r$ the recorded data.

Given the ubiquity of the generalized phase retrieval problem in signal processing, over the years many heuristics have been developed for its solution. On the one hand, invention of new X-ray sources and new experimental setups that enable recording and reconstruction of non-crystalline objects has caused a major revival in the use of phase retrieval techniques in imaging. On the other hand, the last five years has also witnessed tremendous progress in terms of providing rigorous mathematical guarantees for the performance of some classical heuristics such as alternating minimization \cite{netrapalli2013phase, waldspurger2016phase} as well as newer ones based on semidefinite programing \cite{candes_strohmer} and Wirtinger flows \cite{WF} and its variants \cite{chen2015solving, zhang2016provable, zhang2016reshaped, cai2015optimal, wang2016solving}. We shall review all these algorithms and mathematical results in greater detail in Section \ref{PART}. These results essentially demonstrate that a signal of dimension $n$ can be recovered efficiently and reliably from the order of $m\gtrsim n$ generic quadratic measurements of the form \eqref{meas}.

The recent surge of applied and theoretical activity regarding phaseless imaging is in part driven by the hope that it will eventually lead to successful imaging of large protein complexes and biological specimens enabling live imaging of bio-chemical activities at the molecular level. Furthermore, phaseless imaging techniques increasingly play a crucial role in emerging national security applications aimed at monitoring electronic products that are intended for military or infrastructure use so as to ensure these products do not contain secret backdoors granting foreign governments cyber access to vital US infrastructure. Despite the incredible progress discussed earlier on both applied and mathematical fronts, major challenges impede the use of such techniques to these emerging domains. One major challenge is that acquiring measurements of large specimens at high resolutions (corresponding to very short wavelengths) require time consuming and expensive measurements. To be concrete, the most modern phase-less imaging setups require image acquisition times exceeding $2500$ days for imaging a $1$ micrometer $\times 1$ micrometer specimen at $10$nm resolution! 

To overcome these challenges, in this paper we aim to utilize a-priori structural information available about the signal to reduce the required number of quadratic measurements of the form \eqref{meas}. Indeed, in the application domains discussed above there is a lot of a-priori knowledge available that can be utilized to reduce acquisition time and enhance image reconstruction. For example, images of electronic chips are extremely structured e.g. piecewise constant and often projections of 3D rectilinear models. While, historically various a-priori information such as non-negativty has been used to enhance image reconstruction, such simple forms of structural information are often not sufficient. Complicating the matter further our mathematical understanding of how well even simple forms of a-priori information can enhance reconstruction is far from complete. To be concrete, assume we know the signal of interest is sparse e.g.~it has at most $s$ nonzero entries. In this case for known tractable algorithms to yield accurate solutions the number of generic measurements must exceed $c\frac{s^2}{\log n}$ with $c$ a constant \cite{oymak2015simultaneously}. This is surprising, as the degrees of freedom of an $s$-sparse vector is of the order $s$ and based on the compressive sensing literature one expects to be able to recover the signal $\vct{x}$ from the order of $s\log(n/s)$ generic quadratic measurements. In fact, it is known that on the order of $s\log(n/s)$ generic quadratic measurements uniquely specify the unknown signal up to a global phase factor. However, it is not known whether a tractable algorithm can recover the signal from such minimal number of generic quadratic measurements. 

The above example demonstrates a significant gap in our ability to utilize prior structural assumptions in phase retrieval problems so as to reduce the required number of measurements or \emph{sample complexity}. This is not an isolated example, and such gaps hold more generally for a variety of problems and structures (see \cite{oymak2015simultaneously, amini2008high, deshpande2015improved, deshpande2014sparse, deshpande2015finding} for more details on related gaps). The emergence of such sample complexity ``barriers" is quite surprising as in many cases there is no tractable algorithm known to close this gap. In fact, for some problems such as sparse PCA it is known that closing this gap via a computationally tractable approach will yield tractable algorithms for notoriously difficult problems such as planted clique \cite{berthet2013complexity}. 

\section{Minimizing (non)convex objectives with (non)convex constraints}
We wish to discern an unknown but ``structured" signal $\vct{x}\in\C^n$ from $m$ quadratic measurements of the form $\vct{y}_r=\abs{\langle\vct{a}_r,\vct{x}\rangle}^2$, for $r=1,2,\ldots,m$. However, in the applications of interest typically the number of equations $m$ is significantly smaller than the number of variables $n$
so that there are infinitely many solutions obeying the quadratic constraints. However, it may still be
possible to recover the signal by exploiting knowledge of its structure. To this aim, let $\mathcal{R}:\R^n\rightarrow\R$ be a cost function that reflects some notion of ``complexity" of the ``structured" solution. It is
then natural to use the following optimization problem to recover the signal.
\begin{align}
\label{opt}
\underset{\vct{z}\in\C^n}{\text{minimize}}\quad \mathcal{L}(\vct{z}):=\frac{1}{m}\sum_{r=1}^m \ell\left(\sqrt{\vct{y}_r},\abs{\vct{a}_r^*\vct{z}}\right)\quad\text{subject to}\quad \mathcal{R}(\vct{z})\le \mathcal{R}(\vct{x}).
\end{align}
Here, $\ell(\sqrt{\vct{y}_r},\abs{\vct{a}_r^*\vct{z}})$ is a loss function measuring the misfit between the measurements $\sqrt{\vct{y}_r}$ and the data model and $\mathcal{R}$ is a regularization function that reflects known prior knowledge about the signal. A natural approach to solve this problem is via projected gradient type updates of the form
\begin{align}
\label{iters}
\vct{z}_{\tau+1}=\mathcal{P}_{\mathcal{K}}\left(\vct{z}_\tau-\mu_\tau\nabla \mathcal{L}(\vct{z}_\tau)\right).
\end{align}
Here, $\nabla \mathcal{L}$ is the Wirtinger derivative of $\mathcal{L}$ (see \cite[Section 6]{WF} for details) and $\mathcal{P}_{\mathcal{K}}(\vct{z})$ denotes the projection of $\vct{z}\in\C^n$ onto the set 
\begin{align}
\label{setk}
\mathcal{K}=\{\vct{w}\in\C^n:\mathcal{R}(\vct{w})\le \mathcal{R}(\vct{x})\}.
\end{align} 
Following \cite{WF}, we shall refer to this iterative procedure as the Projected Wirtinger Flow (PWF) algorithm. 

A-priori it is completely unclear why the iterative updates \eqref{iters} should converge as not only the loss function may be nonconvex but also the regularization function!  Efficient signal reconstruction from nonlinear measurements in this high-dimensional setting poses new challenges:
\begin{itemize}
\item When are the iterates able to escape local optima and saddle points and converge to global optima?
\item How many measurements do we need? Can we break through the barriers faced by convex relaxations?
\item How does the number of measurements depend on the a-priori prior knowledge available about the signal? What regularizer is best suited to utilizing a particular form of prior knowledge? 
\item How many passes (or iterations) of the algorithm is required to get to an accurate solution?
\end{itemize}
At the heart of answering these questions is the ability to predict convergence behavior/rate of (non)convex constrained optimization algorithms. 

\section{Precise measures for statistical resources}
Throughout the rest of the paper we assume that the signal $\vct{x}\in\R^n$ and the measurement vectors $\vct{a}_r\in\R^n$ are all real-valued. For sake of brevity we have focused our attention to this real-valued case. However, we note that all of our definitions/results trivially extend to the complex case. We wish to characterize the rates of convergence for the projected gradient updates \eqref{iters} as a function of the number of samples, the available prior knowledge and the choice of the regularizer. To make these connections precise and quantitative we need a few definitions. Naturally the required number of samples for reliable signal reconstruction depends on how well the regularization function $\mathcal{R}$ can capture the properties of the unknown signal $\vct{x}$. For example, if we know our unknown parameter is approximately sparse naturally using an $\ell_1$ norm for the regularizer is superior to using an $\ell_2$ regularizer. To quantify this capability we first need a couple of standard definitions which we adapt from \cite{oymak2015sharp, oymak2016fast}.
\begin{definition}[Descent set and cone] \label{decsetcone} The \emph{set of descent} of  a function $\mathcal{R}$ at a point $\vct{x}$ is defined as
\begin{align*}
{\cal D}_{\mathcal{R}}(\vct{x})=\Big\{\vct{h}:\text{ }\mathcal{R}(\vct{x}+\vct{h})\le \mathcal{R}(\vct{x})\Big\}.
\end{align*}
The \emph{cone of descent} is defined as a closed cone $\mathcal{C}_{\mathcal{R}}(\vct{x})$ that contains the descent set, i.e.~$\mathcal{D}_{\mathcal{R}}(\vct{x})\subset\mathcal{C}_{\mathcal{R}}(\vct{x})$. The \emph{tangent cone} is the conic hull of the descent set. That is, the smallest closed cone $\mathcal{C}_{\mathcal{R}}(\vct{x})$ obeying $\mathcal{D}_{\mathcal{R}}(\vct{x})\subset\mathcal{C}_{\mathcal{R}}(\vct{x})$.
\end{definition}
We note that the capability of the regularizer $\mathcal{R}$ in capturing the properties of the unknown signal $\vct{x}$ depends on the size of the descent cone $\mathcal{C}_{\mathcal{R}}(\vct{x})$. The smaller this cone is the more suited the function $\mathcal{R}$ is at capturing the properties of $\vct{x}$. To quantify the size of this set we shall use the notion of mean width.
\begin{definition}[Gaussian width]\label{Gausswidth} The Gaussian width of a set $\mathcal{C}\in\R^p$ is defined as:
\begin{align*}
\omega(\mathcal{C}):=\mathbb{E}_{\vct{g}}[\underset{\vct{z}\in\mathcal{C}}{\sup}~\langle \vct{g},\vct{z}\rangle],
\end{align*}
where the expectation is taken over $\vct{g}\sim\mathcal{N}(\vct{0},\mtx{I}_p)$. Throughout we use $\mathcal{B}^n/\mathbb{S}^{n-1}$ to denote the the unit ball/sphere of $\R^n$.
\end{definition}
We now have all the definitions in place to quantify the capability of the function $\mathcal{R}$ in capturing the properties of the unknown parameter $\vct{x}$. This naturally leads us to the definition of the minimum required number of samples.
\begin{definition}[minimal number of samples]\label{PTcurve}
Let $\mathcal{C}_{\mathcal{R}}(\vct{x})$ be a cone of descent of $\mathcal{R}$ at $\vct{x}$. We define the minimal sample function as
\begin{align*}
\mathcal{M}(\mathcal{R},\vct{x})=\omega^2(\mathcal{C}_{\mathcal{R}}(\vct{x})\cap\mathcal{B}^n).
\end{align*}
We shall often use the short hand $m_0=\mathcal{M}(\mathcal{R},\vct{x})$ with the dependence on $\mathcal{R},\vct{x}$ implied.  
\end{definition}
We note that $m_0$ is exactly the minimum number of samples required for structured signal recovery from linear measurements when using convex regularizers \cite{Cha, McCoy}. Specifically, the optimization problem
\begin{align}
\label{lininv}
\sum_{r=1}^m \left(y_r-\langle \vct{a}_r,\vct{x}\rangle\right)^2\quad\text{subject to}\quad \mathcal{R}(\vct{z})\le\mathcal{R}(\vct{x}),
\end{align}
succeeds at recovering the unknown signal $\vct{x}$ with high probability from $m$ measurements of the form $\vct{y}_r=\langle \vct{a}_r,\vct{x}\rangle$ if and only if $m\ge m_0$.\footnote{We would like to note that $m_0$ only approximately characterizes the minimum number of samples required. A more precise characterization is $\phi^{-1}(\omega^2(\mathcal{C}_{\mathcal{R}}(\vct{x})\cap\mathcal{B}^n))\approx \omega^2(\mathcal{C}_{\mathcal{R}}(\vct{x})\cap\mathcal{B}^n)$ where $\phi(t)=\sqrt{2}\frac{\Gamma\left(\frac{t+1}{2}\right)}{\Gamma\left(\frac{t}{2}\right)}\approx\sqrt{t}$. However, since our results have unspecified constants we avoid this more accurate characterization.} While this result is only known to be true for convex regularization functions we believe that $m_0$ also characterizes the minimal number of samples even for nonconvex regularizers in \eqref{lininv}. See \cite{oymak2015sharp} for some results in the nonconvex case as well as the role this quantity plays in the computational complexity of projected gradient schemes for linear inverse problems. Given that in phase-less imaging we have less information (we loose the phase of the linear measurements) we can not hope to recover structured signals from $m\le m_0$ when using \eqref{opt}. Therefore, we can use $m_0$ as a lower-bound on the minimum number of measurements required for projected gradient descent iterations \eqref{iters} to succeed in recovering the signal of interest.


\section{Nonconvex regularization examples}
Next we provide two examples of (non)convex regularizers which are of interest in phase-less imaging applications. These two simple examples are meant to highlight the importance of nonconvex regularizers in imaging. However, our theoretical framework is by no means limited to these simple examples and can deal with significantly more complicated nonconvex regularizers capturing much more nuanced forms of prior structure.

\begin{itemize}
\item \textbf{piecewise constant structure.} One a-priori structural information available in many phase-less imaging applications is that images tend to be piecewise constant. For example, contiguous parts of biological specimens are made up of the same tissue and exhibit the same behavior under electro-magnetic radiations. Similarly, images of electronic chips are often projections of piecewise constant, 3D rectilinear models. Let $\vct{z}\in\R^{n_1\times n_2}$ denote a $2D$ image consisting of an array of pixels. A popular approache for exploiting piecewise-constant structures is to use total variation regularization functions. Two common choices are the isotropic and anisotropic totoal variation regularizations defined as
\begin{align*}
\mathcal{R}_{iso}(\vct{z})=&\sum_{i,j}\left(\sqrt{\abs{\vct{z}_{i+1,j}-\vct{z}_{i,j}}^2+\abs{\vct{z}_{i,j+1}-\vct{z}_{i,j}}^2}\right)^p,\\
\mathcal{R}_{ani}(\vct{z})=&\sum_{i,j} \abs{\vct{z}_{i+1,j}-\vct{z}_{i,j}}^p+\abs{\vct{z}_{i,j+1}-\vct{z}_{i,j}}^p.
\end{align*}
When $p=1$, these regularization functions are convex. However, for many image reconstruction tasks total variation regularization with $p<1$, despite being nonconvex, is significantly more effective at capturing piece-wise constant structure.
   
\item \textbf{discrete values.} Another form of a-priori knowledge that is sometimes available in imaging applications is possible discrete values the image pixels can take. For example, in imaging electronic chips the metallic composition of the different parts pre-determines the possible discrete values and are known in advance. Let $\vct{z}\in\R^{n_1\times n_2}$ denote a $2D$ image consisting of an array of pixels and assume the possible discrete values are $\{a_1, a_2, \ldots, a_k\}$. A natural regularization in this case is
\begin{align*}
\mathcal{R}(\vct{z})=\sum_{i,j} \mathcal{I}\left( \prod_{r=1}^k(\vct{z}_{ij}-a_r)\right),\quad\text{with}\quad \mathcal{I}(z)=
\left\{
	\begin{array}{ll}
		0  & \mbox{if } z=0 \\
		+\infty & \mbox{if } z\neq 0
	\end{array}.
\right.
\end{align*}
This regularization is convenient as projection onto its sub-level sets is easy and amounts to replacing each entry of the input vector/matrix with the closest discrete value from $\{a_1, a_2, \ldots, a_k\}$ (a.k.a.~hard thresholding). Of course this is not the only regularization function that can enforce discrete structures and in practice ``soft thresholding" variants may be more effective. Our framework can be used to analyze many such variants. Indeed, an interesting aspect of our results is that it allows us to understand what regularizer is best suited at enforcing a particular form of prior structure.
\end{itemize}

\section{Theoretical results for Projected Wirtinger Flows}
In this section we shall explain our main theoretical results. To this aim we need to define the distance to the solution set.
\begin{definition} Let $\vct{x}\in\R^n$ be any solution to the quadratic system $\vct{y}=\abs{\mtx{A}\vct{x}}^2$ (the signal we wish to recover). For each $\vct{z}\in\R^n$, define
\begin{align*}
\emph{dist}(\vct{z},\vct{x})=\min\left(\twonorm{\vct{z}-\vct{x}},\twonorm{\vct{z}+\vct{x}}\right).
\end{align*}
\end{definition}
As we mentioned earlier we are interested in recovering structured signal recovery problems from quadratic measurements via the optimization problem \eqref{opt}. Naturally the convergence/lack of convergence as well as the rate of convergence of projected Wirtinger Flow iterates \eqref{iters} depends on the loss function $\ell$. We now discuss our theoretical results for two different loss functions.

\subsection{Intensity-based Wirtinger Flows with convex regularizers}\label{inten}
Our first result focuses on a quadratic loss function applied to intensity measurements, i.e.~$\ell(x,y)=\frac{1}{2}\left(x^2-y^2\right)^2$ in \eqref{opt}. In this case, the optimization problem takes the form
\begin{align}
\label{optsec}
\underset{\vct{z}\in\R^n}{\text{minimize}}\quad \mathcal{L}_I(\vct{z}):=\frac{1}{4m}\sum_{r=1}^m \left(\vct{y}_r-\abs{\vct{a}_r^*\vct{z}}^2\right)^2\quad\text{subject to}\quad \mathcal{R}(\vct{z})\le \mathcal{R}(\vct{x}).
\end{align}
Our first result studies the effectiveness of projected Wirtinger Flows on this objective. 
\begin{theorem}\label{Ithm}
Let $\vct{x}\in\R^n$ be an arbitrary vector and $\mathcal{R}:\R^n\rightarrow\R$ be a proper convex function. Suppose $\mtx{A}\in\R^{m\times n}$ is a Gaussian map and let $\vct{y}=\abs{\mtx{A}\vct{x}}^2\in\R^m$ be $m$ quadratic measurements. To estimate $\vct{x}$, start from a point $\vct{z}_0$ obeying
\begin{align}
\label{initIPWF}
\emph{dist}(\vct{z}_0,\vct{x})\le \frac{1}{8}\twonorm{\vct{x}},
\end{align}
and apply the Projected Wirtinger Flow (PWF) updates
\begin{align}
\label{myrealupdate}
\vct{z}_{\tau+1}=\mathcal{P}_{\mathcal{K}}\left(\vct{z}_\tau-\mu_\tau\nabla \mathcal{L}(\vct{z}_\tau)\right),
\end{align}
with $\mathcal{K}:=\{\vct{z}\in\R^n:\text{ }\mathcal{R}(\vct{z})\le \mathcal{R}(\vct{x})\}$. Also set the learning parameter sequence as $\mu_0=0$ and $\mu_\tau=\frac{\mu}{\twonorm{\vct{x}}}$\footnote{We note that $\twonorm{\vct{x}}$ can be trivially estimated from the measurements as $\frac{1}{m}\sum_{r=1}^m y_r^2\approx \twonorm{\vct{x}}^2$ and our proofs are robust to this misspecification. We avoid stating this variant for ease of reading.} for all $\tau=1,2,\ldots$ and assume $\mu\le c_1/n$ for some fixed numerical constant $c_1$. Furthermore, let $m_0=\mathcal{M}(\mathcal{R},\vct{x})$, defined by \ref{PTcurve}, be our lower bound on the number of measurements. Also assume
\begin{align}
\label{nummeaslin}
m>c m_0 \log n,
\end{align}
holds for a fixed numerical constant $c$. Then there is an event of probability at least $1-2/m-1/n-12e^{-\gamma m}$ such that on this event starting from any initial point obeying \eqref{initIPWF} the update \eqref{myrealupdate} satisfy
\begin{align}
\label{ratemin}
\emph{dist}(\vct{z}_\tau,\vct{x})\le\left(1-\frac{\mu}{125}\right)^{\frac{\tau}{2}}\emph{dist}(\vct{z}_0,\vct{x}).
\end{align}
\end{theorem}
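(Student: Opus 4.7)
The plan is to prove linear convergence by a one-step contraction argument. Since $\mathcal{L}_I$ is symmetric under $\vct{z}\mapsto-\vct{z}$, I will (WLOG, by replacing $\vct{x}$ with whichever of $\pm\vct{x}$ is closer to $\vct{z}_\tau$) show contraction of $\twonorm{\vct{z}_\tau-\vct{x}}$, which automatically upper bounds $\text{dist}(\vct{z}_\tau,\vct{x})$. Because $\mathcal{R}$ is convex the set $\mathcal{K}$ is convex and contains $\vct{x}$, so $\mathcal{P}_\mathcal{K}$ is nonexpansive relative to $\vct{x}$. Setting $\vct{h}_\tau:=\vct{z}_\tau-\vct{x}$, nonexpansiveness and expanding the square gives
\begin{align*}
\twonorm{\vct{h}_{\tau+1}}^2 \le \twonorm{\vct{h}_\tau}^2 - 2\mu_\tau \langle \nabla\mathcal{L}_I(\vct{z}_\tau), \vct{h}_\tau\rangle + \mu_\tau^2 \twonorm{\nabla\mathcal{L}_I(\vct{z}_\tau)}^2.
\end{align*}
The theorem therefore reduces to a local regularity condition of the form
\begin{align*}
\langle \nabla\mathcal{L}_I(\vct{z}), \vct{z}-\vct{x}\rangle \ge c_\alpha \twonorm{\vct{x}}^2 \twonorm{\vct{z}-\vct{x}}^2, \qquad \twonorm{\nabla\mathcal{L}_I(\vct{z})}^2 \le c_\beta\, n\, \twonorm{\vct{x}}^4 \twonorm{\vct{z}-\vct{x}}^2,
\end{align*}
holding uniformly for $\vct{z}\in\mathcal{K}$ with $\twonorm{\vct{z}-\vct{x}}\le \twonorm{\vct{x}}/8$. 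Substituting $\mu_\tau=\mu/\twonorm{\vct{x}}$ with $\mu\le c_1/n$ and balancing the two terms delivers the per-step factor $(1-\mu/125)$; induction using strict contraction keeps $\twonorm{\vct{h}_\tau}\le\twonorm{\vct{x}}/8$ throughout, so the regularity condition is applicable at every iteration.

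\textbf{Localization to the descent cone.} The observation responsible for the sample complexity $m\gtrsim m_0\log n$ in place of $m\gtrsim n$ is that for every feasible $\vct{z}\in\mathcal{K}$ the error $\vct{h}=\vct{z}-\vct{x}$ lies in $\mathcal{D}_{\mathcal{R}}(\vct{x})\subset\mathcal{C}_{\mathcal{R}}(\vct{x})$. Hence the curvature bound above needs to hold only for directions $\vct{h}\in\mathcal{C}_{\mathcal{R}}(\vct{x})$, and the relevant empirical processes are indexed by this cone rather than by all of $\R^n$. Their complexity is measured by $\omega^2(\mathcal{C}_{\mathcal{R}}(\vct{x})\cap\mathcal{B}^n)=m_0$ in the sense of Definition~\ref{PTcurve}. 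Standard Gaussian moment identities give $\mathbb{E}\langle\nabla\mathcal{L}_I(\vct{z}),\vct{h}\rangle$ and $\mathbb{E}\twonorm{\nabla\mathcal{L}_I(\vct{z})}^2$ as explicit polynomials in $\vct{x}$ and $\vct{h}$; a direct computation shows that in the regime $\twonorm{\vct{h}}\le\twonorm{\vct{x}}/8$ these expectations satisfy the required one-sided inequalities with numerical constants that ultimately carry through to the $c_\alpha, c_\beta$ above.

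\textbf{Main obstacle: uniform concentration of a quartic empirical process.} The bulk of the work is showing that
\begin{align*}
\langle\nabla\mathcal{L}_I(\vct{z}),\vct{h}\rangle = \frac{1}{m}\sum_{r=1}^m \left(\abs{\vct{a}_r^*\vct{z}}^2-\abs{\vct{a}_r^*\vct{x}}^2\right)(\vct{a}_r^*\vct{z})(\vct{a}_r^*\vct{h})
\end{align*}
concentrates around its expectation \emph{uniformly} over $\vct{h}\in\mathcal{C}_{\mathcal{R}}(\vct{x})\cap\mathcal{B}^n$ and $\vct{z}$ in the local ball around $\vct{x}$. Each summand is a quartic form in the Gaussian vector $\vct{a}_r$ and hence sub-Weibull with index $1/2$; a direct Bernstein-type approach cannot deliver uniform concentration at the target sample size. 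The plan is to truncate each summand at a height of order $\sqrt{\log n}$, control the discarded tail by a union bound over $r=1,\ldots,m$, and handle the truncated process by generic chaining (or equivalently a Mendelson small-ball argument), whereupon the index-set complexity enters only through $\omega(\mathcal{C}_{\mathcal{R}}(\vct{x})\cap\mathcal{B}^n)$; the extra $\log n$ factor in \eqref{nummeaslin} is precisely the residue of this truncation step. The upper bound on $\twonorm{\nabla\mathcal{L}_I(\vct{z})}^2$ is analogous but technically easier, reducing to a restricted spectral-norm inequality for $\frac{1}{m}\sum_r \alpha_r \vct{a}_r\vct{a}_r^*$ with scalar weights $\alpha_r$ depending on $\vct{z}$. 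This cone-restricted quartic concentration, together with the contraction recursion from the first paragraph, yields \eqref{ratemin}; it is the main technical obstacle and is where the uniform empirical process machinery advertised in the abstract enters.
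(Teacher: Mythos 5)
Your reduction to a local regularity condition via non-expansiveness of $\mathcal{P}_\mathcal{K}$, your localization of the error to the descent cone, and your attribution of the sample complexity to $\omega^2(\mathcal{C}_\mathcal{R}(\vct{x})\cap\mathcal{B}^n)$ all match the paper. The gap is in the specific form of the smoothness estimate you propose and how you plan to prove it. You want to show $\twonorm{\nabla\mathcal{L}_I(\vct{z})}^2\le c_\beta\, n\,\twonorm{\vct{x}}^4\twonorm{\vct{z}-\vct{x}}^2$ uniformly over $\vct{z}$ near $\vct{x}$ in $\mathcal{K}$, by truncating the summands at height $\sqrt{\log n}$ and chaining. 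But $\twonorm{\nabla\mathcal{L}_I(\vct{z})}^2$ contains a term of order $n\cdot\frac{1}{m}\sum_r\abs{\vct{a}_r^*\vct{h}}^4$, and the quartic empirical process $\frac{1}{m}\sum_r\abs{\vct{a}_r^*\vct{h}}^4$, indexed by $\vct{h}$ in the cone, is sub-Weibull with exponent $1/2$: its supremum over the cone does not concentrate near $3\twonorm{\vct{h}}^4$ with $m\sim m_0\log n$ observations, and a union bound over a net of the cone forces the truncation level up to $\sqrt{m_0}$, not $\sqrt{\log n}$, which destroys the estimate. Truncation plus generic chaining does not rescue a two-sided bound for this heavy-tailed process at the advertised sample size.

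The paper never proves a two-sided concentration statement for that quartic term. Instead, the regularity condition is split into a Local Curvature Condition and a Local Smoothness Condition that each carry $\frac{\gamma}{m}\sum_r\abs{\vct{a}_r^*\vct{h}}^4$ \emph{symbolically} on their right-hand sides, with a sign that makes the two copies cancel exactly when LCC and LSC are added to obtain RC, so no concentration for the quartic is ever needed. For the curvature lower bound the paper also avoids your ``expectation plus deviation'' route: the (possibly negative) cross term $3(\vct{h}^*\vct{a}_r\vct{a}_r^*\vct{x})\abs{\vct{a}_r^*\vct{h}}^2$ is absorbed by completing the square with the quartic and a portion of $(\vct{h}^*\vct{a}_r\vct{a}_r^*\vct{x})^2$, reducing the LCC to a one-sided lower bound on $\left(\frac{1}{m}\sum_r\abs{\vct{u}^*\vct{a}_r\vct{a}_r^*\vct{v}}\right)^2$ --- a second-order chaos with sub-exponential tails --- handled by a Mendelson small-ball estimate (Lemma \ref{RIP1}) needing only $m\gtrsim m_0$. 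Your intuition that a $\log n$ factor is a truncation residue is right, but it enters through Lemma \ref{GordonExtra}, where the truncation is applied to the single fixed direction $\max_r\abs{\vct{a}_r^*\vct{x}}\lesssim\sqrt{\log n}$ (legitimate because $\vct{x}$ does not vary over the cone), not through truncating $\abs{\vct{a}_r^*\vct{h}}$ uniformly over $\vct{h}$. Without the quartic-cancellation device and the completion-of-the-square reduction to a light-tailed chaos, your outline does not close.
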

As mentioned earlier $m_0$ is the minimal number of measurements required to recover a structured signal from linear measurements. $m_0$ also serves as a lower bound on structured signal recovery from quadratic measurements as they are even less informative (we loose sign information). Theorem \ref{Ithm} shows that PWF applied to quadratic loss using intensity measurements can (locally) reconstruct the signal with this minimal sample complexity (up to a constant and log factor). To be concrete consider the case where the unknown signal $\vct{x}$ is known to be $s$ sparse and we use $\mathcal{R}(\vct{z})=\onenorm{\vct{z}}$ as the regularizer in \eqref{optsec}. In this case it is known that $m_0\approx 2s\log(n/s)$ and Theorem \ref{Ithm} predicts that an $s$-sparse signal can (locally) be recovered from the order of $s\log(n/s)\log(n)$ measurements. This breaks through well-known barriers that have emerged for this problem in recent literature. Indeed, for known tractable convex relaxation schemes to yield accurate solutions the number of generic measurements must exceed $c\frac{s^2}{\log n}$ with $c$ a constant \cite{LiVoroninski2012, oymak2015simultaneously}. We also note that even recent nonconvex approaches such as \cite{cai2015optimal, wang2016sparse} have also not succeeded at breaking through this $s^2$ barrier even when an initialization obeying \eqref{initIPWF} is available.

The convergence guarantees provided above hold as long as PWF is initialized per \eqref{initIPWF} in a neighborhood of the unknown signal with relative error less than a constant. In this paper we are concerned only with local convergence properties of PWF and therefore do not provide an explicit construction for such an initialization. However, in a companion paper we demonstrate that the optimization problem \eqref{optsec} has certain favorable characteristics that may allow global convergence guarantees from any initialization using second order methods.

Another interesting aspect of the above result is that the rate of convergence is geometric. Specifically, to achieve a relative error of $\epsilon$ ($\twonorm{\vct{z}-\vct{x}}/\twonorm{\vct{x}}\le \epsilon$), the required number of iterations is $n\log(1/\epsilon)$. Note that the cost of each iteration depends on applying the matrix $\mtx{A}$ and its transpose $\mtx{A}^T$ which has computational complexity on the order of $\mathcal{O}(mn)$. This is assuming that the projection has negligible cost compared to the cost of applying $\mtx{A}/\mtx{A}^T$. This is the case for example for sparse signals when using the regularizer $\mathcal{R}(\vct{z})=\onenorm{\vct{z}}$. Therefore, in these cases to achieve a relative error of $\epsilon$ the total computational complexity of PWF is on the order of $\mathcal{O}\left(mn^2\log(1/\epsilon)\right)$.
 
Let us now discuss some ways in which this theorem is sub-optimal. Even though this theorem breaks through known sample complexity barriers, a natural question is whether it is possible to remove the log factor so as to have a sample complexity that is only a constant factor away from the minimum sample complexity of structured signal recovery from linear measurements. Another way in which the algorithm is sub-optimal is computational complexity. While the rate of convergence of PWF stated above is geometric, it is not linear. With a linear rate of convergence to achieve a relative error of $\epsilon$ the total computational complexity would be on the order of $\mathcal{O}\left(mn\log(1/\epsilon)\right)$ which is a factor of $n$ smaller than the guarantees provided by PWF. In the next section we will show how to close these gaps in sample complexity and computational complexity by using a different loss function in \eqref{optsec}. Finally, a major draw back of Theorem \ref{Ithm} it that it only applies to convex regularizers. In the next section we will show how to also remove this assumption so as to allow arbitrary nonconvex regularizers.

\subsection{Amplitude-based Wirtinger Flows with (non)convex regularizers}
Our second result focuses on a quadratic loss function applied to amplitude measurements, i.e.~$\ell(x,y)=\frac{1}{2}\left(x-y\right)^2$ in \eqref{opt}. In this case, the optimization problem takes the form
\begin{align}
\label{optsecA}
\underset{\vct{z}\in\R^n}{\text{minimize}}\quad \mathcal{L}_A(\vct{z}):=\frac{1}{2m}\sum_{r=1}^m \left(\sqrt{\vct{y}_r}-\abs{\vct{a}_r^*\vct{z}}\right)^2\quad\text{subject to}\quad \mathcal{R}(\vct{z})\le \mathcal{R}(\vct{x}).
\end{align}
One challenging aspect of the above loss function is that it is not differentiable and it is not clear how to run projected gradient descent. However, this does not pose a fundamental challenge as the loss function is differentiable except for isolated points and we can use the notion of generalized gradients to define the gradient at a non-differentiable point as one of the limits points of the gradient in a local neighborhood of the non-differentiable point. For the loss in \eqref{optsecA} the generalized gradient takes the form
\begin{align}
\label{gengrad}
\nabla \mathcal{L}_A(\vct{z}):=\frac{1}{m}\sum_{r=1}^m\left(\abs{\vct{a}_r^*\vct{z}}-\sqrt{\vct{y}_r}\right)\sgn{\vct{a}_r^*\vct{z}}\vct{a}_r.
\end{align}
\begin{theorem}\label{Athm}
Let $\vct{x}\in\R^n$ be an arbitrary vector and $\mathcal{R}:\R^n\rightarrow\R$ be a proper function (convex or nonconvex). Suppose $\mtx{A}\in\R^{m\times n}$ is a Gaussian map and let $\vct{y}=\abs{\mtx{A}\vct{x}}^2\in\R^m$ be $m$ quadratic measurements. To estimate $\vct{x}$, start from a point $\vct{z}_0$ obeying
\begin{align}
\label{initAPWF2}
\emph{dist}(\vct{z}_0,\vct{x})\le \frac{1}{15}\twonorm{\vct{x}},
\end{align}
and apply the Projected Wirtinger Flow (PWF) updates
\begin{align}
\label{myrealupdateA2}
\vct{z}_{\tau+1}=\mathcal{P}_{\mathcal{K}}\left(\vct{z}_\tau-\mu_\tau\nabla \mathcal{L}_A(\vct{z}_\tau)\right),
\end{align}
with $\mathcal{K}:=\{\vct{z}\in\R^n:\text{ }\mathcal{R}(\vct{z})\le \mathcal{R}(\vct{x})\}$ and $\nabla \mathcal{L}_A$ defined via \eqref{gengrad}. Also set the learning parameter sequence $\mu_0=0$ and $\mu_\tau=1$ for all $\tau=1,2,\ldots$. Furthermore, let $m_0=\mathcal{M}(\mathcal{R},\vct{x})$, defined by \ref{PTcurve}, be our lower bound on the number of measurements. Also assume
\begin{align}
\label{nummeaslin}
m>c m_0,
\end{align}
holds for a fixed numerical constant $c$. Then there is an event of probability at least $1-9e^{-\gamma m}$ such that on this event starting from any initial point obeying \eqref{initAPWF2} the update \eqref{myrealupdateA2} satisfy
\begin{align}
\label{ratemin}
\emph{dist}(\vct{z}_\tau,\vct{x})\le\left(\frac{2}{3}\right)^{\tau}\emph{dist}(\vct{z}_0,\vct{x}).
\end{align}
Here $\gamma$ is a fixed numerical constant.
\end{theorem}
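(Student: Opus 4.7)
The plan is to prove a one-step contraction $\text{dist}(\vct{z}_{\tau+1},\vct{x}) \le \tfrac{2}{3}\,\text{dist}(\vct{z}_\tau,\vct{x})$ by induction on $\tau$; iterating this delivers \eqref{ratemin}. By the global sign symmetry of the problem I may assume for the inductive step that $\text{dist}(\vct{z}_\tau,\vct{x}) = \twonorm{\vct{z}_\tau - \vct{x}}$; set $\vct{h} = \vct{z}_\tau - \vct{x}$ and $\vct{e} = \vct{z}_{\tau+1} - \vct{x}$. Because $\vct{x}\in\mathcal{K}$ and $\vct{z}_{\tau+1} = \mathcal{P}_{\mathcal{K}}(\vct{z}_\tau - \nabla\mathcal{L}_A(\vct{z}_\tau))$, expanding the projection optimality $\twonorm{\vct{z}_{\tau+1} - (\vct{z}_\tau - \nabla\mathcal{L}_A(\vct{z}_\tau))}^2 \le \twonorm{\vct{x} - (\vct{z}_\tau - \nabla\mathcal{L}_A(\vct{z}_\tau))}^2$ rearranges into the ``nonconvex projection lemma''
\begin{align*}
\twonorm{\vct{e}}^2 \;\le\; 2\,\iprod{\vct{e}}{\vct{h}-\nabla\mathcal{L}_A(\vct{z}_\tau)}.
\end{align*}
Since $\mathcal{R}(\vct{z}_{\tau+1})\le\mathcal{R}(\vct{x})$ we have $\vct{e}\in\mathcal{D}_{\mathcal{R}}(\vct{x})\subseteq\mathcal{C}_{\mathcal{R}}(\vct{x})$, so Cauchy--Schwarz on the descent cone yields
\begin{align*}
\twonorm{\vct{e}} \;\le\; 2\sup_{\vct{w}\in\mathcal{C}_{\mathcal{R}}(\vct{x})\cap\mathbb{S}^{n-1}} \iprod{\vct{w}}{\vct{h}-\nabla\mathcal{L}_A(\vct{z}_\tau)}.
\end{align*}
It thus suffices to bound the right-hand supremum by $\tfrac{1}{3}\twonorm{\vct{h}}$. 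Note that this reduction never uses convexity of $\mathcal{K}$, only membership of $\vct{e}$ in the descent cone, which is why the argument will apply to arbitrary (nonconvex) regularizers.

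Introduce the sign indicator $s_r := \sgn{\vct{a}_r^*\vct{z}_\tau}\sgn{\vct{a}_r^*\vct{x}}\in\{\pm 1\}$ and the ``sign-flip'' index set $\mathcal{I}_- := \{r : s_r = -1\}$. Using $|\vct{a}_r^*\vct{x}|\sgn{\vct{a}_r^*\vct{z}_\tau} = s_r\,\vct{a}_r^*\vct{x}$ in \eqref{gengrad} gives, after a short computation, the decomposition
\begin{align*}
\vct{h}-\nabla\mathcal{L}_A(\vct{z}_\tau) \;=\; \bigl(\Iden-\tfrac{1}{m}\mtx{A}^T\mtx{A}\bigr)\vct{h} \;-\; \tfrac{2}{m}\sum_{r\in\mathcal{I}_-}(\vct{a}_r^*\vct{x})\,\vct{a}_r
\end{align*}
into an ``isometry-defect'' piece and a ``sign-flip'' piece. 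The isometry defect contributes at most $\delta_1\twonorm{\vct{h}}$ to the target supremum: Gordon-type restricted-isometry estimates on the descent cone (of the flavor used in \cite{oymak2015sharp,oymak2016fast}) guarantee, on an event of probability $\ge 1-e^{-\gamma m}$, the uniform bilinear bound $|\iprod{\vct{w}}{(\Iden-\tfrac{1}{m}\mtx{A}^T\mtx{A})\vct{v}}|\le\delta_1$ for all $\vct{w},\vct{v}\in\mathcal{C}_{\mathcal{R}}(\vct{x})\cap\mathbb{S}^{n-1}$ with $\delta_1$ as small a constant as desired once $m\ge c\,m_0$.

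The sign-flip piece is the main obstacle. The pivotal geometric observation is that $r\in\mathcal{I}_-$ forces $\vct{a}_r^*\vct{x}$ and $\vct{a}_r^*\vct{z}_\tau$ to have opposite signs, which in turn forces $|\vct{a}_r^*\vct{h}|\ge|\vct{a}_r^*\vct{x}|$; hence
\begin{align*}
\Bigl|\tfrac{2}{m}\!\sum_{r\in\mathcal{I}_-}(\vct{a}_r^*\vct{x})(\vct{a}_r^*\vct{w})\Bigr| \;\le\; \tfrac{2}{m}\sum_{r=1}^m |\vct{a}_r^*\vct{h}|\,|\vct{a}_r^*\vct{w}|\,\mathbf{1}\!\bigl\{|\vct{a}_r^*\vct{h}|\ge|\vct{a}_r^*\vct{x}|\bigr\}.
\end{align*}
Pointwise in $(\vct{w},\vct{h})$, the marginal probability of the indicator event is small because $\twonorm{\vct{h}}\le\twonorm{\vct{x}}/15$ (this is where the initialization radius enters), and a direct Gaussian computation bounds the expectation of the truncated bilinear form by $\delta_2\twonorm{\vct{h}}$ with $\delta_2$ small. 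The technical heart of the proof is promoting this expectation bound to a bound uniform over $\vct{w}\in\mathcal{C}_{\mathcal{R}}(\vct{x})\cap\mathbb{S}^{n-1}$ and over all directions $\vct{h}/\twonorm{\vct{h}}\in\mathcal{C}_{\mathcal{R}}(\vct{x})\cap\mathbb{S}^{n-1}$ within the basin \eqref{initAPWF2}; I would handle this via a generic-chaining / Mendelson ``small-ball'' argument for a truncated bilinear empirical process, whose complexity budget is paid for by the Gaussian width $\sqrt{m_0}$ of the cone, delivering the uniform bound on an event of probability $\ge 1-e^{-\gamma m}$ once $m\ge c\,m_0$. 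Choosing $c$ large enough that $2(\delta_1+\delta_2)\le 2/3$ establishes the one-step contraction $\twonorm{\vct{e}}\le\tfrac{2}{3}\twonorm{\vct{h}}$; a brief continuity check using the contraction together with the initial radius shows that $\text{dist}$ is always realized by the $\vct{z}_\tau-\vct{x}$ branch (no sign flip between iterates), and the stated failure probability $9e^{-\gamma m}$ is the union bound over the handful of Gaussian events invoked.
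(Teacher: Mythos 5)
Your reduction to the one-step contraction is correct and in fact mirrors the paper's, just packaged differently: the paper goes through Lemmas \ref{projcvxthm}, \ref{simplem}, \ref{prop compare}, \ref{firstconelem} to arrive at $\twonorm{\vct{h}_{\tau+1}}\le 2\sup_{\vct{u}\in\mathcal{C}\cap\mathcal{B}^n}\vct{u}^*(\vct{h}_\tau-\nabla\mathcal{L}_A(\vct{z}_\tau))$, whereas you expand the projection optimality directly and read off the same inequality, which is a little more economical. Your decomposition of $\vct{h}-\nabla\mathcal{L}_A$ into an isometry-defect piece and a sign-flip piece is exactly the paper's \eqref{maindev}, the isometry defect is handled via Lemma \ref{gordontypenonsym} as you say, and the observation that a sign flip at index $r$ forces $\abs{\vct{a}_r^*\vct{h}}\ge\abs{\vct{a}_r^*\vct{x}}$ is the paper's starting point in \eqref{firstbnd} (the paper works with $\vct{h}_\perp$ rather than $\vct{h}$ there, a refinement that tightens constants but is not the essential point).

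The genuine gap is in the uniform control of the sign-flip term, which is precisely where the theorem's difficulty lives. The empirical process you invoke --- $\sup_{\vct{w},\vct{h}}\frac{2}{m}\sum_r\abs{\vct{a}_r^*\vct{h}}\abs{\vct{a}_r^*\vct{w}}\mathbb{1}\{\abs{\vct{a}_r^*\vct{h}}\ge\abs{\vct{a}_r^*\vct{x}}\}$ --- is not Lipschitz in $\vct{h}$ because of the indicator, so it does not have sub-Gaussian increments, Talagrand/Dudley chaining does not apply to it as stated, Mendelson's theorem (Lemma \ref{RIP1}) concerns classes built from linear forms rather than indicator-truncated forms, and the ``small-ball'' method is a lower-bound device, not the uniform upper bound you need here. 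The paper's key device, absent from your sketch, is the soft-threshold surrogate $\mathcal{S}(h;\beta)$ of \eqref{myS}: it dominates the indicator-weighted quantity, is $1/\Delta$-Lipschitz, and is radially convex (Lemma \ref{lemsqrtconc}). Lipschitzness is what licenses Talagrand's tail bound (Lemma \ref{keylemma}) to yield uniformity over $\mathcal{C}_{\mathcal{R}}(\vct{x})\cap\epsilon\mathbb{S}^{n-1}$ at a complexity cost of $\omega(\mathcal{C}\cap\mathbb{S}^{n-1})$, and radial convexity extends the bound from the sphere to the whole radius-$\epsilon$ ball. The paper also first applies Cauchy--Schwarz (\eqref{newdev1}--\eqref{newdev2}) to decouple $\vct{w}$ from $\vct{h}$ and reduce to a one-parameter supremum, and uses rotational invariance (independence of $\vct{a}_r^*\vct{x}$ and $\vct{a}_r^*\vct{h}_\perp$) to replace $\vct{a}_r^*\vct{x}$ by an independent Gaussian before the expectation computation; your bilinear formulation is a strictly harder object. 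Without the Lipschitz surrogate the concentration step you gesture at does not go through, and the constant $1/3$ needed for the $2/3$ contraction is unsubstantiated.
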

The first interesting and perhaps surprising aspect of this result is its generality: it applies not only to convex regularization functions but also nonconvex ones! As we mentioned earlier the optimization problem in \eqref{opt} is not known to be tractable even for convex regularizers.  Despite the nonconvexity of both the objective and regularizer, the theorem above shows that with a near minimal number of measurements, projected gradient descent provably converges to the original signal $\vct{x}$ without getting trapped in any local optima.

The amplitude-based loss also has stronger sample complexity and computational complexity guarantees compared with the intensity-based version. Indeed, the required number of measurements improves upon the intensity-based loss by a logarithmic factor, achieving a near optimal sample complexity for this problem (up to a constant factor). Also, the convergence rate of the amplitude-based approach is now linear. Therefore, to achieve a relative error of $\epsilon$ the total number of iterations is on the order of $\mathcal{O}(\log(1/\epsilon))$. Thus the overall computational complexity is on the order of $\mathcal{O}\left(mn\log(1/\epsilon)\right)$ (in general the cost is the total number of iterations multiplied by the cost of applying the measurement matrix $\mtx{A}$ and its transpose). As a result, the computational complexity is also now optimal in terms of dependence on the matrix dimensions. Indeed, for a dense matrix even verifying that a good solution has been achieved requires one matrix-vector multiplication which takes $\mathcal{O}(mn)$ time.

We now pause to discuss the choice of the loss function. The theoretical results above suggests that the least squares loss on amplitude values is superior to one on intensity values in terms of both sample and computational complexity. Such improved performance has also been observed empirically for more realistic models in optics \cite{yeh2015experimental}. Indeed, \cite{yeh2015experimental} shows that not only the amplitude-based least squares has faster convergence rates but also is more robust to noise and model misspecification. However, we would like to point out that the least squares objective on intensity values does have certain advantages. For instance, it is possible to do exact line search (in closed form) on this objective. We have observed that this approach works rather well in some practical domains (e.g.~ptychography for chip imaging) without the need for any tuning as the step size in each iteration is calculated in closed form via exact line search. Therefore, we would like to caution against rushed judgments declaring one variant of Wirtinger Flow superior to another due to minor (e.g.~logarithmic) theoretical improvements in sample complexity and or computational complexity.\footnote{Unfortunately such premature declarations have become exceedingly common in recent literature.} We would like to emphasize that there is no ``best" or ``correct" loss function that works better than others for all application domains. Ultimately, the choice of the loss function is dictated by the statistics of the noise or misspecification present in a particular domain. 

\section{Discussions and prior art}\label{PART}
Phase retrieval is a century old problem and many heuristics have been developed for its solution. For a partial
    review of some of these heuristics as well as some recent
    theoretical advances in related problems we refer the reader to the overview articles/chapters \cite{shechtman2014phase, marchesini2007invited, jaganathan2015phase} and \cite[Part II]{soltanolkotabi2014algorithms} as well as \cite[Section 1.6]{candes2014phase}, and references therein such as \cite{cahillphase, ohlsson2011compressive,
      bandeira2013saving,
      waldspurger2012phase, jaganathan2012robust,gross2013partial,gross2014improved, leshem2017discrete, leshem2016direct}. There has also been a surge of activity surrounding nonconvex optimization problems in the last few years. While discussing all of these results is beyond the scope of this paper we shall briefly discuss some of the most relevant and recent literature in the coming paragraphs. We refer the reader to \cite{sun2015nonconvex} and references therein \cite{burer2003nonlinear, keshavan2009few, keshavan2009matrix, keshavan2012efficient, haeffele2014structured} for a more comprehensive review of such results. We also refer the reader to\cite{hand2016elementary, goldstein2016phasemax, bahmani2016phase} for recent algorithmic approaches based on linear programs and \cite{li2015phase} for characterizing large systems limits of dynamics of phase retrieval algorithms.
      
The Wirtinger Flow algorithms for solving quadratic systems of equations was introduce in \cite{WF}. \cite{WF} also provides a local convergence analysis when no prior structural assumption is available about the signal. The analysis of \cite{WF} was based on the so called regularity condition. This regularity condition and closely related notions have been utilized/generalized in a variety of interesting ways to provide rigorous convergence guarantees for related nonconvex problems arising in diverse applications ranging from matrix completion to dictionary learning and blind deconvolution \cite{balakrishnan2014statistical, sun14, desa14, arora2015simpleC, tu2015low, chen2015solving, cai2015optimal, zheng15, zhao15, chen15, zhang2016provable, sun2016geometric, zheng2016convergence, zhang2016reshaped, wang2016solving, Park:2016aa, Li:2016aa}. The intensity-based results presented in Section \ref{inten} are based on a generalization of the regularity condition in \cite{WF} so as to allow arbitrary convex constraints.

The second set of results we presented in this paper were based on least squares fit of the amplitudes. This objective function has been historically used in phase retrieval applications \cite{F82} and has close connections with the classical Fienup algorithm \cite[Chapter 13]{soltanolkotabi2014algorithms}. Focusing on more recent literature, \cite{yeh2015experimental} demonstrated the effectiveness of this approach in optical applications. More recently, a few interesting publications \cite{zhang2016reshaped, wang2016solving} study variants of this loss function and develop guarantees for its convergence. The analysis presented in both of these papers are also based on variants of the regularity condition of \cite{WF} and do not utilize any structural assumptions. In this paper we have analyzed the performance of the amplitude-based PWF with any constraint (convex or nonconvex). These results are based on a new approach to analyzing nonconvex optimization problems that differs from the regularity approach used in \cite{WF} and all of the papers mentioned above. Rather, this new technique follows a more direct route, utilizing/developing powerful concentration inequalities to directly show the error between the iterates and the structured signal decreases at each iteration.

A more recent line of research aims to provide a more general understanding of the geometric landscape of nonconvex optimization problems by showing that in many problems there are no spurious local minmizers and saddles points have favorable properties \cite{sun2016geometric, ge2016matrix, bhojanapalli2016global, boumal2016non}. A major advantage of such results is that they do not required specialized initializations in the sense that trust region-type algorithms or noisy stochastic methods are often guaranteed to converge from a random initialization and not just when an initial solution is available in a local neighborhood of the optimal solution. The disadvantage of such results is that the guaranteed rates of convergence of these approaches are either not linear/geometric or each iteration is very costly. These approaches also have slightly looser sample complexity bounds. Perhaps the most relevant result of this kind to this paper is the interesting work of Sun, Qu and Wright \cite{sun2016geometric} which studies the geometric landscape of the objective \eqref{optsec} in the absence of any regularizer. The authors also show that a certain trust region algorithm achieves a relative error of $\epsilon$ after $\mathcal{O}\left(n^7\log^7 n+\log \log \frac{1}{\epsilon}\right)$ as long as the number of samples exceeds $m\gtrsim n\log^3 n$. As mentioned previously, using different proof techniques in a companion paper we demonstrate a result of a similar flavor to \cite{sun2016geometric} for the constrained problem \eqref{optsec}. This results shows that with $m\gtrsim m_0\log n$ measurements all local optima are global and a second order scheme recovers the global optima (the unknown signal) in a polynomial number of iterations. 

We now pause to caution against erroneous miss-interpretations of the theoretical results discussed in the previous paragraph:
\begin{itemize}
\item There are no spurious local optima i.e.~all local optima are global in phase retrieval applications
\item Initialization is irrelevant in phase retrieval applications
\end{itemize}
The reason these conclusions are inaccurate are two-fold. First, while the results of the previous paragraph and Theorem \ref{Ithm} both require on the order of $m\gtrsim n\log n$ samples the multiplicative constants in these results tend to be drastically different in practice. Second, the measurement vectors occurring in practical domains are substantially more ill-conditioned than the Gaussian measurements studied in this paper. This further amplifies the gap between the sample complexity of local versus global results. Indeed, in many practical domains where phase retrieval is applied local optima are abound and a major source of algorithmic stagnation. Therefore, carefully crafted initialization schemes or regularization methods are crucial for the convergence of local search heuristics in many phase-less imaging domains.

We would also like to mention prior work on sparse phase retrieval. For generic measurements such as the Gaussian distribution studied in this paper, \cite{LiVoroninski2012} provides guarantees for the convex relaxation-based PhaseLift algorithm as long as the number of samples exceed $m\gtrsim s^2\log n$ where $s$ is the number of non-zeros in the sparse signal. The papers \cite{LiVoroninski2012, oymak2015simultaneously} showed that these results are essentially unimprovable when using simple SDP relaxations. More recently, interesting work by Cai, Li and Ma \cite{cai2015optimal} studies the performance of Wirtinger Flow based schemes for sparse phase retrieval problems. This result also requires $m\gtrsim s^2\log n$ measurements even when an initialization  obeying \eqref{initIPWF} is available. Therefore, this results also does not breakthrough the local $s^2$ barrier. More recently, there are a few publications aimed at going below $s^2$ measurements. These results differ from ours in that they are either applicable to specific designs which tailor the algorithm to the measurement process \cite{bahmani2015efficient, pedarsani2014phasecode, lee2016saffron, iwen2015fast, iwen2015robust} or require additional constraints on the coefficient of the sparse signal \cite{wang2016sparse, hand2016compressed, lee2013near}. In contrast to the above publications in this paper we have demonstrated that locally only $m\gtrsim s\log(n/s)$ samples suffice to recover \emph{any} $s$ sparse signal from generic quadratic measurements formally breaking through the $s^2$ barrier. Furthermore, our results applies to any regularizer (convex or nonconvex), allowing us to enforce various forms of prior knowledge in our reconstuction.

Finally, we would like to mention that there has also been some recent publications aimed at developing theoretical guarantees for more practical models. For instance, the papers \cite{candes2014phase, gross2014improved, salehi2017multiple, jaganathan2016reconstruction, jaganathan2016stft} develop theoretical guarantees for convex relaxation techniques for more realistic Fourier based models such as coded diffraction patterns and Ptychography. More recently, the papers \cite{iwen2016fast, iwen2016phase, jaganathan2013sparse, eldar2015sparse, bendory2016non} also develop some theoretical guarantees for faster but sometimes design-specific algorithms. Despite all of this interesting progress the known results for more realistic measurement models are far inferior to their Gaussian counterparts in terms of sample complexity, computational complexity or stability to noise. Closing these gaps is an interesting and important future direction.  

\section{Proofs}
In the Gaussian model the measurement vectors also obey $\twonorm{\vct{a}_r}\le \sqrt{6n}$ for all $r=1,2,\ldots,m$ with probability at least $1-me^{-1.5n}$. Thoughout the proofs, we assume we are on this event without explicitly mentioning it each time. Without loss of generality we will assume throughout the proofs that $\twonorm{\vct{x}}=1$. We remind the reader that throughout $\vct{x}$ is a solution to our quadratic equations, i.e.~obeys $\vct{y}=\abs{\mtx{A}\vct{x}}^2$ and that the sampling vectors are independent from $\vct{x}$. We also remind the reader that for a set $\mathcal{C}\subset\R^n$, $\omega(\mathcal{C})$ is the mean width of $\mathcal{C}$ per Definition \ref{Gausswidth}. Throughout, we use $\mathbb{S}^{n-1}/\mathcal{B}^n$ to denote the unit sphere/unit ball of $\R^n$. We first discuss some common background and results used for proving both theorems. Since the proof of the two theorems follow substantially different paths we dedicate a subsection to each: Section \ref{secIPWF} for proof of Theorem \ref{Ithm} and Section \ref{secAPWF} for proof of Theorem \ref{Athm}.
\subsection{Formulas for gradients, generalized gradients and their expected values}
As a reminder the intensity-based loss function is equal to
\begin{align*}
\mathcal{L}_I(\vct{z}):=\frac{1}{4m}\sum_{r=1}^m \left(\vct{y}_r-\abs{\vct{a}_r^*\vct{z}}^2\right)^2,
\end{align*}
and the gradient equal to
\begin{align*}
\nabla \mathcal{L}_I(\vct{z})=\frac{1}{m}\sum_{r=1}^m \left(\abs{\vct{a}_r^*\vct{z}}^2-\vct{y}_r\right)(\vct{a}_r^*\vct{z})\vct{a}_r.
\end{align*}
As a reminder the amplitude-based loss function is equal to
\begin{align*}
\mathcal{L}_A(\vct{z}):=\frac{1}{2m}\sum_{r=1}^m \left(\sqrt{\vct{y}_r}-\abs{\vct{a}_r^*\vct{z}}\right)^2,
\end{align*}
and the generalized gradient is equal to
\begin{align*}
\nabla \mathcal{L}_A(\vct{z})=\frac{1}{m}\sum_{r=1}^m\left(\abs{\vct{a}_r^*\vct{z}}-\sqrt{\vct{y}_r}\right)\sgn{\vct{a}_r^*\vct{z}}\vct{a}_r.
\end{align*}
\subsection{Concentration and bounds for stochastic processes}
In this section we gather some useful results on concentration of stochastic processes which will be crucial in our proofs. We begin with a lemma which is a direct consequence of Gordon's escape from the mesh lemma \cite{Gor} whose proof is deferred to Appendix \ref{pfgordontype}. 
\begin{lemma}\label{gordontype}Assume $\mathcal{C}\subset\R^n$ is a cone and $\mathbb{S}^{n-1}$ is the unit sphere of $\R^n$. Also assume that
\begin{align*}
m\ge \max\left(20\frac{\omega^2(\mathcal{C}\cap\mathbb{S}^{n-1})}{\delta^2},\frac{1}{2\delta}-1\right),
\end{align*}
for a fixed numerical constant $c$. Then for all $\vct{h}\in\mathcal{C}$
\begin{align*}
\abs{\frac{1}{m}\sum_{r=1}^m(\vct{a}_r^*\vct{h})^2-\twonorm{\vct{h}}^2}\le \delta\twonorm{\vct{h}}^2,
\end{align*}
holds with probability at least $1-2e^{-\frac{\delta^2}{360}m}$. 
\end{lemma}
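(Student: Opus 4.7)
The plan is to deduce the bound from Gordon's minimax/escape theorem applied to $\mathcal{S}:=\mathcal{C}\cap\mathbb{S}^{n-1}$ and then pass from a linear deviation bound on $\|\mtx{A}\vct{h}\|_{\ell_2}$ to a quadratic one on $\|\mtx{A}\vct{h}\|_{\ell_2}^2/m$ by elementary algebra.

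\textbf{Step 1 (homogeneity).} Both sides of the desired inequality scale like $\twonorm{\vct{h}}^2$ and $\mathcal{C}$ is a cone, so it suffices to prove $\abs{\|\mtx{A}\vct{h}\|_{\ell_2}^2/m - 1}\le \delta$ uniformly for $\vct{h}\in\mathcal{S}$. Writing $b_m:=\mathbb{E}_{\vct{g}\sim\mathcal{N}(\vct{0},\mtx{I}_m)}\twonorm{\vct{g}}$, recall $\sqrt{m-1}\le b_m\le \sqrt{m}$.

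\textbf{Step 2 (Gordon, two-sided).} Gordon's Gaussian comparison theorem gives
\begin{align*}
\mathbb{E}\inf_{\vct{h}\in\mathcal{S}}\twonorm{\mtx{A}\vct{h}}\ge b_m-\omega(\mathcal{S}),\qquad \mathbb{E}\sup_{\vct{h}\in\mathcal{S}}\twonorm{\mtx{A}\vct{h}}\le b_m+\omega(\mathcal{S}).
\end{align*}
Both the infimum and supremum above are $1$-Lipschitz in $\mtx{A}$ (viewed as a length-$mn$ Gaussian vector with respect to the Frobenius norm). By Gaussian concentration, with probability at least $1-2e^{-t^2/2}$,
\begin{align*}
b_m-\omega(\mathcal{S})-t\;\le\;\twonorm{\mtx{A}\vct{h}}\;\le\;b_m+\omega(\mathcal{S})+t\qquad\forall\,\vct{h}\in\mathcal{S}.
\end{align*}

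\textbf{Step 3 (square and absorb).} Dividing by $\sqrt{m}$ and setting $\alpha:=(\omega(\mathcal{S})+t)/\sqrt{m}$, the bounds $b_m\le\sqrt{m}$ and $b_m\ge\sqrt{m-1}$ yield, after squaring,
\begin{align*}
-\,\frac{1}{m}-2\alpha\;\le\;\frac{\twonorm{\mtx{A}\vct{h}}^2}{m}-1\;\le\;2\alpha+\alpha^2.
\end{align*}
Now pick $t=\delta\sqrt{m/180}$, so the failure probability becomes $2e^{-\delta^2 m/360}$ as stated. The hypothesis $m\ge 20\,\omega(\mathcal{S})^2/\delta^2$ forces $\omega(\mathcal{S})/\sqrt{m}\le \delta/\sqrt{20}$, and hence $\alpha\le c\,\delta$ for an explicit $c<1/2$, which makes $2\alpha+\alpha^2\le \delta$. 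The remaining $1/m$ contribution on the lower side is absorbed by the second hypothesis $m\ge 1/(2\delta)-1$, completing the uniform bound.

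\textbf{Where the difficulty lies.} The conceptual content is standard---Gordon plus Gaussian concentration---and the only real work is tuning the numerical constants so that the explicit pair of conditions on $m$ exactly delivers the stated $\delta$ rather than some larger multiple of it. In particular, one must be careful with the gap $\sqrt{m}-b_m$ (accounting for the $1/m$ correction and motivating the second hypothesis) and with splitting the slack $\delta$ between the "width" contribution $\omega(\mathcal{S})/\sqrt{m}$ and the "concentration" contribution $t/\sqrt{m}$ so that their sum $\alpha$ satisfies $2\alpha+\alpha^2\le \delta$. This is routine but fiddly bookkeeping rather than a substantive obstacle.
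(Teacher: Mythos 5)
Your proof follows essentially the same route as the paper's Appendix A.1: Gordon's escape-through-the-mesh (Gaussian comparison plus Lipschitz concentration) on $\mathcal{C}\cap\mathbb{S}^{n-1}$, then pass from the linear deviation of $\twonorm{\mtx{A}\vct{h}}$ to the squared quantity, using $\sqrt{m-1}\le b_m\le\sqrt{m}$ and the second hypothesis on $m$ to absorb the $O(1/m)$ slack; even your choice $t=\delta\sqrt{m/180}$ matches the paper's $\eta=\frac{\delta}{6\sqrt5}\sqrt m$. One small caveat, which the paper itself shares: with the stated $t$ one gets $2\alpha\approx 0.6\delta$, so absorbing the residual $1/m$ actually requires something like $m\gtrsim 5/(2\delta)$, whereas the lemma states only $m\ge \frac{1}{2\delta}-1$; the paper's own proof in fact derives the stronger condition $m\ge \frac{3}{2\delta}-1$, so this is a constant-level inconsistency in the paper rather than a gap in your argument.
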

We also need a generalization of the above lemma stated below and proved in Appendix \ref{appendix2}.
\begin{lemma}\label{gordontypenonsym}Assume $\mathcal{C}\subset\R^n$ is a cone (not necessarily convex) and $\mathbb{S}^{n-1}$ is the unit sphere of $\R^n$. Also assume that
\begin{align*}
m\ge \max\left(80\frac{\omega^2(\mathcal{C}\cap\mathbb{S}^{n-1})}{\delta^2},\frac{2}{\delta}-1\right),
\end{align*}
for a fixed numerical constant $c$. Then for all $\vct{u},\vct{h}\in\mathcal{C}$
\begin{align*}
\abs{\frac{1}{m}\sum_{r=1}^m(\vct{a}_r^*\vct{u})(\vct{a}_r^*\vct{h})-\vct{u}^*\vct{h}}\le \delta\twonorm{\vct{u}}\twonorm{\vct{h}},
\end{align*}
holds with probability at least $1-6e^{-\frac{\delta^2}{1440}m}$. 
\end{lemma}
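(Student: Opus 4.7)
The plan is to reduce the bilinear concentration inequality to the symmetric quadratic statement of Lemma \ref{gordontype} via the polarization identity, then apply that lemma on an appropriately enlarged cone. By positive homogeneity of both sides of the claimed inequality in each of $\vct{u}$ and $\vct{h}$ separately, it suffices to prove the bound under the normalization $\twonorm{\vct{u}}=\twonorm{\vct{h}}=1$, where the right-hand side is simply $\delta$.

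Starting from the algebraic identities
\[
4(\vct{a}_r^*\vct{u})(\vct{a}_r^*\vct{h}) = (\vct{a}_r^*(\vct{u}+\vct{h}))^2 - (\vct{a}_r^*(\vct{u}-\vct{h}))^2,\qquad 4\,\vct{u}^*\vct{h}=\twonorm{\vct{u}+\vct{h}}^2-\twonorm{\vct{u}-\vct{h}}^2,
\]
subtracting termwise, averaging over $r=1,\ldots,m$, and applying the triangle inequality yields
\[
4\left|\frac{1}{m}\sum_{r=1}^m(\vct{a}_r^*\vct{u})(\vct{a}_r^*\vct{h})-\vct{u}^*\vct{h}\right|\le\sum_{\epsilon\in\{+1,-1\}}\left|\frac{1}{m}\sum_{r=1}^m(\vct{a}_r^*(\vct{u}+\epsilon\vct{h}))^2-\twonorm{\vct{u}+\epsilon\vct{h}}^2\right|.
\]
Hence it suffices to control each symmetric deviation on the right uniformly over $\vct{u},\vct{h}\in\mathcal{C}\cap\mathbb{S}^{n-1}$.

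Next, let $\tilde{\mathcal{C}}$ denote the cone generated by all vectors of the form $\vct{u}\pm\vct{h}$ with $\vct{u},\vct{h}\in\mathcal{C}$, and invoke Lemma \ref{gordontype} directly on $\tilde{\mathcal{C}}$ at distortion $\delta$. This gives, simultaneously for every $\vct{v}\in\tilde{\mathcal{C}}$,
\[
\left|\frac{1}{m}\sum_{r=1}^m(\vct{a}_r^*\vct{v})^2-\twonorm{\vct{v}}^2\right|\le\delta\twonorm{\vct{v}}^2.
\]
Specializing to $\vct{v}=\vct{u}+\vct{h}$ and $\vct{v}=\vct{u}-\vct{h}$ and summing, the parallelogram identity $\twonorm{\vct{u}+\vct{h}}^2+\twonorm{\vct{u}-\vct{h}}^2 = 2\twonorm{\vct{u}}^2+2\twonorm{\vct{h}}^2 = 4$ bounds the right-hand side of the polarized inequality by $4\delta$, so that $|m^{-1}\sum_r(\vct{a}_r^*\vct{u})(\vct{a}_r^*\vct{h})-\vct{u}^*\vct{h}|\le\delta$, which yields the claim after undoing the normalization.

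The delicate point, and the step I expect to be the main obstacle, is establishing the companion Gaussian-width bound $\omega(\tilde{\mathcal{C}}\cap\mathbb{S}^{n-1})\le 2\,\omega(\mathcal{C}\cap\mathbb{S}^{n-1})$. Because $\mathcal{C}$ is only assumed to be a cone and may be non-convex, the Minkowski-type enlargement $\tilde{\mathcal{C}}$ can be strictly larger than $\mathcal{C}\cup(-\mathcal{C})$, so one must verify that the mean width does not deteriorate by more than a constant factor. The natural argument parameterizes any unit $\vct{v}\in\tilde{\mathcal{C}}$ as a normalized combination of two elements of $\mathcal{C}\cap\mathbb{S}^{n-1}$ and splits the Gaussian linear functional $\langle\vct{g},\vct{v}\rangle$ additively, bounding each summand by $\sup_{\vct{w}\in\mathcal{C}\cap\mathbb{S}^{n-1}}\langle\vct{g},\vct{w}\rangle$. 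The resulting factor of two accounts for the fourfold increase in the sample-size hypothesis ($80\,\omega^2/\delta^2$ here versus $20\,\omega^2/\delta^2$ in Lemma \ref{gordontype}) and in the tail-exponent constant ($1/1440$ versus $1/360$); the factor of $6$ in the stated failure probability then comes from a union bound over the two signs $\epsilon\in\{+,-\}$ together with any auxiliary Gordon-type event needed to carry through the width estimate.
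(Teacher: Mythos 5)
Your polarization reduction is algebraically sound and close in spirit to the paper's own argument: the paper also polarizes, writing $(\vct{a}_r^*a)(\vct{a}_r^*b)$ in terms of $\left(\vct{a}_r^*\tfrac{a+b}{2}\right)^2$, $(\vct{a}_r^*a)^2$, and $(\vct{a}_r^*b)^2$, rather than your $(\vct{a}_r^*(a+b))^2$ and $(\vct{a}_r^*(a-b))^2$; both identities work, and your parallelogram-law arithmetic checks out. However, the step you correctly single out as the main obstacle is not merely delicate — the claimed bound $\omega(\tilde{\mathcal{C}}\cap\mathbb{S}^{n-1})\le 2\,\omega(\mathcal{C}\cap\mathbb{S}^{n-1})$ is \emph{false} in general. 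When $\vct{u}$ and $\vct{h}$ are nearly parallel unit vectors in $\mathcal{C}$, the normalized difference $\tfrac{\vct{u}-\vct{h}}{\twonorm{\vct{u}-\vct{h}}}$ can point in essentially any direction orthogonal to $\vct{u}$, so the conic hull of $\{\vct{u}\pm\vct{h}\}$ intersected with the sphere can have much larger Gaussian width than $\mathcal{C}\cap\mathbb{S}^{n-1}$. Concretely, take $\mathcal{C}$ to be the union of the rays through $\vct{e}_1$ and through $\vct{e}_1+\epsilon\vct{e}_k$ for $k=2,\dots,n$: one computes $\omega(\mathcal{C}\cap\mathbb{S}^{n-1})=O(\epsilon\sqrt{\log n})$, yet the normalized differences include essentially every $\pm\vct{e}_k$, so $\omega(\tilde{\mathcal{C}}\cap\mathbb{S}^{n-1})=\Omega(\sqrt{\log n})$ independent of $\epsilon$. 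The "split the linear functional additively and bound each summand by $\sup_{\vct{w}\in\mathcal{C}\cap\mathbb{S}^{n-1}}\langle\vct{g},\vct{w}\rangle$" heuristic fails for exactly this reason: after normalization, each summand carries the factor $1/\twonorm{\vct{u}\pm\vct{h}}$, which can be arbitrarily large.

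The repair — and this is precisely what the paper does — is to avoid projecting to the sphere. Apply the restricted-isometry bound directly to the \emph{bounded} set $\mathcal{T}=\big\{\tfrac{1}{2}(\vct{u}+\vct{h}):\vct{u},\vct{h}\in\mathcal{C}\cap\mathbb{S}^{n-1}\big\}\subset\mathcal{B}^n$ (equivalently, to $\{\vct{u}\pm\vct{h}\}\subset 2\mathcal{B}^n$), in its absolute form
\begin{align*}
\Big|\tfrac{1}{m}\textstyle\sum_r(\vct{a}_r^*\vct{v})^2-\twonorm{\vct{v}}^2\Big|\le \tfrac{\delta}{4}\quad\text{for all }\vct{v}\in\mathcal{T}.
\end{align*}
The Gaussian width of $\mathcal{T}$ is then controlled exactly by additivity of width over Minkowski sums: $\omega\big(\tfrac{1}{2}A+\tfrac{1}{2}A\big)=\omega(A)$, so no normalization factor can blow up. The machinery that justifies applying the concentration bound to a bounded (non-conic, non-spherical) index set is the generalized Gordon lemma proved in the appendix (Lemma \ref{GTtypelem} with $\mtx{D}=\mtx{I}$ and its Corollary), which is stated for arbitrary $\mathcal{T}\subset\R^n$ rather than only for cones intersected with the sphere. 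If you rewrite your argument to bound the deviations over $\{\vct{u}\pm\vct{h}\}\subset 2\mathcal{B}^n$ directly, without ever normalizing to the sphere, the width bound $\omega(\{\vct{u}\pm\vct{h}\})\le 2\,\omega(\mathcal{C}\cap\mathbb{S}^{n-1})$ becomes a one-line consequence of Minkowski-sum additivity and your proof goes through.
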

We next state a generalization of Gordon's escape through the mesh lemma, whose proof appears in Appendix \ref{pfgordontype2}.
\begin{lemma}\label{GTtypelem} Let $\vct{d}\in\R^n$ be fixed vector with nonzero entries and construct the diagonal matrix $\mtx{D}=\text{diag}(\vct{d})$. Also, let $\mtx{A}\in\R^{m\times n}$ have i.i.d.~$\mathcal{N}(0,1)$ entries. Furthermore, assume $\mathcal{T}\subset\R^n$ and define
\begin{align*}
b_m(\vct{d})=\E[\twonorm{\mtx{D}\vct{g}}],
\end{align*}
where $\vct{g}\in\R^m$ is distributed as $\mathcal{N}(\vct{0},\mtx{I}_m)$. Define
\begin{align*}
\sigma(\mathcal{T}):=\underset{\vct{v}\in\mathcal{T}}{\max}\text{ }\twonorm{\vct{v}},
\end{align*}
then for all $\vct{u}\in\mathcal{T}$ 
\begin{align*}
\abs{\twonorm{\mtx{D}\mtx{A}\vct{u}}-b_m(\vct{d})\twonorm{\vct{u}}}\le \infnorm{\vct{d}}\omega(\mathcal{T})+\eta,
\end{align*}
holds with probability at least 
\begin{align*}
1-6e^{-\frac{\eta^2}{8\infnorm{\vct{d}}^2\sigma^2(\mathcal{T})}}.
\end{align*}
\end{lemma}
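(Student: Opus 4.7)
The statement is a uniform deviation bound for the Gaussian process $\vct{u}\mapsto \twonorm{\mtx{D}\mtx{A}\vct{u}}$ around its mean $b_m(\vct{d})\twonorm{\vct{u}}$, over an arbitrary index set $\mathcal{T}$. My plan is to combine the classical Gaussian Lipschitz concentration with a Sudakov--Fernique / Gordon-type comparison of Gaussian processes. First, view
\begin{align*}
F(\mtx{A}) := \sup_{\vct{u}\in\mathcal{T}}\bigl(\twonorm{\mtx{D}\mtx{A}\vct{u}} - b_m(\vct{d})\twonorm{\vct{u}}\bigr), \qquad G(\mtx{A}) := \sup_{\vct{u}\in\mathcal{T}}\bigl(b_m(\vct{d})\twonorm{\vct{u}} - \twonorm{\mtx{D}\mtx{A}\vct{u}}\bigr),
\end{align*}
as functions of $\mtx{A}\in\R^{m\times n}$. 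Using the chain
$\bigl|\twonorm{\mtx{D}\mtx{A}\vct{u}}-\twonorm{\mtx{D}\mtx{A}'\vct{u}}\bigr|\le \twonorm{\mtx{D}(\mtx{A}-\mtx{A}')\vct{u}}\le \infnorm{\vct{d}}\twonorm{\vct{u}}\fronorm{\mtx{A}-\mtx{A}'}$
and taking the supremum over $\vct{u}\in\mathcal{T}$, both $F$ and $G$ are $L$-Lipschitz in Frobenius norm with $L=\infnorm{\vct{d}}\sigma(\mathcal{T})$.

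Next, I would bound $\E F(\mtx{A})$ (and $\E G(\mtx{A})$) by $\infnorm{\vct{d}}\omega(\mathcal{T})$. Writing $\twonorm{\mtx{D}\mtx{A}\vct{u}}=\sup_{\vct{w}\in\mathbb{S}^{m-1}}\vct{w}^\top \mtx{D}\mtx{A}\vct{u}$, consider the primal Gaussian process $X_{\vct{u},\vct{w}}:=\vct{w}^\top \mtx{D}\mtx{A}\vct{u}$ on $\mathcal{T}\times\mathbb{S}^{m-1}$ and the decoupled auxiliary process
\begin{align*}
Y_{\vct{u},\vct{w}} := \twonorm{\vct{u}}\langle \vct{h},\mtx{D}\vct{w}\rangle + \twonorm{\mtx{D}\vct{w}}\langle \vct{g},\vct{u}\rangle,
\end{align*}
with independent $\vct{h}\sim\mathcal{N}(\vct{0},\mtx{I}_m)$ and $\vct{g}\sim\mathcal{N}(\vct{0},\mtx{I}_n)$. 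A direct increment computation, together with the elementary fact that $(a-b)^2\ge 0$ and $\twonorm{\vct{u}}\twonorm{\vct{u}'}\ge \langle\vct{u},\vct{u}'\rangle$, yields $\E(Y_{\vct{u},\vct{w}}-Y_{\vct{u}',\vct{w}'})^2 \ge \E(X_{\vct{u},\vct{w}}-X_{\vct{u}',\vct{w}'})^2$, so Sudakov--Fernique gives $\E\sup X \le \E\sup Y$. Since $\twonorm{\mtx{D}\vct{w}}\le \infnorm{\vct{d}}$ on $\mathbb{S}^{m-1}$, taking supremum in $\vct{w}$ splits the right side into a deterministic ``scale'' piece controlled by $b_m(\vct{d})\twonorm{\vct{u}}$ and a fluctuation piece controlled by $\infnorm{\vct{d}}\langle\vct{g},\vct{u}\rangle$. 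Running the comparison on \emph{increments} relative to a fixed reference point $\vct{u}_0\in\mathcal{T}$ lets the deterministic shift $b_m(\vct{d})\twonorm{\vct{u}}$ cancel from both sides, producing the desired $\E F(\mtx{A}) \le \infnorm{\vct{d}}\omega(\mathcal{T})$; the same argument applied to $-X$ handles $\E G(\mtx{A})$.

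Finally, Gaussian concentration of Lipschitz functions yields $\P\bigl(F(\mtx{A})\ge \E F(\mtx{A})+\eta\bigr)\le \exp\bigl(-\eta^2/(2\infnorm{\vct{d}}^2\sigma^2(\mathcal{T}))\bigr)$, and similarly for $G$. A union bound combines the two one-sided bounds, and absorbing a factor of $4$ into the Gaussian exponent (to comfortably cover the slack between $\E F$, $\E G$ and $\infnorm{\vct{d}}\omega(\mathcal{T})$) delivers the two-sided inequality with probability at least $1-6\exp\bigl(-\eta^2/(8\infnorm{\vct{d}}^2\sigma^2(\mathcal{T}))\bigr)$. The main obstacle is Step 2: a naive Sudakov--Fernique bound would leave an unwanted $b_m(\vct{d})\sigma(\mathcal{T})$ term in $\E F$, and removing it requires the increment-based centering described above, which is precisely the mechanism by which $\sigma(\mathcal{T})$ remains in the Lipschitz constant (and hence in the concentration exponent) without contaminating the deterministic $\infnorm{\vct{d}}\omega(\mathcal{T})$ bias term. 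Once this centering is in place, the remaining steps are routine applications of Gaussian Lipschitz concentration.
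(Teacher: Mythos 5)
The proposal takes a genuinely different route from the paper. The paper proves the lemma with the classical Gordon ``escape through the mesh'' machinery: it introduces the bilinear process $X_{\vct{u},\vct{v}}=\vct{v}^*\mtx{D}\mtx{A}\vct{u}$ and the decoupled process $Y_{\vct{u},\vct{v}}$, verifies the increment comparison, and then invokes Slepian's \emph{tail-probability} inequality with the $\vct{u}$-dependent thresholds $\lambda_{\vct{u},\vct{v}}=b_m(\vct{d})\twonorm{\vct{u}}+\infnorm{\vct{d}}\omega(\mathcal{T})+\eta$ to transfer a two-event union bound (concentration of $\twonorm{\mtx{D}\vct{a}}$ about $b_m(\vct{d})$ and of $\sup_{\vct{u}}\vct{g}^*\vct{u}$ about $\omega(\mathcal{T})$) from $Y$ to $X$; the two-sided bound requires, separately, a Gordon min-max lemma (\cite[Lemma 5.1]{OymLAS}) for the lower tail. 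Your plan is the modern ``matrix-deviation-inequality'' template: show that $F(\mtx{A})=\sup_{\vct{u}}[\twonorm{\mtx{D}\mtx{A}\vct{u}}-b_m(\vct{d})\twonorm{\vct{u}}]$ has a small mean and is Lipschitz in $\mtx{A}$, and conclude by Gaussian concentration. The Lipschitz computation and the idea of appealing to Gaussian comparison are sound, and this template does appear in the literature for related statements.

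There is, however, a genuine gap in Step~2, the bound $\E F(\mtx{A})\le\infnorm{\vct{d}}\omega(\mathcal{T})$. Sudakov--Fernique applied to $(X_{\vct{u},\vct{w}},Y_{\vct{u},\vct{w}})$ gives $\E\sup X\le\E\sup Y$, which controls $\E\sup_{\vct{u}}\twonorm{\mtx{D}\mtx{A}\vct{u}}$ but says nothing about the \emph{shifted} supremum that defines $F$, because $\sup_{\vct{u}}[a(\vct{u})-c(\vct{u})]$ does not decompose into $\sup a -\sup c$. You acknowledge the unwanted $b_m(\vct{d})\sigma(\mathcal{T})$ term, but the remedy you gesture at---``running the comparison on increments relative to a fixed reference point $\vct{u}_0$''---is not a mechanism that makes the shift disappear: recentering every $X_{\vct{u},\vct{w}}$ at $X_{\vct{u}_0,\vct{w}_0}$ subtracts a \emph{random} constant, which changes neither $\sup X$ nor its expectation, and the $\vct{u}$-dependent deterministic term $b_m(\vct{d})\twonorm{\vct{u}}$ survives intact. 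If instead one applies the non-centered Sudakov--Fernique (equal means $-b_m(\vct{d})\twonorm{\vct{u}}$ on both sides), the resulting auxiliary expectation $\E\sup_{\vct{u},\vct{w}}[Y_{\vct{u},\vct{w}}-b_m(\vct{d})\twonorm{\vct{u}}]$ still contains a fluctuation term on the order of $\sigma(\mathcal{T})\cdot\E\bigl(\twonorm{\mtx{D}\vct{h}}-b_m(\vct{d})\bigr)_+$, which is comparable to $\infnorm{\vct{d}}\sigma(\mathcal{T})$ rather than to $\infnorm{\vct{d}}\omega(\mathcal{T})$; this multiplicative slack cannot be ``absorbed into the Gaussian exponent'' by weakening the constant from $2$ to $8$, since the exponent controls the additive $\eta$ and not a multiplicative excess in the bias term. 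The paper avoids this entirely: the tail-probability comparison with $\vct{u}$-dependent thresholds handles the shift natively, and the residual fluctuation of $\twonorm{\mtx{D}\vct{a}}$ is charged to the $\eta$ slack, not to the $\omega(\mathcal{T})$ coefficient. Finally, you also understate the lower tail: $G(\mtx{A})=\sup_{\vct{u}}[b_m(\vct{d})\twonorm{\vct{u}}-\twonorm{\mtx{D}\mtx{A}\vct{u}}]$ is a $\sup$--$\inf$ over $(\vct{u},\vct{w})$, so ``the same argument applied to $-X$'' is not available; you need Gordon's min-max comparison (as the paper uses), not a sign-flipped Slepian/Sudakov--Fernique.
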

The previous lemma leads to the following Corollary. We skip the proof as it is identical to how Lemma \ref{gordontype} is derived from Gordon's lemma (See Section \ref{pfgordontype} for details).
\begin{corollary}\label{coroG}
Let $\vct{d}\in\R^n$ be fixed vector with nonzero entries and assume $\mathcal{T}\subset\mathcal{B}^n$. Furthermore, assume
\begin{align*}
\left(\sum_{r=1}^m d_r^2\right)\ge \max\left(20\infnorm{\vct{d}}^2\frac{\omega^2(\mathcal{T})}{\delta^2},\frac{3}{2\delta}-1\right).
\end{align*}
Then for all $\vct{u}\in\mathcal{T}$, 
\begin{align*}
\abs{\frac{\sum_{r=1}^md_r^2(\vct{a}_r^*\vct{u})^2}{\sum_{r=1}^m d_r^2}-\twonorm{\vct{u}}^2}\le \delta,
\end{align*}
holds with probability at least $1-6e^{-\frac{\delta^2}{1440}\left(\sum_{r=1}^m d_r^2\right)}$.
\end{corollary}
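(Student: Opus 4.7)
The plan is to derive the quadratic concentration by squaring the uniform linear-in-norm concentration of Lemma \ref{GTtypelem}, mirroring how Lemma \ref{gordontype} is derived from Gordon's escape through the mesh inequality. Writing $\mtx{D}:=\text{diag}(\vct{d})$ and $S:=\sum_r d_r^2$, the identity $\sum_r d_r^2(\vct{a}_r^*\vct{u})^2=\twonorm{\mtx{D}\mtx{A}\vct{u}}^2$ reduces the target to $|\twonorm{\mtx{D}\mtx{A}\vct{u}}^2-S\twonorm{\vct{u}}^2|\le\delta S$ uniformly on $\mathcal{T}$.

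First I would invoke Lemma \ref{GTtypelem} on $\mathcal{T}$. Since $\mathcal{T}\subset\mathcal{B}^n$ we have $\sigma(\mathcal{T})\le 1$, and I would pick $\eta$ as a constant multiple of $\delta\sqrt{S}$ tuned so that the exponent $\eta^2/(8\infnorm{\vct{d}}^2\sigma^2(\mathcal{T}))$ dominates $\delta^2 S/1440$, matching the advertised failure probability. The hypothesis $S\ge 20\infnorm{\vct{d}}^2\omega^2(\mathcal{T})/\delta^2$ ensures the Gaussian-width term is absorbed: $\infnorm{\vct{d}}\omega(\mathcal{T})\le \delta\sqrt{S}/\sqrt{20}$. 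Hence both error pieces in Lemma \ref{GTtypelem} sit at $O(\delta\sqrt{S})$, giving uniformly $|\twonorm{\mtx{D}\mtx{A}\vct{u}}-b_m(\vct{d})\twonorm{\vct{u}}|\le c_1\delta\sqrt{S}$ for a small absolute constant $c_1$.

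Next I would replace $b_m(\vct{d})$ by $\sqrt{S}$. By Jensen, $b_m(\vct{d})\le \sqrt{\E\twonorm{\mtx{D}\vct{g}}^2}=\sqrt{S}$; and by the Gaussian Poincaré inequality applied to the $\infnorm{\vct{d}}$-Lipschitz map $\vct{g}\mapsto\twonorm{\mtx{D}\vct{g}}$, $\mathrm{Var}(\twonorm{\mtx{D}\vct{g}})\le\infnorm{\vct{d}}^2$, so $b_m(\vct{d})\ge\sqrt{S-\infnorm{\vct{d}}^2}$. Combined with the previous bound and using $\twonorm{\vct{u}}\le 1$, I obtain $|\twonorm{\mtx{D}\mtx{A}\vct{u}}-\sqrt{S}\twonorm{\vct{u}}|\le c_0\delta\sqrt{S}$ for a small absolute constant $c_0$. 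To pass to squares I would apply the factorization $|a^2-b^2|=|a-b|(a+b)$ with $a=\twonorm{\mtx{D}\mtx{A}\vct{u}}$ and $b=\sqrt{S}\twonorm{\vct{u}}$; since $a+b\le (2+c_0\delta)\sqrt{S}$, the resulting bound is $|a^2-b^2|\le \delta S$ provided $c_0$ is chosen small enough (which is arranged by tightening the absolute constants above). Dividing by $S$ delivers the claim.

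The main obstacle is bookkeeping: three small error pieces---(i) the $\infnorm{\vct{d}}\omega(\mathcal{T})$ Gaussian-width contribution, (ii) the concentration error $\eta$ from Lemma \ref{GTtypelem}, and (iii) the bias $|b_m(\vct{d})-\sqrt{S}|$ together with the quadratic remainder $(a-b)^2$ picked up by squaring---must simultaneously sit below a common constant multiple of $\delta\sqrt{S}$, and the multiplicative loss $a+b\approx 2\sqrt{S}$ must still leave the final bound at $\delta S$. The two explicit hypotheses $S\ge 20\infnorm{\vct{d}}^2\omega^2(\mathcal{T})/\delta^2$ and $S\ge 3/(2\delta)-1$ are exactly calibrated to absorb (i)-(iii) with the stated numerical constants $20$ and $1440$.
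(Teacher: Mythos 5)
Your high-level plan matches the paper's: derive the quadratic statement by squaring the uniform linear concentration in Lemma \ref{GTtypelem}, mirroring the derivation of Lemma \ref{gordontype} from Gordon's inequality. Your variant factors $|a^2-b^2|=|a-b|(a+b)$ directly against $b=\sqrt{S}\twonorm{\vct{u}}$ (with $S=\sum_r d_r^2$), whereas the template in Section \ref{pfgordontype} first normalizes by $b_m$, squares, and then swaps $b_m^2$ for $m$ via $m-\tfrac12\le b_m^2\le m$; these are cosmetically different but substantively the same manipulation.

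The gap is in the closing bookkeeping, specifically the assertion that the two stated hypotheses ``are exactly calibrated to absorb (i)--(iii).'' Two of your three error pieces carry a hidden $\infnorm{\vct{d}}$ scaling that the hypotheses do not control. First, the tail parameter: to match the advertised exponent $\delta^2 S/1440$ against Lemma \ref{GTtypelem}'s $\eta^2/(8\infnorm{\vct{d}}^2\sigma^2(\mathcal{T}))$ (with $\sigma(\mathcal{T})\le1$), you must take $\eta\approx\infnorm{\vct{d}}\,\delta\sqrt{S}/\sqrt{180}$, which is not a universal-constant multiple of $\delta\sqrt{S}$; absorbing this into an $O(\delta\sqrt{S})$ linear error requires $\infnorm{\vct{d}}\lesssim 1$. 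Second, the bias: Poincar\'e gives $\sqrt{S}-b_m(\vct{d})\le\infnorm{\vct{d}}^2/\sqrt{S}$, and this is $\le c\,\delta\sqrt{S}$ only under $\infnorm{\vct{d}}^2\le c\,\delta S$. The hypothesis $S\ge\tfrac{3}{2\delta}-1$ constrains $S$ alone, with no $\infnorm{\vct{d}}$, and so does not give either of these; a scalar $\mtx{D}=M\mtx{I}$ with $M$ large satisfies both stated hypotheses yet makes both terms blow up. What the argument actually needs is the dimensionally consistent condition $S/\infnorm{\vct{d}}^2\ge\tfrac{3}{2\delta}-1$ (i.e., a lower bound on the ``effective sample size'' $S/\infnorm{\vct{d}}^2$, mirroring $m\ge\tfrac{1}{2\delta}-1$ in Lemma \ref{gordontype}). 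This is very plausibly what the paper intends --- the printed condition looks like a dropped $\infnorm{\vct{d}}^2$ --- but your write-up should have surfaced this rather than claiming the stated constants close the argument; as written, the final assertion is unverified and, for the corollary exactly as printed, false.
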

The above generalization of Gordon's lemma together with its corollary will be very useful in our proofs in particular it allows us to prove the following key result whose proof is also deferred to Appendix \ref{pfgordontype3}.
\begin{lemma}\label{GordonExtra}
Assume $\mathcal{C}\subset\R^n$ is a cone and $\mathbb{S}^{n-1}$ is the unit sphere of $\R^n$. Furthermore, let $\vct{x}\in\R^n$ be a fixed vector. Also assume that
\begin{align*}
m\ge 1600\cdot\max\left(\frac{\omega^2(\mathcal{C}\cap \mathbb{S}^{n-1})}{\delta^2}\log n,\frac{1}{\delta^2}\right).
\end{align*}
Then for all $\vct{h}\in\mathcal{C}$
\begin{align*}
\frac{1}{m}\sum_{r=1}^m(\vct{a}_r^*\vct{h})^2(\vct{a}_r^*\vct{x})^2-\left(\twonorm{\vct{h}}^2\twonorm{\vct{x}}^2+2(\vct{h}^*\vct{x})^2\right)\le \delta\twonorm{\vct{h}}^2\twonorm{\vct{x}}^2,
\end{align*}
holds with probability at least $1-2/m-1/n-e^{\gamma_1 m}-7e^{-\gamma_2\delta^2m}$ with $\gamma_1$ and $\gamma_2$ fixed numerical constants.
\end{lemma}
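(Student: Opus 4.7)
Without loss of generality assume $\twonorm{\vct{x}}=1$. The plan is to exploit the orthogonal decomposition $\vct{a}_r = d_r\vct{x} + \vct{b}_r$, where $d_r:=\vct{a}_r^*\vct{x}\sim\mathcal{N}(0,1)$ and $\vct{b}_r\sim\mathcal{N}(\vct{0},\mtx{I}_n-\vct{x}\vct{x}^*)$ is the part of $\vct{a}_r$ orthogonal to $\vct{x}$; by Gaussian rotational invariance $d_r$ and $\vct{b}_r$ are independent. Writing any test direction $\vct{h}\in\mathcal{C}$ as $\vct{h}=\alpha\vct{x}+\vct{h}^{\perp}$ with $\alpha=\vct{h}^*\vct{x}$ and $\vct{h}^{\perp}\in\vct{x}^{\perp}$, I expand
\begin{align*}
\frac{1}{m}\sum_{r=1}^m(\vct{a}_r^*\vct{h})^2 d_r^2 \;=\; \alpha^2 S_1 \;+\; 2\alpha\, S_2(\vct{h}^{\perp}) \;+\; S_3(\vct{h}^{\perp}),
\end{align*}
with $S_1:=\tfrac{1}{m}\sum_r d_r^4$, $S_2(\vct{h}^{\perp}):=\tfrac{1}{m}\sum_r d_r^3\,\vct{b}_r^*\vct{h}^{\perp}$, and $S_3(\vct{h}^{\perp}):=\tfrac{1}{m}\sum_r d_r^2(\vct{b}_r^*\vct{h}^{\perp})^2$. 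Since $\E[d_r^4]=3$ and $\vct{b}_r$ is independent of $d_r$, the target $\twonorm{\vct{h}}^2+2\alpha^2=3\alpha^2+\twonorm{\vct{h}^{\perp}}^2$ equals exactly $\E[S_1]\alpha^2+\E[S_3(\vct{h}^{\perp})]$, while the cross term has zero mean.

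I then bound the three pieces separately and uniformly over $\vct{h}\in\mathcal{C}$. For $S_1$, subexponential concentration for the average of $m$ independent fourth powers of standard Gaussians gives $|S_1-3|\le\delta/3$ with probability at least $1-2/m$ for $m$ moderately large. For $S_3$, I condition on $\vct{d}$ and apply Corollary \ref{coroG} with weights $d_r$ and with the Gaussian matrix $\mtx{B}$ (rows $\vct{b}_r$) inside the subspace $\vct{x}^{\perp}$, applied to the orthogonal projection of $\mathcal{C}$ onto $\vct{x}^{\perp}$; since the projection is $1$-Lipschitz its Gaussian width is at most $\omega(\mathcal{C}\cap\mathbb{S}^{n-1})$. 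The corollary's hypothesis $\sum_r d_r^2\ge 20\infnorm{\vct{d}}^2\omega^2/\delta^2$ is checked on the high-probability event $\{\sum_r d_r^2\ge m/2\}\cap\{\infnorm{\vct{d}}^2\le C\log n\}$ (by $\chi^2$ concentration and the Gaussian maximal inequality), and this is precisely where the $\log n$ factor in the sample requirement enters. The output is $|S_3(\vct{h}^{\perp})-\twonorm{\vct{h}^{\perp}}^2|\le(\delta/3)\twonorm{\vct{h}^{\perp}}^2$ for every $\vct{h}\in\mathcal{C}$.

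The cross term $S_2$ is the main obstacle, because the cubic weights $d_r^3$ do not fit the symmetric framework of Lemma \ref{gordontypenonsym} directly. The trick is again to condition on $\vct{d}$: since the $\vct{b}_r$ are independent Gaussians on $\vct{x}^{\perp}$, the vector $\sum_r d_r^3\vct{b}_r$ is conditionally Gaussian with covariance $\bigl(\sum_r d_r^6\bigr)(\mtx{I}_n-\vct{x}\vct{x}^*)$, so it equals $\bigl(\sum_r d_r^6\bigr)^{1/2}\vct{w}$ in distribution for a standard Gaussian $\vct{w}$ on $\vct{x}^{\perp}$. Hence $S_2(\vct{h}^{\perp})=\tfrac{1}{m}\bigl(\sum_r d_r^6\bigr)^{1/2}\langle \vct{w},\vct{h}^{\perp}\rangle$, and a standard Gaussian-width/escape-through-a-mesh bound gives $\sup_{\vct{h}\in\mathcal{C}\cap\mathbb{S}^{n-1}}|\langle \vct{w},\vct{h}^{\perp}\rangle|\le 2\omega(\mathcal{C}\cap\mathbb{S}^{n-1})+t$ with probability $1-2e^{-t^2/2}$. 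Combining with $\sum_r d_r^6\le Cm$ w.h.p.\ (since $\E[d_r^6]=15$) and choosing $t\asymp\sqrt{m}\,\delta$ yields $|S_2(\vct{h}^{\perp})|\le(\delta/3)\twonorm{\vct{h}^{\perp}}$ uniformly in $\vct{h}\in\mathcal{C}$ as soon as $m\gtrsim\omega^2/\delta^2$.

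Combining the three estimates and using the elementary inequality $2|\alpha|\twonorm{\vct{h}^{\perp}}\le\alpha^2+\twonorm{\vct{h}^{\perp}}^2=\twonorm{\vct{h}}^2$ yields
\begin{align*}
\left|\frac{1}{m}\sum_{r=1}^m(\vct{a}_r^*\vct{h})^2(\vct{a}_r^*\vct{x})^2-\left(\twonorm{\vct{h}}^2+2(\vct{h}^*\vct{x})^2\right)\right|\le \delta\twonorm{\vct{h}}^2
\end{align*}
for all $\vct{h}\in\mathcal{C}$, and restoring the normalization of $\vct{x}$ gives the displayed inequality. A union bound over the four auxiliary events (on $S_1$, on $\infnorm{\vct{d}}$ and $\sum_r d_r^2$, on the $\vct{w}$-escape, and on the Corollary \ref{coroG} event for $S_3$) accounts for the $2/m$, $1/n$, $e^{-\gamma_1 m}$ and $7e^{-\gamma_2\delta^2 m}$ terms in the stated failure probability.
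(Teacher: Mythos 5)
Your proof follows essentially the same route as the paper's. You use the identical orthogonal decomposition $\vct{h}=(\vct{h}^*\vct{x})\vct{x}+\vct{h}^\perp$, split the empirical average into the same three pieces (pure fourth moment of $d_r$, mixed $d_r^2(\vct{b}_r^*\vct{h}^\perp)^2$, cross term $d_r^3\vct{b}_r^*\vct{h}^\perp$), bound the first by Chebyshev, the second by conditioning on $\vct{d}$ and invoking Corollary \ref{coroG} with the $\log n$ entering through $\infnorm{\vct{d}}^2\lesssim\log n$, and the third by the conditional-Gaussian observation that $\sum_r d_r^3\vct{b}_r\overset{d}{=}\bigl(\sum_r d_r^6\bigr)^{1/2}\vct{w}$ followed by a Gaussian-width bound — all of which is precisely what the paper does, with only cosmetic differences in normalizing constants and in how the final three pieces are recombined.
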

We also need the following important lemma. The proof of this lemma is based on the paper \cite{mendelson2012oracle}. Please also see \cite{eldar2014phase} for related calculations. We defer the proof to Appendix \ref{ProofRIP1}. 
\begin{lemma}\label{RIP1} Assume $\mathcal{C},\mathcal{C}'\subset \R^n$ are cones and $\mathbb{S}^{n-1}$ is the unit sphere of $\R^n$. Also assume
\begin{align*}
m\ge c\cdot\max\left( \omega^2(\mathcal{C}\cap \mathbb{S}^{n-1}),\omega^2(\mathcal{C}'\cap\mathbb{S}^{n-1})\right),
\end{align*}
for a fixed numerical constant $c$. Then for any $\vct{u}\in\mathcal{C}$ and $\vct{v}\in\mathcal{C}'$ 
\begin{align*}
\abs{\frac{1}{m}\sum_{r=1}^m\abs{\vct{u}^*\vct{a}_r\vct{a}_r^*\vct{v}}-\E[\abs{\vct{u}^*\vct{a}\vct{a}^*\vct{v}}]}\le\delta\twonorm{\vct{u}}\twonorm{\vct{v}},
\end{align*}
holds with probability at least $1-2e^{-\gamma\delta m}$ where $\gamma$ is a fixed numerical constant. Here, $\vct{a}\in\R^n$ is distributed as $\mathcal{N}(\vct{0},\mtx{I})$.
\end{lemma}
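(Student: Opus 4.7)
The plan is to prove uniform sub-exponential concentration for the empirical process
\[
Z(\vct{u},\vct{v}) := \frac{1}{m}\sum_{r=1}^m |\vct{u}^*\vct{a}_r||\vct{a}_r^*\vct{v}| - \E|\vct{u}^*\vct{a}||\vct{a}^*\vct{v}|
\]
indexed by pairs $(\vct{u},\vct{v})$ in the product of the two cones, following the program of Mendelson~\cite{mendelson2012oracle} (cf.~also~\cite{eldar2014phase}). By positive homogeneity of both sides of the claimed inequality, it suffices to treat unit vectors $\vct{u}\in\mathcal{C}\cap\mathbb{S}^{n-1}$ and $\vct{v}\in\mathcal{C}'\cap\mathbb{S}^{n-1}$ and prove the bound with $\delta$ on the right hand side.

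First I would record the pointwise tail estimate. For a fixed unit pair $(\vct{u},\vct{v})$, each summand $|\vct{u}^*\vct{a}_r||\vct{a}_r^*\vct{v}|$ is a product of two $\mathcal{N}(0,1)$ random variables and is therefore sub-exponential with $\psi_1$-norm bounded by an absolute constant (the product of two $\psi_2$-norms). Bernstein's inequality for centered sub-exponentials then gives $\P(|Z(\vct{u},\vct{v})|>\delta) \le 2\exp(-cm\delta)$, which is exactly the tail rate claimed in the lemma.

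To upgrade this to a uniform estimate I would use a truncation-plus-chaining argument. Split each summand at a level $L$ of order one into a bounded piece $X_r^{\le} := |\vct{u}^*\vct{a}_r||\vct{a}_r^*\vct{v}|\mathbf{1}\{|\vct{u}^*\vct{a}_r||\vct{a}_r^*\vct{v}|\le L\}$ and a tail piece $X_r^{>}$. For $X_r^{\le}$, symmetrization introduces Rademacher signs and the contraction principle applied to the $1$-Lipschitz map $z\mapsto|z|$ decouples the class, so that the expected supremum is controlled by a sum of two Gaussian widths, namely $\omega(\mathcal{C}\cap\mathbb{S}^{n-1})$ and $\omega(\mathcal{C}'\cap\mathbb{S}^{n-1})$. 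Under the sample hypothesis $m\gtrsim\max(\omega^2(\mathcal{C}\cap\mathbb{S}^{n-1}),\omega^2(\mathcal{C}'\cap\mathbb{S}^{n-1}))$, this expected supremum is at most a small constant times $\delta$, and Talagrand's inequality for bounded empirical processes converts it into a deviation bound with Bernstein-type exponent $e^{-cm\delta}$. The tail piece $X_r^{>}$ has expectation decaying exponentially in $L$, and the uniform supremum of its contribution is controlled by combining the pointwise sub-exponential tail with a Sudakov-type net of cardinality $\exp(C\max(\omega^2(\mathcal{C}\cap\mathbb{S}^{n-1}),\omega^2(\mathcal{C}'\cap\mathbb{S}^{n-1})))$.

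The main obstacle is the heavy (sub-exponential rather than bounded) tail of $|\vct{u}^*\vct{a}||\vct{a}^*\vct{v}|$: a naive $\epsilon$-net argument over the product class forces an extra factor of $1/\delta^2$ (or worse) in the sample complexity, whereas Lemma~\ref{RIP1} asks for $m\gtrsim\max(\omega^2(\mathcal{C}\cap\mathbb{S}^{n-1}),\omega^2(\mathcal{C}'\cap\mathbb{S}^{n-1}))$ independently of $\delta$. The delicate step is arranging the chaining so that the sub-exponential contribution to the generic chaining functional, which scales like $\gamma_1/m$, is absorbed by the sub-Gaussian contribution $\sqrt{\gamma_2/m}$, where $\gamma_2$ is bounded above by the Gaussian widths of the two cones. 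This balance is precisely the content of Mendelson's oracle-inequality and small-ball machinery, and specializing it to the rank-one product class $\{\vct{a}\mapsto|\vct{u}^*\vct{a}||\vct{v}^*\vct{a}|\}$ delivers Lemma~\ref{RIP1} with the stated sample complexity and failure probability $2e^{-\gamma\delta m}$.
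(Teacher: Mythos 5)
Your proposal lands in essentially the same place as the paper: the paper's proof is a one-step invocation of Mendelson's product-empirical-process concentration theorem (\cite{mendelson2012oracle}, restated as Lemma~\ref{mend}) applied to $\mathcal{T}_{\mathcal{F}}=\mathcal{C}\cap\mathbb{S}^{n-1}$ and $\mathcal{T}_{\mathcal{H}}=\mathcal{C}'\cap\mathbb{S}^{n-1}$, which is exactly the ``oracle-inequality and small-ball machinery'' you ultimately defer to. One imprecision in your interior sketch is worth flagging but does not create a gap: Rademacher contraction through the $1$-Lipschitz map $z\mapsto|z|$ handles the absolute value, but it does not by itself decouple the bilinear product $|\vct{u}^*\vct{a}_r|\,|\vct{a}_r^*\vct{v}|$ --- controlling the product of two function classes with only a $\delta$-linear (not $\delta^2$) exponent is precisely the nontrivial content of Mendelson's argument --- so the honest version of your proof is exactly what the paper does, namely cite that result.
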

We also state a simple generalization of Lemma \ref{RIP1} above. This lemma has a near identical proof. We skip details for brevity.
\begin{lemma}\label{RIP2} Assume $\mathcal{C},\mathcal{D}\subset \R^n$ are sets with diameters bounded by fixed numerical constants. Also assume
\begin{align*}
m\ge c\cdot\max\left( \omega^2(\mathcal{C}),\omega^2(\mathcal{D})\right),
\end{align*}
for a fixed numerical constant $c$. Then for any $\vct{u}\in\mathcal{C}$ and $\vct{v}\in\mathcal{D}$ 
\begin{align*}
\abs{\frac{1}{m}\sum_{r=1}^m\abs{\vct{u}^*\vct{a}_r\vct{a}_r^*\vct{v}}-\E[\abs{\vct{u}^*\vct{a}\vct{a}^*\vct{v}}]}\le\delta
\end{align*}
holds with probability at least $1-2e^{-\gamma\delta m}$ where $\gamma$ is a fixed numerical constant. Here, $\vct{a}\in\R^n$ is distributed as $\mathcal{N}(\vct{0},\mtx{I})$.
\end{lemma}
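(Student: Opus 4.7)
The plan is to mimic verbatim the proof of Lemma \ref{RIP1} (deferred to Appendix \ref{ProofRIP1}), adapting only those steps that depend on the conic/unit-sphere structure of its index set. The only real change is a recentering step. Since $\mathcal{C}$ and $\mathcal{D}$ have diameters bounded by constants, I fix reference points $\vct{u}_0\in\mathcal{C}$ and $\vct{v}_0\in\mathcal{D}$ and write $\vct{u}=\vct{u}_0+\widetilde{\vct{u}}$, $\vct{v}=\vct{v}_0+\widetilde{\vct{v}}$, where $\widetilde{\vct{u}}$ and $\widetilde{\vct{v}}$ range over translated sets with bounded Euclidean norms and Gaussian widths at most $2\omega(\mathcal{C})$ and $2\omega(\mathcal{D})$ respectively. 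This recentering plays the role that scale invariance of cones played in Lemma \ref{RIP1}, and it explains why the $\twonorm{\vct{u}}\twonorm{\vct{v}}$ factor on the right-hand side of that earlier bound simply becomes a numerical constant here, absorbable into $\delta$.

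With the reduction in place, the argument proceeds along the same blueprint as in \cite{mendelson2012oracle}. I would first apply symmetrization to the centered empirical process and then truncate the Gaussian products $\abs{\vct{a}_r^*\vct{u}}\abs{\vct{a}_r^*\vct{v}}$ at an appropriate constant level. The truncated piece is a uniformly bounded, sub-Gaussian process whose natural metric on $\mathcal{C}\times\mathcal{D}$ is controlled by $\twonorm{\vct{u}-\vct{u}'}$ and $\twonorm{\vct{v}-\vct{v}'}$; generic chaining bounds its expected supremum by a constant multiple of $(\omega(\mathcal{C})+\omega(\mathcal{D}))/\sqrt{m}$, which is below $\delta$ precisely under the assumption $m\ge c\cdot\max(\omega^2(\mathcal{C}),\omega^2(\mathcal{D}))$. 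The portion removed by the truncation contributes a negligible amount in $L^1$ by standard Gaussian fourth-moment estimates. Finally, the expected-supremum bound is upgraded to a tail bound via Talagrand's concentration inequality for empirical processes (or equivalently a Bernstein bound exploiting the sub-exponential tail of each summand), yielding the announced failure probability $2e^{-\gamma\delta m}$.

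The main obstacle is the one already faced in Lemma \ref{RIP1}: the summands $\abs{\vct{u}^*\vct{a}_r\vct{a}_r^*\vct{v}}$ are only sub-exponential rather than sub-Gaussian, so a naive application of chaining would carry a spurious $\log m$ factor that would ruin the sample-complexity requirement. The truncation trick of \cite{mendelson2012oracle} is precisely what removes this factor and lets the complexity depend on the Gaussian widths alone. Once that ingredient is imported from the proof of Lemma \ref{RIP1}, the passage from cone-sphere intersections to arbitrary bounded-diameter sets is essentially cosmetic, which is why the author skips the details.
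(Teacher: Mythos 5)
Your plan is correct in spirit but takes a more roundabout route than the paper, and the recentering step you put front and center is both unnecessary and conceptually misleading. The paper's proof of Lemma~\ref{RIP1} (hence of Lemma~\ref{RIP2}) is a direct black-box application of Mendelson's bound, stated as Lemma~\ref{mend}: in Lemma~\ref{RIP1} the cones are radially normalized so that $d(\mathcal{T}_\mathcal{F})=d(\mathcal{T}_\mathcal{H})=1$, Mendelson's estimate is invoked, and the result is then scaled back up by $\twonorm{\vct{u}}\twonorm{\vct{v}}$. For Lemma~\ref{RIP2} the adaptation is even shorter than that: one simply sets $\mathcal{T}_\mathcal{F}=\mathcal{C}$ and $\mathcal{T}_\mathcal{H}=\mathcal{D}$ in Lemma~\ref{mend} with no normalization at all, because the bounded-diameter hypothesis already forces $d(\mathcal{T}_\mathcal{F}),d(\mathcal{T}_\mathcal{H})=\mathcal{O}(1)$; the sample-complexity assumption $m\ge c\max(\omega^2(\mathcal{C}),\omega^2(\mathcal{D}))$ then makes Mendelson's right-hand side at most $\delta$, with the stated tail. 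That is the whole ``near identical'' proof. Your reconstruction instead re-derives Mendelson's machinery (symmetrization, truncation, chaining, Talagrand concentration) from scratch; that is sound but re-proves what the paper treats as a cited lemma. More importantly, the recentering $\vct{u}=\vct{u}_0+\widetilde{\vct{u}}$, $\vct{v}=\vct{v}_0+\widetilde{\vct{v}}$ does not actually do the work you attribute to it: the summand $\abs{\vct{u}^*\vct{a}_r\vct{a}_r^*\vct{v}}$ is not affine in $(\vct{u},\vct{v})$ because of the absolute value, so the decomposition does not factor the process, and it is not what makes the $\twonorm{\vct{u}}\twonorm{\vct{v}}$ factor disappear. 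That factor becomes a constant simply because the sets are bounded to begin with, so there is nothing to rescale; no translation or re-indexing is needed. If you strip out the recentering and replace the from-scratch chaining argument with a one-line invocation of Lemma~\ref{mend} on $\mathcal{C}$ and $\mathcal{D}$ themselves, you land on the paper's intended argument.
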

Finally, we also need the following lemma with the proof appearing in Appendix \ref{pfgordontype6}. 
\begin{lemma}\label{expRIP1}
For any $\vct{u},\vct{v}\in\R^n$ define $\theta=\cos^{-1}\left(\frac{\vct{u}^T\vct{v}}{\twonorm{\vct{u}}\twonorm{\vct{v}}}\right)$. Then we have
\begin{align*}
\E[\abs{\vct{u}^*\vct{a}\vct{a}^*\vct{v}}]= \frac{2}{\pi}\twonorm{\vct{u}}\twonorm{\vct{v}}\left(\sin(\theta)+\cos(\theta)\left(\frac{\pi}{2}-\theta\right)\right)\ge \frac{2}{\pi}\twonorm{\vct{u}}\twonorm{\vct{v}}.
\end{align*}
\end{lemma}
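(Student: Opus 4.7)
\textbf{Proof plan for Lemma \ref{expRIP1}.} The plan is to reduce the $n$-dimensional computation to a bivariate Gaussian integral, evaluate it in polar coordinates, and then verify the asserted lower bound by elementary calculus. Since $\E[\abs{\vct{u}^*\vct{a}\vct{a}^*\vct{v}}]$ depends on $\vct{u},\vct{v}$ only through $\twonorm{\vct{u}}$, $\twonorm{\vct{v}}$, and $\vct{u}^T\vct{v}$, I would first rotate coordinates so that $\vct{u}$ and $\vct{v}$ lie in a $2$D coordinate plane. Writing $\vct{u}=\twonorm{\vct{u}}\,\vct{e}_1$ and $\vct{v}=\twonorm{\vct{v}}(\cos\theta\,\vct{e}_1+\sin\theta\,\vct{e}_2)$ and letting $b_1,b_2$ be the (independent standard Gaussian) coordinates of $\vct{a}$ in this plane, the quantity becomes
\begin{align*}
\E[\abs{\vct{u}^*\vct{a}\vct{a}^*\vct{v}}]=\twonorm{\vct{u}}\twonorm{\vct{v}}\,\E\!\left[\abs{b_1(b_1\cos\theta+b_2\sin\theta)}\right].
\end{align*}
The remaining components of $\vct{a}$ are independent of $b_1,b_2$ and drop out.

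Next I would pass to polar coordinates $b_1=r\cos\phi$, $b_2=r\sin\phi$ with density $\tfrac{1}{2\pi}r\,e^{-r^2/2}$. A brief trig identity gives $b_1(b_1\cos\theta+b_2\sin\theta)=r^2\cos\phi\cos(\phi-\theta)$, so the expectation factors as the product of a radial integral $\int_0^{\infty}r^3 e^{-r^2/2}\,dr=2$ and the angular integral $\tfrac{1}{2\pi}\int_0^{2\pi}\abs{\cos\phi\cos(\phi-\theta)}\,d\phi$. Using $\cos\phi\cos(\phi-\theta)=\tfrac{1}{2}[\cos\theta+\cos(2\phi-\theta)]$ and the substitution $\psi=2\phi-\theta$ (which covers two full periods of $\cos\psi$), the angular integral reduces to $\tfrac{1}{2}\int_0^{2\pi}\abs{\cos\theta+\cos\psi}\,d\psi$.

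The main computational step—and the one requiring the most care—is then evaluating $\int_0^{2\pi}\abs{\cos\theta+\cos\psi}\,d\psi$. For $\theta\in[0,\pi]$ the integrand is nonnegative on $[0,\pi-\theta]\cup[\pi+\theta,2\pi]$ and nonpositive on $[\pi-\theta,\pi+\theta]$; splitting the integral accordingly and using $\int \cos\psi\,d\psi=\sin\psi$ yields $\int_0^{2\pi}\abs{\cos\theta+\cos\psi}\,d\psi=2(\pi-2\theta)\cos\theta+4\sin\theta$. Assembling the pieces,
\begin{align*}
\E[\abs{\vct{u}^*\vct{a}\vct{a}^*\vct{v}}]=\twonorm{\vct{u}}\twonorm{\vct{v}}\cdot\tfrac{1}{\pi}\bigl[(\pi-2\theta)\cos\theta+2\sin\theta\bigr]=\tfrac{2}{\pi}\twonorm{\vct{u}}\twonorm{\vct{v}}\bigl[\sin\theta+\cos\theta(\pi/2-\theta)\bigr],
\end{align*}
which is the claimed identity. (Sanity checks at $\theta=0,\pi/2,\pi$ match the known values $1,\tfrac{2}{\pi},1$.)

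For the lower bound it remains to show that $f(\theta):=\sin\theta+(\pi/2-\theta)\cos\theta\ge 1$ on $[0,\pi]$. A short computation gives $f'(\theta)=-(\pi/2-\theta)\sin\theta$, so $f$ is decreasing on $[0,\pi/2]$ and increasing on $[\pi/2,\pi]$, with minimum $f(\pi/2)=1$. This yields $\E[\abs{\vct{u}^*\vct{a}\vct{a}^*\vct{v}}]\ge \tfrac{2}{\pi}\twonorm{\vct{u}}\twonorm{\vct{v}}$ and completes the proof. The only nontrivial step is the evaluation of the angular integral; everything else is a standard Gaussian/polar manipulation.
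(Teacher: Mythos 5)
Your proof is correct, and it takes a genuinely different route from the paper's. You rotate $\vct{a}$ into the two-dimensional plane spanned by $\vct{u},\vct{v}$, reduce directly to the bivariate Gaussian expectation $\E[\abs{b_1(b_1\cos\theta+b_2\sin\theta)}]$, and evaluate it in polar coordinates. The paper instead begins with the polarization identity $\vct{u}^*\vct{a}\vct{a}^*\vct{v}=\abs{\vct{a}^*\vct{x}}^2-\abs{\vct{a}^*\vct{y}}^2$ with $\vct{x}=\tfrac12(\vct{u}+\vct{v})$, $\vct{y}=\tfrac12(\vct{u}-\vct{v})$, diagonalizes the rank-two matrix $\vct{x}\vct{x}^*-\vct{y}\vct{y}^*$ to get eigenvalues $\lambda_1,-\lambda_2$, and then reduces to $\E[\abs{\lambda_1 X_1^2-\lambda_2 X_2^2}]$ for independent standard normals $X_1,X_2$ — which it also evaluates in polar coordinates, arriving at an essentially identical angular integral ($\int_0^{2\pi}\abs{\cos\phi+\eta}\,d\phi$ with $\eta=\cos\theta$, matching your $\int_0^{2\pi}\abs{\cos\theta+\cos\psi}\,d\psi$ up to normalization). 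Your approach is shorter and more elementary, bypassing the eigendecomposition entirely and never invoking orthogonality of eigenvectors; the paper's polarization route, referencing the computation in \cite{candes2012phaselift}, is the one that arises naturally in the phase-retrieval literature where $\abs{\vct{a}^*\vct{x}}^2-\abs{\vct{a}^*\vct{y}}^2$ is the object of study. You also go one step further than the paper by actually proving the lower bound $\sin\theta+(\pi/2-\theta)\cos\theta\ge 1$ via $f'(\theta)=-(\pi/2-\theta)\sin\theta$, whereas the paper asserts the inequality without justification; this is a welcome addition. One small note: your computation of $\int_0^{2\pi}\abs{\cos\theta+\cos\psi}\,d\psi=2(\pi-2\theta)\cos\theta+4\sin\theta$ implicitly uses $\arccos(-\cos\theta)=\pi-\theta$ for $\theta\in[0,\pi]$ to identify the sign-change points uniformly across obtuse and acute $\theta$; it would be worth stating that identity explicitly so a reader does not need to verify the $\theta>\pi/2$ case separately.
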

\subsection{Cone and projection identities}
In this section we will gather a few results regarding higher dimensional cones and projections that are used throughout the proofs. These results are directly adapted from \cite[6.1]{oymak2015sharp}. We begin with a result about projections onto sets. The first part concerning projections onto convex sets is the well known contractivity result regarding convex projections. 
\begin{lemma}\label{projcvxthm} Assume $\mathcal{K}\in\R^n$ is a closed set and $\vct{v}\in\R^n$. Then if $\mathcal{K}$ is convex for every $\vct{u}\in\mathcal{K}$ we have
\begin{align}\label{projcvxthm1}
\twonorm{\mathcal{P}_{\mathcal{K}}(\vct{v})-\vct{u}}\le\twonorm{\vct{v}-\vct{u}}.
\end{align}
Furthermore, for any closed set $\mathcal{K}$ (not necessarily convex) and for every $\vct{u}\in\mathcal{K}$ we have
\begin{align}\label{projncvxthm1}
\twonorm{\mathcal{P}_{\mathcal{K}}(\vct{v})-\vct{u}}\le2\twonorm{\vct{v}-\vct{u}}.
\end{align}
\end{lemma}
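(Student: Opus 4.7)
The plan is to handle the two inequalities separately, since the convex case relies on the variational characterization of convex projections while the nonconvex case needs only the defining minimality property combined with the triangle inequality.

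For inequality \eqref{projcvxthm1}, I would let $\vct{w} := \mathcal{P}_{\mathcal{K}}(\vct{v})$ and invoke the standard first-order optimality condition for projection onto a closed convex set: for every $\vct{u} \in \mathcal{K}$,
\begin{align*}
\iprod{\vct{v}-\vct{w}}{\vct{u}-\vct{w}} \le 0,
\end{align*}
equivalently $\iprod{\vct{v}-\vct{w}}{\vct{w}-\vct{u}} \ge 0$. I would then decompose
\begin{align*}
\twonorm{\vct{v}-\vct{u}}^2 = \twonorm{(\vct{v}-\vct{w})+(\vct{w}-\vct{u})}^2 = \twonorm{\vct{v}-\vct{w}}^2 + 2\iprod{\vct{v}-\vct{w}}{\vct{w}-\vct{u}} + \twonorm{\vct{w}-\vct{u}}^2,
\end{align*}
and drop the first two nonnegative terms on the right to conclude $\twonorm{\vct{w}-\vct{u}}^2 \le \twonorm{\vct{v}-\vct{u}}^2$, which is exactly \eqref{projcvxthm1}.

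For inequality \eqref{projncvxthm1}, I cannot use the variational inequality because convexity fails, so I would instead exploit the defining minimality property of the projection onto a closed (possibly nonconvex) set. Since $\vct{u} \in \mathcal{K}$ is an admissible competitor in the minimization defining $\vct{w} := \mathcal{P}_{\mathcal{K}}(\vct{v})$, we have $\twonorm{\vct{v}-\vct{w}} \le \twonorm{\vct{v}-\vct{u}}$. Combining this with the triangle inequality yields
\begin{align*}
\twonorm{\vct{w}-\vct{u}} \le \twonorm{\vct{w}-\vct{v}} + \twonorm{\vct{v}-\vct{u}} \le 2\twonorm{\vct{v}-\vct{u}},
\end{align*}
which is \eqref{projncvxthm1}.

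There is no substantial obstacle here; both parts are classical facts. The only mild subtlety is ensuring that $\mathcal{P}_{\mathcal{K}}(\vct{v})$ is well defined when $\mathcal{K}$ is merely closed, which is guaranteed because the distance function $\vct{w} \mapsto \twonorm{\vct{v}-\vct{w}}$ is continuous and coercive, so a minimizer exists on the closed set $\mathcal{K}$ (though not necessarily unique); the argument above works for any selection from the projection. I would present the two parts in succession with the convex case first, emphasizing that the factor $2$ degradation in the nonconvex case is the price paid for losing the obtuse-angle/variational inequality.
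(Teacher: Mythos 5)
Your proposal is correct, and the convex part is exactly the standard variational-inequality argument that the paper simply cites as well known. For the nonconvex inequality \eqref{projncvxthm1}, however, you take a genuinely different (and simpler) route than the paper. The paper starts from the same minimality property $\twonorm{\vct{v}-\mathcal{P}_{\mathcal{K}}(\vct{v})}\le\twonorm{\vct{v}-\vct{u}}$, but then expands $\twonorm{\vct{v}-\mathcal{P}_{\mathcal{K}}(\vct{v})}^2$ as $\twonorm{\vct{v}-\vct{u}}^2+\twonorm{\mathcal{P}_{\mathcal{K}}(\vct{v})-\vct{u}}^2-2\langle \mathcal{P}_{\mathcal{K}}(\vct{v})-\vct{u},\vct{v}-\vct{u}\rangle$, deduces $\twonorm{\mathcal{P}_{\mathcal{K}}(\vct{v})-\vct{u}}^2\le 2\langle \mathcal{P}_{\mathcal{K}}(\vct{v})-\vct{u},\vct{v}-\vct{u}\rangle$, and finishes with Cauchy--Schwarz and a division. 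Your version replaces this inner-product bookkeeping with a single triangle inequality, $\twonorm{\vct{w}-\vct{u}}\le\twonorm{\vct{w}-\vct{v}}+\twonorm{\vct{v}-\vct{u}}\le 2\twonorm{\vct{v}-\vct{u}}$, which is shorter, avoids dividing by a quantity that could a priori be zero (a trivial but unaddressed case in the paper's last step), and in fact works in any metric space, whereas the paper's expansion uses the Hilbert-space structure. Both arguments give the same constant $2$, so nothing is lost; your remark on existence (not uniqueness) of the projection onto a merely closed set and the fact that the argument works for any selection is a sensible addition that the paper leaves implicit.
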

\begin{proof}
Equation \eqref{projcvxthm1} is well known. We shall prove the second result. To this aim note that by definition of projection onto a set we have
\begin{align}
\label{mytempineq}
\twonorm{\vct{v}-\mathcal{P}_{\mathcal{K}}(\vct{v})}^2\le \twonorm{\vct{v}-\vct{u}}^2.
\end{align}
Also note that
\begin{align*}
\twonorm{\vct{v}-\mathcal{P}_{\mathcal{K}}(\vct{v})}^2=&\twonorm{(\vct{v}-\vct{u})-\left(\mathcal{P}_{\mathcal{K}}(\vct{v})-\vct{u}\right)}^2,\\
=&\twonorm{\vct{v}-\vct{u}}^2+\twonorm{\mathcal{P}_{\mathcal{K}}(\vct{v})-\vct{u}}^2-2\langle \mathcal{P}_{\mathcal{K}}(\vct{v})-\vct{u},\vct{v}-\vct{u}\rangle.
\end{align*}
Combining the latter inequality with \eqref{mytempineq} and using the  Cauchy-Schwarz inequality we have
\begin{align*}
\twonorm{\mathcal{P}_{\mathcal{K}}(\vct{v})-\vct{u}}^2=&\twonorm{\vct{v}-\mathcal{P}_{\mathcal{K}}(\vct{v})}^2-\twonorm{\vct{v}-\vct{u}}^2+2\langle \mathcal{P}_{\mathcal{K}}(\vct{v})-\vct{u},\vct{v}-\vct{u}\rangle,\\
\le&2\langle \mathcal{P}_{\mathcal{K}}(\vct{v})-\vct{u},\vct{v}-\vct{u}\rangle,\\
\le&2\twonorm{\mathcal{P}_{\mathcal{K}}(\vct{v})-\vct{u}}\twonorm{\vct{v}-\vct{u}}.
\end{align*}
Dividing both sides of the above inequality by $\twonorm{\mathcal{P}_{\mathcal{K}}(\vct{v})-\vct{u}}$ concludes the proof.
\end{proof}
We now state a result concerning projection onto cones.
\begin{lemma}\label{firstconelem} Let $\mathcal{C}\subset\R^n$ be a closed cone and $\vb\in\R^n$. The followings two identities hold
\begin{align}
\tn{\vb}^2=&\tn{\vb-\mathcal{P}_{\mathcal{C}}(\vb)}^2+\tn{\mathcal{P}_\Cc(\vb)}^2\label{orthogonal},\\
\tn{\mathcal{P}_\Cc(\vb)}=&\sup_{\ub\in\Cc\cap\Bc^{n}} \ub^*\vb\label{lem:identity}.
\end{align}
\end{lemma}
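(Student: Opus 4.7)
The plan is to reduce both identities to the single variational fact that $t\,\mathcal{P}_\Cc(\vb)\in\Cc$ for every $t\ge 0$, which is all the cone structure gives us since $\Cc$ is not assumed to be convex. Write $\vct{p}:=\mathcal{P}_\Cc(\vb)$ throughout.

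For the Pythagorean identity \eqref{orthogonal}, the strategy is to first establish the orthogonality $\langle \vb-\vct{p},\vct{p}\rangle=\tn{\vct{p}}^2-\langle \vb,\vct{p}\rangle=0$, after which \eqref{orthogonal} follows by one line of expansion of $\tn{\vb-\vct{p}}^2=\tn{\vb}^2-2\langle \vb,\vct{p}\rangle+\tn{\vct{p}}^2$. To obtain the orthogonality I would consider the scalar function $f(t):=\tn{\vb-t\vct{p}}^2$, which is well-defined on all of $t\ge 0$ because $t\vct{p}\in\Cc$, and which attains its minimum over $t\ge 0$ at $t=1$ by the defining property of the projection. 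Writing $f$ explicitly as a quadratic in $t$ and setting $f'(1)=0$ yields $\langle \vb,\vct{p}\rangle=\tn{\vct{p}}^2$, which is exactly what is needed; the degenerate case $\vct{p}=0$ is trivial.

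For the support-function identity \eqref{lem:identity}, the lower bound comes from plugging $\ub=\vct{p}/\tn{\vct{p}}$ (when $\vct{p}\ne 0$) into the supremum: this vector lies in $\Cc\cap\Bc^n$ by the cone property, and the orthogonality from the previous step evaluates $\ub^*\vb=\tn{\vct{p}}$. For the matching upper bound, fix any $\ub\in\Cc\cap\Bc^n$ and use that $\lambda\ub\in\Cc$ for every $\lambda\ge 0$; the minimizing property of $\vct{p}$ then gives $\tn{\vb-\lambda\ub}^2\ge \tn{\vb-\vct{p}}^2=\tn{\vb}^2-\tn{\vct{p}}^2$, and optimizing the resulting quadratic inequality $\lambda^2\tn{\ub}^2-2\lambda\,\ub^*\vb+\tn{\vct{p}}^2\ge 0$ in $\lambda\ge 0$ at $\lambda=\max(0,\ub^*\vb/\tn{\ub}^2)$ leads to $(\ub^*\vb)^2\le \tn{\vct{p}}^2\tn{\ub}^2\le \tn{\vct{p}}^2$. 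Taking the supremum over $\ub$ then yields the reverse bound and closes the identity.

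The main subtlety, and the principal obstacle to a casual proof, is that $\Cc$ is only a closed cone and not necessarily convex, so the usual convex-projection obtuse-angle inequality $\langle \vb-\vct{p},\ub-\vct{p}\rangle\le 0$ (the tool used in Lemma \ref{projcvxthm} for the convex case) is unavailable. The variational argument above is deliberately designed to exploit only nonnegative scaling ($\lambda\ub\in\Cc$ and $t\vct{p}\in\Cc$), which is a pure cone property and therefore remains valid in the nonconvex setting.
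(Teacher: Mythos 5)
Your proof is correct. The paper itself gives no proof of this lemma, stating only that it is ``directly adapted from'' a reference (\cite{oymak2015sharp}), so there is nothing to compare against line by line; your argument supplies the missing details cleanly.

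Your key observation is the right one: because $\Cc$ is assumed closed and conic but not convex, the standard obtuse-angle variational inequality for convex projections is unavailable, and the only structure one can lean on is that $t\vct{p}\in\Cc$ and $\lambda\ub\in\Cc$ for nonnegative scalars. Deriving $\langle\vb-\vct{p},\vct{p}\rangle=0$ from the interior first-order condition $f'(1)=0$ for $f(t)=\tn{\vb-t\vct{p}}^2$ is exactly the move that makes \eqref{orthogonal} go through without convexity; and feeding \eqref{orthogonal} into the quadratic inequality $\lambda^2\tn{\ub}^2-2\lambda\,\ub^*\vb+\tn{\vct{p}}^2\ge 0$ and optimizing over $\lambda\ge 0$ gives the matching upper bound in \eqref{lem:identity}. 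One small wording point: the chain ``$(\ub^*\vb)^2\le\tn{\vct{p}}^2\tn{\ub}^2$'' only emerges from the optimization when $\ub^*\vb>0$; when $\ub^*\vb\le 0$ the desired bound $\ub^*\vb\le\tn{\vct{p}}$ is immediate since the right side is nonnegative, so the case split should be made explicit. With that cosmetic fix the argument is complete, and it also implicitly shows that $\tn{\mathcal{P}_\Cc(\vb)}$ is well-defined even when the (possibly nonunique) projection onto the nonconvex set $\Cc$ is not.
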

%
The following lemma is straightforward and follows from the fact that translation preserves distances.
\begin{lemma} \label{simplem}Suppose $\mathcal{K}\subset\R^n$ is a closed set. The projection onto $\mathcal{K}$ obeys
\begin{align*}
\Pc_{\mathcal{K}}(\x+\vb)-\vct{x}=\Pc_{\mathcal{K}-\{\x\}}(\vb).
\end{align*}
\end{lemma}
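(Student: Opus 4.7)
The plan is to unfold the definition of the projection and perform a change of variable. Recall that by definition $\Pc_{\mathcal{K}}(\vct{w})$ is a minimizer of $\tn{\vct{u}-\vct{w}}^2$ over $\vct{u}\in\mathcal{K}$, and the shifted set is $\mathcal{K}-\{\x\}:=\{\vct{u}-\x:\vct{u}\in\mathcal{K}\}$. The identity to prove is purely geometric; no probabilistic tool or any previously developed machinery is needed.

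First I would write
\begin{align*}
\Pc_{\mathcal{K}}(\x+\vb)=\arg\min_{\vct{u}\in\mathcal{K}} \tn{\vct{u}-(\x+\vb)}^2,
\end{align*}
and introduce the substitution $\vct{w}=\vct{u}-\x$, which is a bijection between $\mathcal{K}$ and $\mathcal{K}-\{\x\}$. Under this substitution the objective becomes $\tn{\vct{w}-\vb}^2$. Hence the minimizer over $\vct{u}\in\mathcal{K}$ corresponds, via $\vct{u}=\vct{w}+\x$, to the minimizer over $\vct{w}\in\mathcal{K}-\{\x\}$, i.e. to $\Pc_{\mathcal{K}-\{\x\}}(\vb)$. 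Rearranging yields $\Pc_{\mathcal{K}}(\x+\vb)-\x=\Pc_{\mathcal{K}-\{\x\}}(\vb)$, which is the claim.

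There is essentially no obstacle: the statement is a direct manifestation of the translation invariance of the Euclidean norm. The only minor care is to note that $\mathcal{K}-\{\x\}$ is closed (because it is a translate of a closed set), so the projection on the right-hand side is well defined in the same sense as the one on the left-hand side, and the argmin correspondence is valid. If uniqueness of the projection fails (e.g., for nonconvex $\mathcal{K}$), the identity should be read as the equality of the sets of minimizers, which is exactly what the bijective change of variable delivers.
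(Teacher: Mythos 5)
Your proof is correct and matches the approach the paper points to: the paper states the lemma without a written proof, remarking only that it ``follows from the fact that translation preserves distances,'' which is precisely the change-of-variables argument you carry out, with the welcome extra care about closedness of the translate and about reading the identity as an equality of argmin sets when the projection is non-unique.
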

The next lemma compares the length of a projection onto a set to the length of projection onto the conic approximation of the set. 
\begin{lemma} [Comparison of projections] \label{prop compare} Let $\mathcal{D}$ be a closed and nonempty set that contains $\vct{0}$. Let $\Cc$ be a nonempty and closed cone containing $\mathcal{D}$ ($\mathcal{D}\subset\mathcal{C}$). Then for all $\vct{v}\in\R^n$,
\begin{align}
\label{firstsetcomp}
\tn{\Pc_\Dc(\vb)}\le 2\tn{\Pc_\Cc(\vb)}
\end{align}
Furthermore, assume $\Dc$ is a convex set. Then for all $\vb\in\R^n$, 
\begin{align}
\label{cvxsetcomp}
\tn{\Pc_\Dc(\vb)}\le \tn{\Pc_\Cc(\vb)}.
\end{align}
\end{lemma}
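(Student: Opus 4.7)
The plan is to exploit the hypothesis $\vct{0}\in\mathcal{D}$ together with the variational identity \eqref{lem:identity} from Lemma \ref{firstconelem}. Throughout I can assume $\mathcal{P}_\mathcal{D}(\vct{v})\neq \vct{0}$ since otherwise both conclusions are trivial.

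First I would tackle the general (possibly nonconvex) case. Because $\vct{0}\in\mathcal{D}$, by the very definition of projection $\|\vct{v}-\mathcal{P}_\mathcal{D}(\vct{v})\|_{\ell_2}^2 \le \|\vct{v}\|_{\ell_2}^2$. Expanding the square and cancelling $\|\vct{v}\|_{\ell_2}^2$ yields
\begin{align*}
\|\mathcal{P}_\mathcal{D}(\vct{v})\|_{\ell_2}^2 \le 2\langle \vct{v},\mathcal{P}_\mathcal{D}(\vct{v})\rangle.
\end{align*}
Since $\mathcal{P}_\mathcal{D}(\vct{v})\in\mathcal{D}\subset\mathcal{C}$ and $\mathcal{C}$ is a cone, the normalized vector $\mathcal{P}_\mathcal{D}(\vct{v})/\|\mathcal{P}_\mathcal{D}(\vct{v})\|_{\ell_2}$ lies in $\mathcal{C}\cap\mathcal{B}^n$. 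Applying identity \eqref{lem:identity} from Lemma \ref{firstconelem},
\begin{align*}
\|\mathcal{P}_\mathcal{C}(\vct{v})\|_{\ell_2} = \sup_{\vct{u}\in\mathcal{C}\cap\mathcal{B}^n}\langle \vct{u},\vct{v}\rangle \ge \frac{\langle \mathcal{P}_\mathcal{D}(\vct{v}),\vct{v}\rangle}{\|\mathcal{P}_\mathcal{D}(\vct{v})\|_{\ell_2}}\ge \frac{1}{2}\|\mathcal{P}_\mathcal{D}(\vct{v})\|_{\ell_2},
\end{align*}
where the last step uses the displayed inequality above. This proves \eqref{firstsetcomp}.

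For the convex case I would sharpen the inner-product inequality. When $\mathcal{D}$ is convex, the standard first-order characterization of the projection gives $\langle \vct{v}-\mathcal{P}_\mathcal{D}(\vct{v}),\vct{u}-\mathcal{P}_\mathcal{D}(\vct{v})\rangle\le 0$ for every $\vct{u}\in\mathcal{D}$. Taking $\vct{u}=\vct{0}\in\mathcal{D}$ and rearranging yields $\langle \vct{v},\mathcal{P}_\mathcal{D}(\vct{v})\rangle \ge \|\mathcal{P}_\mathcal{D}(\vct{v})\|_{\ell_2}^2$, which replaces the factor of $2$ above by $1$. Plugging this tighter bound into the same application of \eqref{lem:identity} immediately gives $\|\mathcal{P}_\mathcal{C}(\vct{v})\|_{\ell_2}\ge\|\mathcal{P}_\mathcal{D}(\vct{v})\|_{\ell_2}$, establishing \eqref{cvxsetcomp}.

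There is no real obstacle here; the only subtlety is noticing that Lemma \ref{firstconelem} lets one compare a projection onto $\mathcal{C}$ with an arbitrary inner product against a unit vector in $\mathcal{C}$, which is exactly the mechanism that converts the one-sided bound from $\vct{0}\in\mathcal{D}$ into a comparison of projection norms. The factor of $2$ versus $1$ gap between the two parts is an honest reflection of the fact that convex projections satisfy a true obtuse-angle condition whereas general projections only satisfy the weaker descent inequality $\|\vct{v}-\mathcal{P}_\mathcal{D}(\vct{v})\|_{\ell_2}\le \|\vct{v}\|_{\ell_2}$.
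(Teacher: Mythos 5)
Your proof is correct, and since the paper itself does not prove this lemma (it is imported verbatim from the reference \cite{oymak2015sharp}), there is no internal argument to compare against; your contribution is to make the lemma self-contained. The logic is sound in both parts: the definition of a metric projection onto a closed set containing $\vct{0}$ gives $\twonorm{\mathcal{P}_\mathcal{D}(\vct{v})}^2 \le 2\langle\vct{v},\mathcal{P}_\mathcal{D}(\vct{v})\rangle$, the convex obtuse-angle inequality tightens this to $\twonorm{\mathcal{P}_\mathcal{D}(\vct{v})}^2 \le \langle\vct{v},\mathcal{P}_\mathcal{D}(\vct{v})\rangle$, and the variational identity \eqref{lem:identity} turns either inner-product lower bound on $\twonorm{\mathcal{P}_\mathcal{C}(\vct{v})}$ into the stated comparison, using $\mathcal{P}_\mathcal{D}(\vct{v})/\twonorm{\mathcal{P}_\mathcal{D}(\vct{v})}\in\mathcal{C}\cap\mathcal{B}^n$. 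One minor observation: your argument is essentially the same mechanism the paper uses to prove \eqref{projncvxthm1} in Lemma \ref{projcvxthm} (expand the defining inequality of the projection, then exploit the cross term), except that you close with the cone identity \eqref{lem:identity} instead of Cauchy--Schwarz; this is the natural choice here and recovers the factor of $2$ in the nonconvex case and the factor of $1$ in the convex case exactly as required.
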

\subsection{Convergence analysis for intensity-based Wirtinger Flows}
\label{secIPWF}
In this section we shall prove Theorem \ref{Ithm}. The proof of this result is based on an extension of the framework developed in \cite{WF}. Therefore, the outline of our exposition closely follows that of \cite{WF}. Section \ref{genconv} discusses our general convergence analysis and shows that it follows from a certain \emph{Regularity Condition (RC)}. In this section we also show that the regularity condition can be proven by showing two sufficient \emph{Local Curvature} and \emph{Local Smoothness} conditions denoted by LCC and LSC. We then prove the Local Curvature condition in Section \ref{pfLCC} and the Local Smoothness condition in Section \ref{pfLSC}.

\subsubsection{General convergence analysis}
\label{genconv}
Note that \eqref{initIPWF} guarantees that either $\twonorm{\vct{z}_0-\vct{x}}$ or $\twonorm{\vct{z}_0+\vct{x}}$ is small. Throughout the proof without loss of generality we assume  $\twonorm{\vct{z}_0-\vct{x}}$ is the smaller one. To introduce our general convergence analysis we begin by defining 
\begin{align*}
E(\epsilon)=\big\{\vct{z}\in\R^n:\mathcal{R}(\vct{z})\le \mathcal{R}(\vct{x}),\text{ }\twonorm{\vct{z}-\vct{x}}\le\epsilon\big\}.
\end{align*}
Note that when condition \eqref{initIPWF} holds the next iterate $\vct{z}_1$ obeys $\vct{z}_1\in E(\epsilon)$ with $\epsilon=1/8$. The reason is that when the regularizer is convex so is the set $\mathcal{K}$ and by contractivity of projection onto convex sets (Lemma \ref{projcvxthm}) we also have
\begin{align*}
\twonorm{\vct{z}_1-\vct{x}}=\twonorm{\mathcal{P}_{\mathcal{K}}\left(\vct{z}_0\right)-\vct{x}}\le \twonorm{\vct{z}_0-\vct{x}}\le \epsilon.
\end{align*}
We will assume that the function $\mathcal{L}_{I}$ satisfies a regularity condition
on $E(\epsilon)$, which essentially states that the gradient of the
function is well-behaved.

\begin{condition}[Regularity Condition] We say that the function $\mathcal{L}_I$ satisfies the regularity condition or $RC(\alpha,\beta,\epsilon)$ if for all vectors $\vct{z}\in E(\epsilon)$ we have
\begin{align}
\label{regcond}
\big\langle \nabla \mathcal{L}_I(\vct{z}),\vct{z}-\vct{x}\big\rangle  \ge \frac{1}{\alpha}\twonorm{\vct{z}-\vct{x}}^2+\frac{1}{\beta}\twonorm{\nabla \mathcal{L}_I(\vct{z})}^2.
\end{align}
\end{condition}

In the lemma below we show that as long as the regularity condition
holds on $E(\epsilon)$ then Projected Wirtinger Flow starting from an initial
solution in $E(\epsilon)$ converges to a global optimizer at a
geometric rate. Subsequent sections shall establish that this property
holds.

\begin{lemma}
\label{convergence}
Assume that $\mathcal{L}_I$ obeys RC$(\alpha,\beta,\epsilon)$ for all $\vct{z} \in
E(\epsilon)$. Furthermore, suppose $\vct{z}_1 \in E(\epsilon)$, and assume
$0<\mu\le {2}/{\beta}$. Consider the following update
\[
  \vct{z}_{\tau+1}=\mathcal{P}_{\mathcal{K}}\left(\vct{z}_\tau-\mu\nabla f(\vct{z}_\tau)\right).
\]
Then for all $\tau$ we have $\vct{z}_\tau\in E(\epsilon)$ and
\begin{align*}
\twonorm{\vct{z}_\tau-\vct{x}}^2\le \left(1-\frac{2\mu}{\alpha}\right)^\tau \twonorm{\vct{z}_0-\vct{x}}^2.
  \end{align*}
\end{lemma}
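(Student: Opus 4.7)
The plan is to run the standard projected gradient descent contraction argument, using the regularity condition as the workhorse in place of strong convexity/Lipschitz gradients, and using convexity of $\mathcal{K}$ to remove the projection step for free.

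First I would set up an induction on $\tau$ showing simultaneously that $\vct{z}_\tau \in E(\epsilon)$ and that $\twonorm{\vct{z}_\tau - \vct{x}}^2 \le (1 - 2\mu/\alpha)^\tau \twonorm{\vct{z}_0 - \vct{x}}^2$. The base case $\tau = 0$ (or $\tau=1$ as stated) is the hypothesis. For the induction step, since Theorem \ref{Ithm} assumes $\mathcal{R}$ is convex, the set $\mathcal{K} = \{\vct{w}: \mathcal{R}(\vct{w}) \le \mathcal{R}(\vct{x})\}$ is convex, and $\vct{x}\in\mathcal{K}$, so Lemma \ref{projcvxthm} gives the key contractivity
\[
\twonorm{\vct{z}_{\tau+1} - \vct{x}} = \twonorm{\mathcal{P}_{\mathcal{K}}\!\left(\vct{z}_\tau - \mu \nabla \mathcal{L}_I(\vct{z}_\tau)\right) - \vct{x}} \le \twonorm{\vct{z}_\tau - \mu \nabla \mathcal{L}_I(\vct{z}_\tau) - \vct{x}}.
\]

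Next I would square and expand the unprojected iterate as
\[
\twonorm{\vct{z}_\tau - \mu \nabla \mathcal{L}_I(\vct{z}_\tau) - \vct{x}}^2 = \twonorm{\vct{z}_\tau - \vct{x}}^2 - 2\mu \langle \nabla \mathcal{L}_I(\vct{z}_\tau), \vct{z}_\tau - \vct{x}\rangle + \mu^2 \twonorm{\nabla \mathcal{L}_I(\vct{z}_\tau)}^2,
\]
and then plug in the regularity condition $RC(\alpha,\beta,\epsilon)$, which is applicable because $\vct{z}_\tau \in E(\epsilon)$ by the inductive hypothesis. This yields
\[
\twonorm{\vct{z}_{\tau+1} - \vct{x}}^2 \le \left(1 - \frac{2\mu}{\alpha}\right)\twonorm{\vct{z}_\tau - \vct{x}}^2 + \left(\mu^2 - \frac{2\mu}{\beta}\right)\twonorm{\nabla \mathcal{L}_I(\vct{z}_\tau)}^2.
\]
The assumption $0 < \mu \le 2/\beta$ makes the second coefficient nonpositive, so the cross-term is discarded and the one-step contraction $\twonorm{\vct{z}_{\tau+1} - \vct{x}}^2 \le (1 - 2\mu/\alpha)\twonorm{\vct{z}_\tau - \vct{x}}^2$ drops out.

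It remains to verify that $\vct{z}_{\tau+1}$ indeed lies in $E(\epsilon)$ so the induction can continue. Membership in $\mathcal{K}$ is automatic since $\vct{z}_{\tau+1}$ is defined as a projection onto $\mathcal{K}$, and the contraction just established gives $\twonorm{\vct{z}_{\tau+1}-\vct{x}} \le \twonorm{\vct{z}_\tau - \vct{x}} \le \epsilon$, which closes the induction. Iterating $\tau$ times then gives the stated geometric decay. There is no real obstacle here: the heavy lifting has been pushed into the regularity condition, whose verification (via the Local Curvature and Local Smoothness conditions) is the genuinely substantive task and is deferred to Sections \ref{pfLCC} and \ref{pfLSC}.
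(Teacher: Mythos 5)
Your proposal is correct and follows essentially the same route as the paper's proof: expand the squared distance of the unprojected iterate, apply the regularity condition, use $\mu \le 2/\beta$ to drop the gradient-norm term, and invoke contractivity of projection onto the convex set $\mathcal{K}$ (Lemma \ref{projcvxthm}) to pass to the projected iterate and close the induction by showing $\vct{z}_{\tau+1} \in E(\epsilon)$. The only cosmetic difference is the order of operations (you apply projection contractivity before expanding the square, whereas the paper expands first and projects afterward), which has no substantive effect.
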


\begin{proof}
  The proof is similar to a related proof in the Wirtinger Flow paper \cite{WF}. We prove that if $\vct{z}\in E(\epsilon)$ then for all $0<\mu\le
  {2}/{\beta}$
\[
  \vct{z}_+=\vct{z}-\mu\nabla f(\vct{z})
\]
obeys
\begin{align}
\label{iter}
\twonorm{\vct{z}_+-\vct{x}}^2\le\left(1-\frac{2\mu}{\alpha}\right)\twonorm{\vct{z}-\vct{x}}^2.
\end{align}
The latter implies that if $\twonorm{\vct{z}-\vct{x}}\le \epsilon$, then $\twonorm{\vct{z}_+-\vct{x}}\le \epsilon$. Combining the latter with the fact that projection onto convex sets are contractive (Lemma \ref{projcvxthm}) we conclude that
\begin{align}
\label{secondind}
\twonorm{\mathcal{P}_{\mathcal{K}}(\vct{z}_+)-\vct{x}}=\twonorm{\mathcal{P}_{\mathcal{K}}(\vct{z}_+)-\mathcal{P}(\vct{x})}\le\twonorm{\vct{z}_+-\vct{x}}\le\twonorm{\vct{z}-\vct{x}}\le \epsilon.
\end{align}
Also by the definition of $\mathcal{P}_{\mathcal{K}}$ we have $\mathcal{R}(\mathcal{P}_{\mathcal{K}}(\vct{z}_+))\le \mathcal{R}(\vct{x})$. Therefore, if $\vct{z}\in E(\epsilon)$ then we also have $\mathcal{P}_{\mathcal{K}}(\vct{z}_+)\in
E(\epsilon)$. The lemma follows by inductively applying
\eqref{iter} and \eqref{secondind}. Now let us demonstrate how \eqref{iter} follows from simple algebraic manipulations together with the
regularity condition \eqref{regcond}. To this aim note that
\begin{align*}
\twonorm{\vct{z}_{+}-\vct{x}}^2=&\twonorm{\vct{z}-\vct{x}-\mu\nabla \mathcal{L}_I(\vct{z})}^2\nonumber\\
=&\twonorm{\vct{z}-\vct{x}}^2-2\mu \big\langle \nabla \mathcal{L}_I(\vct{z}),\left(\vct{z}-\vct{x}\right)\big\rangle+\mu^2\twonorm{\nabla \mathcal{L}_I(\vct{z})}^2
\\
\le& \twonorm{\vct{z}-\vct{x}}^2 -2\mu\left(\frac{1}{\alpha}\twonorm{\vct{z}-\vct{x}}^2+\frac{1}{\beta}\twonorm{\nabla \mathcal{L}_I(\vct{z})}^2\right)+\mu^2\twonorm{\nabla \mathcal{L}_I(\vct{z})}^2\\
=&\left(1-\frac{2\mu}{\alpha}\right)\twonorm{\vct{z}-\vct{x}}^2+\mu\left(\mu-\frac{2}{\beta}\right)\twonorm{\nabla \mathcal{L}_I(\vct{z})}^2\\
\le&\left(1-\frac{2\mu}{\alpha}\right)\twonorm{\vct{z}-\vct{x}}^2,
\end{align*}
where the last line follows from $\mu\le {2}/{\beta}$. This concludes the proof.
\end{proof}

For any $\vct{z}\in E(\epsilon)$, we need to show that
\begin{align}
\label{tempRG}
\big\langle \nabla \mathcal{L}_I(\vct{z}),\vct{z}-\vct{x}\big\rangle \ge \frac{1}{\alpha}\twonorm{\vct{z}-\vct{x}}^2+\frac{1}{\beta}\twonorm{\nabla \mathcal{L}_I(\vct{z})}^2.
\end{align} 
We prove that \eqref{tempRG} holds with $\epsilon=\frac{1}{8}$ by establishing that our gradient
satisfies the local smoothness and local curvature conditions defined
below. Combining both these two properties gives \eqref{tempRG}.

\begin{condition}[Local Curvature Condition] We say that the function
  $\mathcal{L}_I$ satisfies the local curvature condition or
  $LCC(\alpha,\epsilon,\delta)$ if for all vectors $\vct{z}\in
  E(\epsilon)$,
\begin{align}
\label{LCC}
\big\langle \nabla \mathcal{L}_I(\vct{z}),\vct{z}-\vct{x}\big\rangle  \ge \left(\frac{1}{\alpha}+\lambda\right)\twonorm{\vct{z}-\vct{x}}^2+\frac{\gamma}{m}\sum_{r=1}^m\abs{\vct{a}_r^*(\vct{z}-\vct{x})}^4.
\end{align}

\end{condition}
This condition essentially states that the function curves
sufficiently upwards (along most directions) near the curve of global
optimizers.
\begin{condition}[Local Smoothness Condition] We say that the function $\mathcal{L}_I$ satisfies the local smoothness condition or $LSC(\beta,\epsilon,\delta)$ if for all vectors $\vct{z}\in E(\epsilon)$ we have
\begin{align}
\label{LSC}
\twonorm{\nabla \mathcal{L}_I(\vct{z})}^2\le\beta\left(\lambda\twonorm{\vct{z}-\vct{x}}^2+\frac{\gamma}{m}\sum_{r=1}^m\abs{\vct{a}_r^*(\vct{z}-\vct{x})}^4\right).
\end{align}
\end{condition}
This condition essentially states that the gradient of the function is
well behaved (the function does not vary too much) near the curve of
global optimizers.

\subsubsection{Proof of the local curvature condition}
\label{pfLCC}
For any $\vct{z}\in E(\epsilon)$, we want to prove the local curvature
condition \eqref{LCC}. Recall that
\begin{align*}
\nabla \mathcal{L}_I(\vct{z})=\frac{1}{m}\sum_{r=1}^m\left(\abs{\langle\vct{a}_r,\vct{z}\rangle}^2-y_r\right)(\vct{a}_r\vct{a}_r^*)\vct{z},
\end{align*} 
and define $\vct{h}:=\vct{z}-\vct{x}$. To establish \eqref{LCC} it suffices to prove that 
\begin{multline}
\label{tempha}
\frac{1}{m}\sum_{r=1}^m \left(2(\vct{h}^*\vct{a}_r\vct{a}_r^*\vct{x})^2+3(\vct{h}^*\vct{a}_r\vct{a}_r^*\vct{x})|\vct{a}_r^*\vct{h}|^2+|\vct{a}_r^*\vct{h}|^4\right)-\left(\frac{\gamma}{m}\sum_{r=1}^m\abs{\vct{a}_r^*\vct{h}}^4\right)\geq \left(\frac{1}{\alpha}+\lambda\right)
\twonorm{\vct{h}}^2,
\end{multline}
holds for all $\vct{h}$ satisfying $\|\vct{h}\|_2\leq \epsilon$. Equivalently, we only need to prove that for all $\vct{h}$ satisfying $\twonorm{\vct{h}}\le\epsilon$ we have
\begin{align}
\label{temphb}
\frac{1}{m}\sum_{r=1}^m \left(2(\vct{h}^*\vct{a}_r\vct{a}_r^*\vct{x})^2+3(\vct{h}^*\vct{a}_r\vct{a}_r^*\vct{x})|\vct{a}_r^*\vct{h}|^2+(1-\gamma)|\vct{a}_r^*\vct{h}|^4\right)
\geq \left(\frac{1}{\alpha}+\lambda\right)
\twonorm{\vct{h}}^2.
\end{align}
Define the following cone which is the cone of descent of $\mathcal{R}$ at $\vct{x}$
\begin{align*}
\mathcal{C}_{\mathcal{R}}(\vct{x})=\{\vct{h}:\text{ }\mathcal{R}(\vct{x}+c\vct{h})\le \mathcal{R}(\vct{x})\text{ for some }c>0\}.
\end{align*}
Now note that since $\vct{h}=\vct{z}-\vct{x}$ and $\mathcal{R}(\vct{z})\le \mathcal{R}(\vct{x})$, therefore $\vct{h}\in\mathcal{C}_{\mathcal{R}}(\vct{x})$. Note that by Lemma \ref{GordonExtra} as long as we have
\begin{align*}
m\ge 1600\cdot\max\left(\frac{\omega^2(\mathcal{C}_{\mathcal{R}}\cap \mathbb{S}^{n-1})}{\delta^2}\log n,\frac{1}{\delta^2}\right)=1600\cdot\max\left(\frac{m_0}{\delta^2}\log n,\frac{1}{\delta^2}\right),
\end{align*}
then
\begin{align*}
\frac{1}{m}\sum_{r=1}^m(\vct{a}_r^*\vct{h})^2(\vct{a}_r^*\vct{x})^2\le (1+\delta)\twonorm{\vct{h}}^2+2(\vct{h}^*\vct{x})^2,
\end{align*}
holds with probability at least $1-2/m-1/n-e^{\gamma_1 m}-7e^{-\gamma_2\delta^2m}$ with $\gamma_1$ and $\gamma_2$ fixed numerical constants. Therefore, to establish the local curvature condition \eqref{LCC} it suffices to show
\begin{align}
\label{temphc}
\frac{1}{m}\sum_{r=1}^m \left((\eta+2)(\vct{h}^*\vct{a}_r\vct{a}_r^*\vct{x})^2+3(\vct{h}^*\vct{a}_r\vct{a}_r^*\vct{x})|\vct{a}_r^*\vct{h}|^2+(1-\gamma)|\vct{a}_r^*\vct{h}|^4\right)
\geq \left(\frac{1}{\alpha}+\lambda+\eta(1+\delta)\right)\twonorm{\vct{h}}^2+2\eta(\vct{h}^*\vct{x})^2.
\end{align}
We pick $\eta$ such that $2\sqrt{\eta+2}\sqrt{1-\gamma}=3$. This is equivalent to establishing that
\begin{align}
\label{eq:reduced_assumption1_altt}
\frac{1}{m}\sum_{r=1}^m& \left(\frac{3}{2}\frac{1}{\sqrt{1-\gamma}}(\vct{h}^*\vct{a}_r\vct{a}_r^*\vct{x})+\sqrt{1-\gamma}\abs{\vct{a}_r^*\vct{h}}^2\right)^2\nonumber\\
&\ge\left(\frac{1}{\alpha}+\lambda-2(1+\delta)+\frac{9}{4(1-\gamma)}(1+\delta)\right)\twonorm{\vct{h}}^2+2\left(\frac{9}{4(1-\gamma)}-2\right)(\vct{h}^*\vct{x})^2.
\end{align}
We note that
\begin{align*}
\frac{3}{2}\frac{1}{\sqrt{1-\gamma}}(\vct{h}^*\vct{a}_r\vct{a}_r^*\vct{x})+\sqrt{1-\gamma}\abs{\vct{a}_r^*\vct{h}}^2=\left(\frac{3}{2}\frac{1}{\sqrt{1-\gamma}}\vct{h}\right)^*\vct{a}_r\vct{a}_r^*\left(\vct{x}+\frac{2}{3}(1-\gamma)\vct{h}\right).
\end{align*}
%
Therefore, it suffices to prove
\begin{align}
\label{eq:reduced_assumption1_alttt}
\frac{1}{m}\sum_{r=1}^m \left(\vct{h}^*\vct{a}_r\vct{a}_r^*\left(\vct{x}+\frac{2}{3}(1-\gamma)\vct{h}\right)\right)^2\ge\left(\frac{4}{9}(1-\gamma)\left(\frac{1}{\alpha}+\lambda-2(1+\delta)\right)+(1+\delta)\right)\twonorm{\vct{h}}^2+\frac{2}{9}(1+8\gamma)(\vct{h}^*\vct{x})^2.
\end{align}
Noting that
\begin{align*}
\frac{1}{m}\sum_{r=1}^m \left(\vct{h}^*\vct{a}_r\vct{a}_r^*\left(\vct{x}+\frac{2}{3}(1-\gamma)\vct{h}\right)\right)^2\ge&\left(\frac{1}{m}\sum_{r=1}^m \abs{\vct{h}^*\vct{a}_r\vct{a}_r^*\left(\vct{x}+\frac{2}{3}(1-\gamma)\vct{h}\right)}\right)^2,
\end{align*}
it suffices to prove
\begin{align}
\label{eq:reduced_assumption1_alt}
\frac{1}{m}\sum_{r=1}^m \abs{\vct{h}^*\vct{a}_r\vct{a}_r^*\left(\vct{x}+\frac{2}{3}(1-\gamma)\vct{h}\right)}\ge\sqrt{\left(\frac{4}{9}(1-\gamma)\left(\frac{1}{\alpha}+\lambda-2-2\delta\right)+(1+\delta)\right)\twonorm{\vct{h}}^2+\frac{2}{9}(1+8\gamma)(\vct{h}^*\vct{x})^2}.
\end{align}
To establish \eqref{eq:reduced_assumption1_alt} we shall utilize Lemma \ref{RIP1}. To this aim note that since $\vct{h}=\vct{z}-\vct{x}$ and $\mathcal{R}(\vct{z})\le \mathcal{R}(\vct{x})$, therefore $\vct{h}\in\mathcal{C}_{\mathcal{R}}(\vct{x})$. Now define the set
\begin{align*}
\mathcal{T}=\Bigg\{\frac{\frac{(\gamma+\frac{1}{2})}{(1-\gamma)}\vct{x}+\vct{z}}{\twonorm{\frac{(\gamma+\frac{1}{2})}{(1-\gamma)}\vct{x}+\vct{z}}}:\quad \mathcal{R}(\vct{z})\le \mathcal{R}(\vct{x})\quad\text{and}\quad \twonorm{\vct{z}-\vct{x}}\le\epsilon\Bigg\},
\end{align*}
and set $\mathcal{C}'=$cone$(\mathcal{T})$. Note that
\begin{align*}
\frac{3}{2}\frac{1}{(1-\gamma)}\left(\vct{x}+\frac{2}{3}(1-\gamma)\vct{h}\right)=\frac{(\gamma+\frac{1}{2})}{(1-\gamma)}\vct{x}+\vct{z}\in\mathcal{C}'\quad\Rightarrow\quad\left(\vct{x}+\frac{2}{3}(1-\gamma)\vct{h}\right)\in\mathcal{C}'.
\end{align*}
Also note that for all $\vct{z}\in E(\epsilon)$ with $\epsilon<\frac{3}{2}\frac{1}{1-\gamma}$
\begin{align}
\label{mytemp}
\twonorm{\frac{(\gamma+\frac{1}{2})}{(1-\gamma)}\vct{x}+\vct{z}}=\twonorm{\frac{3}{2}\frac{1}{(1-\gamma)}\vct{x}+\vct{z}-\vct{x}}\ge\frac{3}{2}\frac{1}{(1-\gamma)}-\epsilon> 0.
\end{align}
Similarly, for all $\vct{z}\in E(\epsilon)$
\begin{align}
\label{mytemp2}
\twonorm{\frac{(\gamma+\frac{1}{2})}{(1-\gamma)}\vct{x}+\vct{z}}=\twonorm{\frac{3}{2}\frac{1}{(1-\gamma)}\vct{x}+\vct{z}-\vct{x}}\le\frac{3}{2}\frac{1}{(1-\gamma)}+\epsilon.
\end{align}
Define
\begin{align*}
\mathcal{T}'=\Big\{\frac{(\gamma+\frac{1}{2})}{(1-\gamma)}\vct{x}+\vct{z}:\text{ }\mathcal{R}(\vct{z})\le \mathcal{R}(\vct{x})\quad\text{and}\quad\twonorm{\vct{z}-\vct{x}}\le\epsilon\Big\}.
\end{align*}
Now set $\vct{v}=\underset{\vct{u}\in\mathcal{T}}{\arg\max}$ $\vct{a}^*\vct{u}$. By definiton of $\mathcal{T}$, $\vct{v}$ is of the form $\vct{v}=\frac{\frac{(\gamma+\frac{1}{2})}{(1-\gamma)}\vct{x}+\vct{z}_{\vct{v}}}{\twonorm{\frac{(\gamma+\frac{1}{2})}{(1-\gamma)}\vct{x}+\vct{z}_{\vct{v}}}}$ for some $\vct{z}_{\vct{v}}\in E(\epsilon)$. If $\vct{a}^*\vct{v}\ge 0$, using \eqref{mytemp} we have
\begin{align}
\label{mytemp3}
\vct{a}^*\vct{v}=\vct{a}^*\left(\frac{\frac{(\gamma+\frac{1}{2})}{(1-\gamma)}\vct{x}+\vct{z}_{\vct{v}}}{\twonorm{\frac{(\gamma+\frac{1}{2})}{(1-\gamma)}\vct{x}+\vct{z}_{\vct{v}}}}\right)\le \frac{1}{\left(\frac{3}{2}\frac{1}{(1-\gamma)}-\epsilon\right)}\vct{a}^*\left(\frac{(\gamma+\frac{1}{2})}{(1-\gamma)}\vct{x}+\vct{z}_{\vct{v}}\right)\le\frac{1}{\left(\frac{3}{2}\frac{1}{(1-\gamma)}-\epsilon\right)}\left(\underset{\vct{u}\in\mathcal{T}'}{\sup}\text{ }\vct{a}^*\vct{u}\right).
\end{align}
On the other hand if $\vct{a}^*\vct{v}< 0$, using \eqref{mytemp2} we have
\begin{align}
\label{mytemp4}
\vct{a}^*\vct{v}=\vct{a}^*\left(\frac{\frac{(\gamma+\frac{1}{2})}{(1-\gamma)}\vct{x}+\vct{z}_{\vct{v}}}{\twonorm{\frac{(\gamma+\frac{1}{2})}{(1-\gamma)}\vct{x}+\vct{z}_{\vct{v}}}}\right)\le \frac{1}{\left(\frac{3}{2}\frac{1}{(1-\gamma)}+\epsilon\right)}\vct{a}^*\left(\frac{(\gamma+\frac{1}{2})}{(1-\gamma)}\vct{x}+\vct{z}_{\vct{v}}\right)\le\frac{1}{\left(\frac{3}{2}\frac{1}{(1-\gamma)}+\epsilon\right)}\left(\underset{\vct{u}\in\mathcal{T}'}{\sup}\text{ }\vct{a}^*\vct{u}\right).
\end{align}
Inequalities \eqref{mytemp3} and \eqref{mytemp4} immediately imply
\begin{align}
\label{mytemp5}
\max(\vct{a}^*\vct{v},0)\le \frac{1}{\left(\frac{3}{2}\frac{1}{(1-\gamma)}-\epsilon\right)}\cdot\max\left(\underset{\vct{u}\in\mathcal{T}'}{\sup}\text{ }\vct{a}^*\vct{u},0\right)
\quad\text{and}\quad\min(\vct{a}^*\vct{v},0)\le\frac{1}{\left(\frac{3}{2}\frac{1}{(1-\gamma)}+\epsilon\right)}\cdot\min\left(\underset{\vct{u}\in\mathcal{T}'}{\sup}\text{ }\vct{a}^*\vct{u},0\right).
\end{align}
By \eqref{mytemp2}, $\sigma(\mathcal{T}')=\underset{\vct{v}\in\mathcal{T}'}{\sup}\twonorm{\vct{v}}\le \frac{3}{2}\frac{1}{(1-\gamma)}+\epsilon$. Thus, using \eqref{mytemp5} we have
\begin{align}
\label{subsetineq}
\omega(\mathcal{T})=&\E[\underset{\vct{v}\in\mathcal{T}}{\sup}\text{ }\vct{a}^*\vct{v}]\nonumber\\
=&\E[\underset{\vct{v}\in\mathcal{T}}{\sup}\text{ }\left(\max(\vct{a}^*\vct{v},0)+\min(\vct{a}^*\vct{v},0)\right)]\nonumber\\
\le&\E\Bigg[\frac{1}{\left(\frac{3}{2}\frac{1}{(1-\gamma)}-\epsilon\right)}\cdot\max\left(\underset{\vct{u}\in\mathcal{T}'}{\sup}\text{ }\vct{a}^*\vct{u},0\right)+\frac{1}{\left(\frac{3}{2}\frac{1}{(1-\gamma)}+\epsilon\right)}\cdot\min\left(\underset{\vct{u}\in\mathcal{T}'}{\sup}\text{ }\vct{a}^*\vct{u},0\right)\Bigg]\nonumber\\
=&\E\Bigg[\frac{2\epsilon}{\left(\frac{9}{4}\frac{1}{(1-\gamma)^2}-\epsilon^2\right)}\cdot\max\left(\underset{\vct{u}\in\mathcal{T}'}{\sup}\text{ }\vct{a}^*\vct{u},0\right)+\frac{1}{\left(\frac{3}{2}\frac{1}{(1-\gamma)}+\epsilon\right)}\cdot\underset{\vct{u}\in\mathcal{T}'}{\sup}\text{ }\vct{a}^*\vct{u}\Bigg]\nonumber\\
=&\frac{2\epsilon}{\left(\frac{9}{4}\frac{1}{(1-\gamma)^2}-\epsilon^2\right)}\cdot\E\Bigg[\max\left(\underset{\vct{u}\in\mathcal{T}'}{\sup}\text{ }\vct{a}^*\vct{u},0\right)\Bigg]+\frac{1}{\left(\frac{3}{2}\frac{1}{(1-\gamma)}+\epsilon\right)}\cdot\omega(\mathcal{T}')\nonumber\\
=&\frac{2\epsilon}{\left(\frac{9}{4}\frac{1}{(1-\gamma)^2}-\epsilon^2\right)}\cdot\int_0^\infty \mathbb{P}\Bigg\{\max\left(\underset{\vct{u}\in\mathcal{T}'}{\sup}\text{ }\vct{a}^*\vct{u},0\right)\ge t\Bigg\}dt+\frac{1}{\left(\frac{3}{2}\frac{1}{(1-\gamma)}+\epsilon\right)}\cdot\omega(\mathcal{T}')\nonumber\\
=&\frac{2\epsilon}{\left(\frac{9}{4}\frac{1}{(1-\gamma)^2}-\epsilon^2\right)}\cdot\int_0^\infty \mathbb{P}\Big\{\underset{\vct{u}\in\mathcal{T}'}{\sup}\text{ }\vct{a}^*\vct{u}\ge t\Big\}dt+\frac{1}{\left(\frac{3}{2}\frac{1}{(1-\gamma)}+\epsilon\right)}\cdot\omega(\mathcal{T}')\nonumber\\
\le&\frac{2\epsilon}{\left(\frac{9}{4}\frac{1}{(1-\gamma)^2}-\epsilon^2\right)}\cdot\int_0^\infty e^{-\frac{\left(t-\omega(\mathcal{T}')\right)^2}{2\sigma^2(\mathcal{T}')}}dt+\frac{1}{\left(\frac{3}{2}\frac{1}{(1-\gamma)}+\epsilon\right)}\cdot\omega(\mathcal{T}')\nonumber\\
=&\frac{2\epsilon}{\left(\frac{9}{4}\frac{1}{(1-\gamma)^2}-\epsilon^2\right)}\cdot\sqrt{\frac{\pi}{2}}\sigma(\mathcal{T}')\left(\text{erf}\left(\frac{\omega(\mathcal{T}')}{\sqrt{2}\sigma(\mathcal{T}')}\right)+1\right)+\frac{1}{\left(\frac{3}{2}\frac{1}{(1-\gamma)}+\epsilon\right)}\cdot\omega(\mathcal{T}')\nonumber\\
\le&\frac{4\epsilon}{\left(\frac{9}{4}\frac{1}{(1-\gamma)^2}-\epsilon^2\right)}\cdot\sqrt{\frac{\pi}{2}}\sigma(\mathcal{T}')+\frac{1}{\left(\frac{3}{2}\frac{1}{(1-\gamma)}+\epsilon\right)}\cdot\omega(\mathcal{T}')\nonumber\\
\le&\frac{1}{\left(\frac{3}{2}\frac{1}{(1-\gamma)}-\epsilon\right)}\cdot\left(\sqrt{8\pi}\epsilon+\omega(\mathcal{T}')\right).
\end{align}
Also using the fact that $\vct{z}\in E(\epsilon)$ we have
\begin{align*}
\omega(\mathcal{T}')=\E[\underset{\vct{u}\in\mathcal{T}'}{\sup}\text{ }\vct{a}^*\vct{u}]=&\E\Big[\underset{\vct{z}\in E(\epsilon)}{\sup} \text{ }\vct{a}^*\left(\frac{(\gamma+\frac{1}{2})}{(1-\gamma)}\vct{x}+\vct{z}\right)\Big]\\
=&\E\Big[\underset{\vct{z}\in E(\epsilon)}{\sup} \text{ }\vct{a}^*\left(\frac{3}{2}\frac{1}{(1-\gamma)}\vct{x}+\vct{z}-\vct{x}\right)\Big]\\
\le&\E\Big[\underset{\vct{z}\in E(\epsilon)}{\sup} \text{ }\vct{a}^*\left(\frac{3}{2}\frac{1}{(1-\gamma)}\vct{x}\right)+\underset{\vct{z}\in E(\epsilon)}{\sup} \text{ }\vct{a}^*\left(\vct{z}-\vct{x}\right)\Big]\\
=&\E\Big[\frac{3}{2}\frac{1}{(1-\gamma)}\vct{a}^*\vct{x}+\underset{\vct{z}\in E(\epsilon)}{\sup} \text{ }\vct{a}^*\left(\vct{z}-\vct{x}\right)\Big]\\
=&\E\Big[\underset{\vct{z}\in E(\epsilon)}{\sup} \text{ }\vct{a}^*\left(\vct{z}-\vct{x}\right)\Big]\\
\le&\epsilon\cdot\E\Big[\underset{\vct{u}\in\mathcal{C}_{\mathcal{R}}(\vct{x})\cap\mathcal{B}^{n}}{\sup} \text{ }\vct{a}^*\vct{u}\Big]\\
=&\epsilon\cdot\omega\left(\mathcal{C}_{\mathcal{R}}(\vct{x})\cap\mathbb{S}^{n-1}\right)
\end{align*}
Now using \eqref{subsetineq} together with the above we have
\begin{align*}
\omega(\mathcal{C}'\cap \mathbb{S}^{n-1})=\omega(\mathcal{T})\le\frac{1}{\left(\frac{3}{2}\frac{1}{(1-\gamma)}+\epsilon\right)}\cdot\left(\sqrt{8\pi}\epsilon+\omega(\mathcal{T}')\right)\le&\frac{\epsilon}{\left(\frac{3}{2}\frac{1}{(1-\gamma)}+\epsilon\right)}\cdot\left(\sqrt{8\pi}+\omega(\mathcal{C}_{\mathcal{R}}(\vct{x})\cap\mathbb{S}^{n-1})\right)\\
\le&\left(\sqrt{8\pi}+\omega\left(\mathcal{C}_{\mathcal{R}}(\vct{x})\cap\mathbb{S}^{n-1}\right)\right).
\end{align*}
Therefore as long as $m\ge \max\left(c\cdot\omega^2\left(\mathcal{C}_{\mathcal{R}}(\vct{x})\cap\mathbb{S}^{n-1}\right),1\right)$ for a fixed numerical constant $c$, applying Lemma \ref{RIP1} with $\vct{u}=\vct{h}$ and $\vct{v}=\vct{x}+\frac{2}{3}(1-\gamma)\vct{h}$ and $\delta=\frac{2}{\pi}\Delta$ with probability at least $1-2e^{-\gamma m}$ we have
\begin{align*}
\frac{1}{m}\sum_{r=1}^m \abs{\vct{h}^*\vct{a}_r\vct{a}_r^*\left(\vct{x}+\frac{2}{3}(1-\gamma)\vct{h}\right)}\ge& \E\Big[ \abs{\vct{h}^*\vct{a}\vct{a}^*\left(\vct{x}+\frac{2}{3}(1-\gamma)\vct{h}\right)}\Big]-\frac{2}{\pi}\Delta\twonorm{\vct{h}}\twonorm{\vct{x}+\frac{2}{3}(1-\gamma)\vct{h}},\\
\ge&\frac{2}{\pi}(1-\Delta)\twonorm{\vct{h}}\twonorm{\vct{x}+\frac{2}{3}(1-\gamma)\vct{h}},
\end{align*}
where in the last inequality we have applied Lemma \ref{expRIP1}. To prove \eqref{eq:reduced_assumption1_alt} it then suffices to show
\begin{align*}
\frac{4}{\pi^2}(1-\Delta)^2\twonorm{\vct{h}}^2\twonorm{\vct{x}+\frac{2}{3}(1-\gamma)\vct{h}}^2\ge \left(\frac{4}{9}(1-\gamma)\left(\frac{1}{\alpha}+\lambda-2(1+\delta)\right)+(1+\delta)\right)\twonorm{\vct{h}}^2+\frac{2}{9}(1+8\gamma)(\vct{h}^*\vct{x})^2.
\end{align*}
Using the fact that $\twonorm{\vct{x}+\frac{2}{3}(1-\gamma)\vct{h}}\ge 1-\frac{2}{3}(1-\gamma)\epsilon$, it suffices to prove
\begin{align*}
\frac{4}{\pi^2}(1-\Delta)^2\left(1-\frac{2}{3}(1-\gamma)\epsilon\right)^2\ge \left(\frac{4}{9}(1-\gamma)\left(\frac{1}{\alpha}+\lambda-2(1+\delta)\right)+(1+\delta)\right)+\frac{2}{9}(1+8\gamma).
\end{align*}
The latter holds as long as
\begin{align}
\label{myineq}
\epsilon\le \frac{3}{2(1-\gamma)}\left(1- \frac{\pi}{2}\frac{1}{(1-\Delta)}\sqrt{\left(\frac{4}{9}(1-\gamma)\left(\frac{1}{\alpha}+\lambda-2(1+\delta)\right)+(1+\delta)\right)+\frac{2}{9}(1+8\gamma)}\right).
\end{align}
Using the values
\begin{align*}
\alpha= 250,\quad\lambda= \frac{1}{250},\quad\gamma=\frac{1}{1000},\quad\delta= \frac{1}{1000},\quad\Delta=\frac{1}{1000},
\end{align*}
the inequality in \eqref{myineq} holds as long as $\epsilon\le\frac{1}{8}$, completing the proof. 

\subsubsection{Proof of the local smoothness condition}
\label{pfLSC}
For any $\vct{z}\in E(\epsilon)$, we want to prove \eqref{LSC}, which
is equivalent to proving that for all $\vct{w}\in\R^n$ obeying
$\twonorm{\vct{w}}=1$, we have
\[
\left|\left(\nabla \mathcal{L}_I(\vct{z})\right)^*\vct{w}\right|^2\leq \beta\left(\lambda\twonorm{\vct{z}-\vct{x}}^2+\frac{\gamma}{m}\sum_{r=1}^m\abs{\vct{a}_r^*(\vct{z}-\vct{x})}^4\right).
\]
Recall that 
\begin{align*}
\nabla \mathcal{L}_I(\vct{z})=\frac{1}{m}\sum_{r=1}^m\left(\abs{\langle\vct{a}_r,\vct{z}\rangle}^2-y_r\right)(\vct{a}_r\vct{a}_r^*)\vct{z}
\end{align*} 
and define 
\begin{align*}
g(\vct{h},\vct{w})=\frac{1}{m}\sum_{r=1}^m \bigg(&2(\vct{h}^*\vct{a}_r)(\vct{w}^*\vct{a}_r)\abs{\vct{a}_r^*\vct{x}}^2+3|\vct{a}_r^*\vct{h}|^2(\vct{w}^*\vct{a}_r)(\vct{a}_r^*\vct{x})+(\vct{a}_r^*\vct{h})^3(\vct{w}^*\vct{a}_r)\bigg).
\end{align*}
Define $\vct{h}:=\vct{z}-\vct{x}$, to establish \eqref{LSC} it suffices to prove that
\begin{align}
\label{temphaa}
\abs{g(\vct{h},\vct{w})}^2\leq \beta\left( \lambda\twonorm{\vct{h}}^2+\frac{\gamma}{m}\sum_{r=1}^m\abs{\vct{a}_r^*\vct{h}}^4\right).
\end{align}
holds for all $\vct{h}$ and $\vct{w}$ satisfying $\twonorm{\vct{h}}\leq \epsilon$ and $\twonorm{\vct{w}}=1$. 
%
Note that since $(a+b+c)^2\le 3(a^2+b^2+c^2)$
\begin{align}
\label{intermediateLSC}
\abs{g(\vct{h},\vct{w})}^2&\le\Bigg|\frac{1}{m}\sum_{r=1}^m \bigg(2|\vct{h}^*\vct{a}_r||\vct{w}^*\vct{a}_r||\vct{a}_r^*\vct{x}|^2+3 |\vct{h}^*\vct{a}_r|^2|\vct{a}_r^*\vct{x}||\vct{w}^*\vct{a}_r|+|\vct{a}_r^*\vct{h}|^3|\vct{w}^*\vct{a}_r|\bigg)\Bigg|^2\nonumber
\\
&\le 3\Bigg|\frac{2}{m}\sum_{r=1}^m |\vct{h}^*\vct{a}_r||\vct{w}^*\vct{a}_r||\vct{a}_r^*\vct{x}|^2\Bigg|^2+3\Bigg|\frac{3}{m}\sum_{r=1}^m|\vct{h}^*\vct{a}_r|^2|\vct{a}_r^*\vct{x}||\vct{w}^*\vct{a}_r|\Bigg|^2+3\Bigg|\frac{1}{m}\sum_{r=1}^m|\vct{a}_r^*\vct{h}|^3|\vct{w}^*\vct{a}_r|\Bigg|^2\nonumber\\
& := 3(I_1 + I_2 + I_3). 
\end{align}
We now bound each of the terms on the right-hand side. For the first term, using Cauchy-Schwarz and applying Lemma \ref{gordontype} and Lemma \ref{GordonExtra} we have
\begin{align}
\label{LSCterm1}
I_1  \le&
\left(\frac{1}{m}\sum_{r=1}^m\abs{\vct{a}_r^*\vct{x}}^2\abs{\vct{a}_r^*\vct{w}}^2\right)\left(\frac{1}{m}\sum_{r=1}^m\abs{\vct{a}_r^*\vct{x}}^2\abs{\vct{a}_r^*\vct{h}}^2\right)\nonumber\\
\le& 6n \left(\frac{1}{m}\sum_{r=1}^m\abs{\vct{a}_r^*\vct{x}}^2\right)\left(\frac{1}{m}\sum_{r=1}^m\abs{\vct{a}_r^*\vct{x}}^2\abs{\vct{a}_r^*\vct{h}}^2\right)\nonumber\\
\le&6n \left(\frac{1}{m}\sum_{r=1}^m\abs{\vct{a}_r^*\vct{x}}^2\right)\left((1+\delta)\twonorm{\vct{h}}^2+2(\vct{x}^*\vct{h})^2\right)\nonumber\\
\le&24(1+\delta)n\twonorm{\vct{h}}^2\nonumber\\
\le&48n\twonorm{\vct{h}}^2.
\end{align}
Similarly, for the second term, we have
\begin{align}
\label{LSCterm2}
I_2 \le& \left(\frac{1}{m}\sum_{r=1}^m\abs{\vct{a}_r^*\vct{h}}^4\right)\left(\frac{1}{m}\sum_{r=1}^m\abs{\vct{a}_r^*\vct{w}}^2\abs{\vct{a}_r^*\vct{x}}^2\right),\nonumber\\
\le& 12n\left(\frac{1}{m}\sum_{r=1}^m\abs{\vct{a}_r^*\vct{h}}^4\right).
\end{align}
Finally, for the third term we use the Cauchy-Schwarz inequality together with  Lemma \ref{gordontype} to derive 
\begin{align}
\label{LSCterm3}
I_3
 \leq\left(\frac{1}{m}\sum_{r=1}^m\abs{\vct{a}_r^*\vct{h}}^3\max_r\twonorm{\vct{a}_r}\right)^2 & \leq
6n\left(\frac{1}{m}\sum_{r=1}^m\abs{\vct{a}_r^*\vct{h}}^3\right)^2,
\nonumber\\
& \le 6n\left(\frac{1}{m}\sum_{r=1}^m\abs{\vct{a}_r^*\vct{h}}^4\right)
\left(\frac{1}{m}\sum_{r=1}^m\abs{\vct{a}_r^*\vct{h}}^2\right),\nonumber\\
& \le 6n(1+\delta)\twonorm{\vct{h}}^2\left(\frac{1}{m}\sum_{r=1}^m\abs{\vct{a}_r^*\vct{h}}^4\right),\nonumber\\
& \le 12n\twonorm{\vct{h}}^2\left(\frac{1}{m}\sum_{r=1}^m\abs{\vct{a}_r^*\vct{h}}^4\right),\nonumber\\
& \le 12n\left(\frac{1}{m}\sum_{r=1}^m\abs{\vct{a}_r^*\vct{h}}^4\right),\nonumber\\
& \le \frac{3}{8}n\left(\frac{1}{m}\sum_{r=1}^m\abs{\vct{a}_r^*\vct{h}}^4\right).
\end{align}
We now plug these inequalities into \eqref{intermediateLSC} and get 
\begin{align}
\label{intermediateLSC2}
\abs{g(\vct{h},\vct{w})}^2&\le 48n\twonorm{\vct{h}}^2+12n\frac{1}{m}\sum_{r=1}^m\abs{\vct{a}_r^*\vct{h}}^4+\frac{3}{8}n\frac{1}{m}\sum_{r=1}^m\abs{\vct{a}_r^*\vct{h}}^4,\nonumber\\
&\le \beta\left(\lambda\twonorm{\vct{h}}^2+\frac{\gamma}{m}\sum_{r=1}^m\abs{\vct{a}_r^*\vct{h}}^4\right), 
\end{align}
which completes the proof of \eqref{temphaa} and, in turn, establishes 
the local smoothness condition in \eqref{LSC}. However, the last line
of \eqref{intermediateLSC2} holds as long as
\begin{align}
\label{betabound}
\beta\ge \max \left( \frac{48}{\lambda},
  \frac{13}{\gamma}\right)n=13000n,
\end{align}
completing the proof.


\subsection{Convergence analysis for amplitude-based Wirtinger Flows}
\label{secAPWF}
In this section we shall prove Theorem \ref{Athm}. Throughout we use the shorthand $\mathcal{C}$ to denote the descent cone of $\mathcal{R}$ at $\vct{x}$, i.e.~$\mathcal{C}=\mathcal{C}_{\mathcal{R}}(\vct{x})$. Note that \eqref{initAPWF2} guarantees that either $\twonorm{\vct{z}_0-\vct{x}}$ or $\twonorm{\vct{z}_0+\vct{x}}$ is small. Throughout the proof without loss of generality we assume  $\twonorm{\vct{z}_0-\vct{x}}$ is the smaller one. To introduce our general convergence analysis we begin again by defining 
\begin{align*}
E(\epsilon)=\big\{\vct{z}\in\R^n:\mathcal{R}(\vct{z})\le \mathcal{R}(\vct{x}),\text{ }\twonorm{\vct{z}-\vct{x}}\le\epsilon\big\}.
\end{align*}
Note that when condition \eqref{initAPWF2} holds the next iterate $\vct{z}_1$ obeys $\vct{z}_1\in E(\epsilon)$ with $\epsilon=2/15$. The reason is that when the sub-level sets of the regularizer is a closed set (not necessarily convex) by Lemma \ref{projcvxthm} equation \eqref{projncvxthm1} we have
\begin{align*}
\twonorm{\vct{z}_1-\vct{x}}=\twonorm{\mathcal{P}_{\mathcal{K}}\left(\vct{z}_0\right)-\vct{x}}\le 2\twonorm{\vct{z}_0-\vct{x}}\le \epsilon.
\end{align*}
To prove Theorem \ref{Athm} note that if we apply the projected Wirtinger Flow update
\begin{align*}
\vct{z}_{\tau+1}=\mathcal{P}_{\mathcal{K}}\left(\vct{z}_\tau-\nabla\mathcal{L}_A(\vct{z}_\tau)\right),
\end{align*}
then the difference between our iterates and the actual solution $\vct{h}_\tau=\vct{z}_\tau-\vct{x}$ is inside the descent set $\mathcal{D}=\mathcal{K}-\{\vct{x}\}$. Thus we have the following chain of  inequalities
\begin{align}
\label{interpfthm12}
\tn{\vct{h}_{\tau+1}}=\tn{\des_{\tau+1}-\vct{x}}&=\tn{\Pc_{\mathcal{K}}\left(\des_{\tau}-\nabla\mathcal{L}_A(\vct{z}_\tau)\right)-\x},\nonumber\\
&\overset{(a)}{=}\tn{\Pc_{\mathcal{K}-\{\vct{x}\}}\left(\des_{\tau}-\vct{x}-\nabla\mathcal{L}_A(\vct{z}_\tau)\right)},\nonumber\\
&=\tn{\Pc_{\mathcal{D}}\left(\vct{z}_{\tau}-\vct{x}-\nabla\mathcal{L}_A(\vct{z}_\tau)\right)},\nonumber\\
&\overset{(b)}{\leq} 2\tn{\Pc_\Cc\left(\des_{\tau}-\vct{x}-\nabla\mathcal{L}_A(\vct{z}_\tau)\right)},\nonumber\\
&=2\tn{\Pc_\Cc\left(\h_{\tau}-\nabla\mathcal{L}_A(\vct{z}_\tau)\right)},\nonumber\\
&\leq 2\cdot\sup_{\vct{u}\in\Cc\cap\Bc^{n}} \vct{u}^*\left(\h_{\tau}-\nabla\mathcal{L}_A(\vct{z}_\tau)\right).
\end{align}
In the inequalities above (a) follows from Lemma \ref{simplem} and (b) follows from Lemma \ref{prop compare}. To complete the convergence analysis it is then sufficient to prove
\begin{align}
\label{mainintemp}
\sup_{\vct{u}\in\Cc\cap\Bc^{n}} \vct{u}^*\left(\h_{\tau}-\nabla\mathcal{L}_A(\vct{z}_\tau)\right)\le\frac{1}{3}\twonorm{\h_{\tau}}=\frac{1}{3}\twonorm{\vct{z}_\tau-\vct{x}}.
\end{align}
We will instead prove that the following stronger result holds for all $\vct{u}\in\Cc\cap\Bc^{n}$ and $\vct{z}\in E(\epsilon)$
\begin{align}
\label{mainineq}
\vct{u}^*\left(\vct{z}-\vct{x}-\nabla\mathcal{L}_A(\vct{z})\right)\le\frac{1}{3}\twonorm{\vct{z}-\vct{x}}.
\end{align}
The equation \eqref{mainineq} above implies \eqref{mainintemp} which when combined with \eqref{interpfthm12} proves the convergence result of the Theorem (specifically equation \eqref{ratemin}).

The rest of this section is dedicated to proving \eqref{mainineq}. To this aim note that for $\vct{u}\in\mathcal{C}$ we have
\begin{align*}
\langle\nabla \mathcal{L}_A(\vct{z}) ,\vct{u}\rangle=&\frac{1}{m}\sum_{r=1}^m\left(\vct{a}_r^*\vct{z}-\abs{\vct{a}_r^*\vct{x}}\sgn{\vct{a}_r^*\vct{z}}\right)(\vct{a}_r^*\vct{u}),\\
=&\frac{1}{m}\sum_{r=1}^m\left(\vct{a}_r^*\vct{z}\right)\left(\vct{a}_r^*\vct{u}\right)-\frac{1}{m}\sum_{r=1}^m \abs{\vct{a}_r^*\vct{x}}(\vct{a}_r^*\vct{u})\sgn{\vct{a}_r^*\vct{z}},\\
=&\frac{1}{m}\sum_{r=1}^m\left(\vct{a}_r^*(\vct{z}-\vct{x})\right)\left(\vct{a}_r^*\vct{u}\right)+\frac{1}{m}\sum_{r=1}^m(\vct{a}_r^*\vct{x})(\vct{a}_r^*\vct{u})-\frac{1}{m}\sum_{r=1}^m \abs{\vct{a}_r^*\vct{x}}(\vct{a}_r^*\vct{u})\sgn{\vct{a}_r^*\vct{z}},\\
=&\frac{1}{m}\sum_{r=1}^m\left(\vct{a}_r^*(\vct{z}-\vct{x})\right)\left(\vct{a}_r^*\vct{u}\right)-\frac{1}{m}\sum_{r=1}^m\abs{\vct{a}_r^*\vct{x}}\left(\sgn{\vct{a}_r^*\vct{z}}-\sgn{\vct{a}_r^*\vct{x}}\right)(\vct{a}_r^*\vct{u}).
\end{align*}
Thus, 
\begin{align}
\label{maindev}
\vct{u}^*\left(\vct{z}-\vct{x}-\nabla \mathcal{L}_A(\vct{z})\right)=&\vct{u}^*\left(\mtx{I}-\frac{1}{m}\mtx{A}^*\mtx{A}\right)(\vct{z}-\vct{x})+\frac{1}{m}\sum_{r=1}^m\abs{\vct{a}_r^*\vct{x}}\left(\sgn{\vct{a}_r^*\vct{z}}-\sgn{\vct{a}_r^*\vct{x}}\right)(\vct{a}_r^*\vct{u}).
\end{align}
We now proceed by bounding each of the two summands in \eqref{maindev}. For the first term by Lemma \ref{gordontypenonsym}, as long as 
\begin{align*}
m\ge \max\left(80\frac{m_0}{\delta^2},\frac{2}{\delta}-1\right),
\end{align*}
then for all $\vct{u}\in\Cc\cap\Bc^{n}$ and $\vct{z}\in E(\epsilon)$
\begin{align}
\label{myinterAWF}
\vct{u}^*\left(\mtx{I}-\frac{1}{m}\mtx{A}^*\mtx{A}\right)(\vct{z}-\vct{x})\le\delta \twonorm{\vct{z}-\vct{x}},
\end{align}
holds with probability at least $1-6e^{-\frac{\delta^2}{1440}m}$. To bound the second term we make use of the following lemma whose proof is deferred to Section \ref{secondterm}.
\begin{lemma}\label{implemAWF} Let $\mathcal{C}\subset\R^n$ be the descent cone of the regularization function $\mathcal{R}$ at $\vct{x}$ i.e.~$\mathcal{C}=\mathcal{C}_{\mathcal{R}}(\vct{x})$. Set $\epsilon=1/8$ and define the set
\begin{align*}
E(\epsilon)=\big\{\vct{z}\in\R^n\text{ }|\text{ }\vct{z}\in\mathcal{K}\emph{ and }\twonorm{\vct{z}-\vct{x}}\le \epsilon\big\}.
\end{align*}
Assume the measurement vectors $\{\vct{a}_r\}_{r=1}^m$ are i.i.d.~Gaussian random vectors distributed as $\mathcal{N}(\vct{0},\mtx{I})$. Furthermore, assume $\vct{x}\in\R^n$ is a fixed vector independent of the measurement vectors, obeying $\twonorm{\vct{x}}=1$. Furthermore, as long as
\begin{align*}
m\ge c\cdot\omega^2\left(\mathcal{C}\cap\mathbb{S}^{n-1}\right),
\end{align*}
holds for a fixed numerical constant $c$, then for all $\vct{u}\in\Cc\cap\Bc^{n}$ and $\vct{z}\in E(\epsilon)$,
\begin{align*}
\frac{1}{m}\sum_{r=1}^m\abs{\vct{a}_r^*\vct{x}}\left(\sgn{\vct{a}_r^*\vct{z}}-\sgn{\vct{a}_r^*\vct{x}}\right)(\vct{a}_r^*\vct{u})\le\frac{32}{100}\twonorm{\vct{z}-\vct{x}},
\end{align*}
holds with probability at least $1-3e^{-\gamma m}$ with $\gamma$ a fixed numerical constant.
\end{lemma}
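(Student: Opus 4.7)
The starting point is a geometric observation that replaces the discontinuous sign function by a more tractable indicator on magnitudes. Let $\vct{h}=\vct{z}-\vct{x}$ and let $S_r$ denote the event $\{\sgn{\vct{a}_r^*\vct{z}}\ne\sgn{\vct{a}_r^*\vct{x}}\}$. On the complement of $S_r$ the summand vanishes; on $S_r$ itself, applying the intermediate value theorem along the segment from $\vct{x}$ to $\vct{z}$ forces the hyperplane $\{\vct{a}_r^*\vct{w}=0\}$ to be crossed, which yields the crucial bound $|\vct{a}_r^*\vct{x}|\le|\vct{a}_r^*\vct{h}|$ on $S_r$. Combined with the identity $|\vct{a}_r^*\vct{x}|(\sgn{\vct{a}_r^*\vct{z}}-\sgn{\vct{a}_r^*\vct{x}})=-2(\vct{a}_r^*\vct{x})\mathbf{1}_{S_r}$, this gives
\[
\left|\frac{1}{m}\sum_{r=1}^m|\vct{a}_r^*\vct{x}|(\sgn{\vct{a}_r^*\vct{z}}-\sgn{\vct{a}_r^*\vct{x}})(\vct{a}_r^*\vct{u})\right|\le\frac{2}{m}\sum_{r=1}^m|\vct{a}_r^*\vct{x}|\mathbf{1}_{\{|\vct{a}_r^*\vct{x}|\le|\vct{a}_r^*\vct{h}|\}}|\vct{a}_r^*\vct{u}|,
\]
a nonnegative empirical quantity amenable to standard concentration arguments.

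Next I would bound the expectation pointwise. For fixed $\vct{h}$ with $\twonorm{\vct{h}}=\rho\le\epsilon=1/8$ and unit $\vct{u}$, by rotational invariance the expectation $\mathbb{E}[|\vct{a}^*\vct{x}|\mathbf{1}_{\{|\vct{a}^*\vct{x}|\le|\vct{a}^*\vct{h}|\}}|\vct{a}^*\vct{u}|]$ reduces to a low-dimensional Gaussian integral parametrized by $\rho$, the angle $\theta$ between $\vct{x}$ and $\vct{h}$, and the projection of $\vct{u}$ onto $\mathrm{span}(\vct{x},\vct{h})$. The crucial quantitative fact driving the calculation is that the indicator event has probability of order $\rho$ (the ratio $|\vct{a}^*\vct{x}|/|\vct{a}^*\vct{h}|$ behaves like a Cauchy variable, so the event $\le 1$ has probability $O(\rho)$ for small $\rho$); further, on this event $|\vct{a}^*\vct{x}|$ is itself forced to be of size at most $\rho|\tilde a|$, with $\tilde a$ the standard normal marginal in the direction of $\hat\vct{h}$. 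Combining these via Cauchy--Schwarz yields an upper bound of the form $c_1(\rho)\twonorm{\vct{h}}\twonorm{\vct{u}}$, where $c_1(\rho)\to 0$ as $\rho\to 0$, chosen small enough at $\rho=\epsilon=1/8$ to leave room for the concentration error.

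The final step lifts this pointwise expectation bound to a uniform one over $\vct{u}\in\mathcal{C}\cap\mathcal{B}^n$ and $\vct{h}\in\mathcal{C}$ with $\twonorm{\vct{h}}\le\epsilon$. I would mirror the proof of Lemma~\ref{RIP1}: after normalizing $\hat\vct{h}=\vct{h}/\twonorm{\vct{h}}$, the relevant index set lives in $(\mathcal{C}\cap\mathbb{S}^{n-1})\times(\mathcal{C}\cap\mathbb{S}^{n-1})$, whose Gaussian width is controlled by $\sqrt{m_0}=\omega(\mathcal{C}\cap\mathbb{S}^{n-1})$. A multiplier empirical process / Mendelson small-ball argument then controls the supremum of the empirical average around its expectation with additive error $\delta\twonorm{\vct{h}}\twonorm{\vct{u}}$ uniformly, provided $m\gtrsim m_0/\delta^2$. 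Choosing $\delta$ so that $2(c_1(\epsilon)+\delta)<32/100$ completes the argument with the stated failure probability $3e^{-\gamma m}$.

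The main obstacle is that the indicator $\mathbf{1}_{\{|\vct{a}_r^*\vct{x}|\le|\vct{a}_r^*\vct{h}|\}}$ is not Lipschitz in $(\vct{x},\vct{h})$, so Gaussian/Talagrand-type concentration does not apply uniformly to the associated empirical process out-of-the-box. The standard remedy is to dominate the indicator by a Lipschitz piecewise-linear upper approximant, carry out concentration on the smoothed process, and absorb the approximation defect back into the expectation calculation; alternatively one can invoke a generic chaining argument within the small-ball framework that naturally tolerates such discontinuities. A secondary obstacle is quantitative: the constant $c_1(\epsilon)$ must fall strictly below $16/100$ with enough slack to absorb the deviation $\delta$, and tracking this tightly is what pins down the specific choice $\epsilon=1/8$ in the hypothesis.
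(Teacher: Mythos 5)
Your proposal captures the right high-level architecture and several of the paper's key ideas: the sign-flip identity that turns the summand into an indicator, the geometric observation that a sign flip forces $|\vct{a}_r^*\vct{x}|\le|\vct{a}_r^*\vct{h}|$ (the paper derives the sharper $|\vct{a}_r^*\vct{x}|\le|\vct{a}_r^*\vct{h}_\perp|/(\vct{z}^*\vct{x})$ with $\vct{h}_\perp$ the component of $\vct{h}$ orthogonal to $\vct{x}$), the recognition that the discontinuity of the indicator is the main obstacle, and the remedy of dominating it by a piecewise-linear Lipschitz approximant and chaining. This is indeed the skeleton of Section~\ref{secondterm}.

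However, you omit the step that actually makes the Lipschitz--chaining route work. Because $\vct{a}_r^*\vct{x}$ appears both as the multiplier and inside the indicator, and because $\vct{a}_r^*\vct{h}$ (with $\vct{h}$ varying) shares the same Gaussian vector $\vct{a}_r$ with $\vct{a}_r^*\vct{x}$, the smoothed process is not a deterministic function of $\mtx{A}\vct{h}$ alone: its ``shape'' depends on the realization of $\mtx{A}\vct{x}$. The paper resolves this by decomposing $\vct{h}=(\vct{h}^*\vct{x})\vct{x}+\vct{h}_\perp$, noting that $\vct{a}_r^*\vct{x}$ and $\vct{a}_r^*\vct{h}_\perp$ are \emph{independent} Gaussians, and then distributionally replacing $\vct{a}_r^*\vct{x}$ with fresh independent variables $w_r$. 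After this replacement the quantity $\sum_r\mathcal{S}^2(\vct{a}_r^*\vct{h};(1-\epsilon)|w_r|)$ is, conditionally on $\vct{w}$, a genuine deterministic Lipschitz function of $\mtx{A}\vct{h}$, so the Talagrand-type bound of Lemma~\ref{keylemma} applies. Without the orthogonal decomposition and decoupling you cannot invoke that lemma, and the ``multiplier empirical process / small-ball'' framing you suggest instead is not the right tool for this one-sided upper bound (small-ball methods give lower-bound control). There is also a structural difference: the paper Cauchy--Schwarzes the \emph{empirical} sum first to peel off $\frac{1}{m}\sum_r|\vct{a}_r^*\vct{u}|^2$ (handled by Lemma~\ref{gordontype}), so the chaining runs over $\vct{h}$ only; your sketch chains jointly over $(\vct{u},\vct{h})$, which is workable but complicates the argument. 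Finally, the paper normalizes $\vct{h}$ onto the $\epsilon$-sphere using the radial convexity property of $\mathcal{S}$ (Lemma~\ref{lemsqrtconc}) rather than just normalizing $\hat{\vct{h}}=\vct{h}/\twonorm{\vct{h}}$, which is needed because the indicator depends on $\twonorm{\vct{h}}$ through the threshold and so does not scale away after normalization.
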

Plugging \eqref{myinterAWF} with $\delta=1/100$ and Lemma \ref{myinterAWF} into \eqref{maindev} yields \eqref{mainineq} concluding the proof of Theorem \ref{Athm}. The rest of this section is dedicated to the proof of Lemma \ref{implemAWF}.
\subsubsection{Bounding the second term in \eqref{maindev} (Proof of Lemma \ref{implemAWF})}
\label{secondterm}
For the second term note that a simple bound together with the Cauchy-Schwarz inequality gives
\begin{align}
\label{newdev1}
\frac{1}{m}\sum_{r=1}^m\abs{\vct{a}_r^*\vct{x}}\left(\sgn{\vct{a}_r^*\vct{z}}-\sgn{\vct{a}_r^*\vct{x}}\right)(\vct{a}_r^*\vct{u})\le&\frac{1}{m}\sum_{r=1}^m\abs{\vct{a}_r^*\vct{x}}\abs{\vct{a}_r^*\vct{u}}\left(1-\sgn{\vct{a}_r^*\vct{x}}\sgn{\vct{a}_r^*\vct{z}}\right),\nonumber\\
\le&\sqrt{\frac{1}{m}\sum_{r=1}^m\abs{\vct{a}_r^*\vct{x}}^2\left(1-\sgn{\vct{a}_r^*\vct{x}}\sgn{\vct{a}_r^*\vct{z}}\right)^2}
\sqrt{\frac{1}{m}\sum_{r=1}^m\abs{\vct{a}_r^*\vct{u}}^2},\nonumber\\
=&\sqrt{\frac{1}{m}\sum_{r=1}^m\abs{\vct{a}_r^*\vct{x}}^2\left(2-2\sgn{\vct{a}_r^*\vct{x}}\sgn{\vct{a}_r^*\vct{z}}\right)}
\sqrt{\frac{1}{m}\sum_{r=1}^m\abs{\vct{a}_r^*\vct{u}}^2}.
\end{align}
Note that by Lemma \ref{gordontype} as long as $m\ge \max\left(80\frac{\omega^2\left(\mathcal{C}\cap\mathbb{S}^{n-1}\right)}{\delta^2},\frac{2}{\delta}-1\right)$, then
\begin{align*}
\frac{1}{m}\sum_{r=1}^m\abs{\vct{a}_r^*\vct{u}}^2\le 1+\delta,
\end{align*}
holds for all $\vct{u}\in\mathcal{C}\cap\mathbb{S}^{n-1}$ with probability at least $1-2e^{-\gamma\delta^2 m}$. Combining the latter inequality with \eqref{newdev1} we conclude that with high probability
\begin{align}
\label{newdev2}
\frac{1}{m}\sum_{r=1}^m\abs{\vct{a}_r^*\vct{x}}\left(\sgn{\vct{a}_r^*\vct{z}}-\sgn{\vct{a}_r^*\vct{x}}\right)(\vct{a}_r^*\vct{u})\le\sqrt{2(1+\delta)}\sqrt{\frac{1}{m}\sum_{r=1}^m\abs{\vct{a}_r^*\vct{x}}^2\left(1-\sgn{\vct{a}_r^*\vct{x}}\sgn{\vct{a}_r^*\vct{z}}\right)}.
\end{align}

We now turn our attention to bounding the right-hand side of \eqref{newdev2}. To this aim first note that as long as $\twonorm{\vct{z}-\vct{x}}\le \epsilon$ we have $\vct{z}^T\vct{x}\ge 1-\epsilon$. Thus, we have the following chain of inequalities 
\begin{align}
\label{firstbnd}
\frac{1}{m}\sum_{r=1}^m\abs{\vct{a}_r^*\vct{x}}^2\left(1-\sgn{\vct{a}_r^*\vct{z}}\sgn{\vct{a}_r^*\vct{x}}\right)=&\frac{2}{m}\sum_{r=1}^m\abs{\vct{a}_r^*\vct{x}}^2\mathbb{1}_{\{(\vct{a}_r^*\vct{z})(\vct{a}_r^*\vct{x})\le 0\}}\nonumber\\
=&\frac{2}{m}\sum_{r=1}^m\abs{\vct{a}_r^*\vct{x}}^2\mathbb{1}_{\{\left((\vct{z}^*\vct{x})(\vct{a}_r^*\vct{x})+\vct{a}_r^*\vct{h}_{\perp}\right)(\vct{a}_r^*\vct{x})\le 0\}}\nonumber\\
=&\frac{2}{m}\sum_{r=1}^m\abs{\vct{a}_r^*\vct{x}}^2\mathbb{1}_{\{(\vct{z}^*\vct{x})\abs{\vct{a}_r^*\vct{x}}^2\le-(\vct{a}_r^*\vct{h}_{\perp})(\vct{a}_r^*\vct{x})\}}\nonumber\\
\le&\frac{2}{m}\sum_{r=1}^m\abs{\vct{a}_r^*\vct{x}}^2\mathbb{1}_{\big\{\abs{\vct{a}_r^*\vct{x}}\le\abs{\frac{\vct{a}_r^*\vct{h}_{\perp}}{(\vct{z}^*\vct{x})}}\big\}}\nonumber\\
\le&\frac{2}{(\vct{z}^*\vct{x})^2}\frac{1}{m}\sum_{r=1}^m\abs{\vct{a}_r^*\vct{h}_{\perp}}^2\mathbb{1}_{\big\{\abs{\vct{a}_r^*\vct{x}}\le\abs{\frac{\vct{a}_r^*\vct{h}_{\perp}}{(\vct{z}^*\vct{x})}}\big\}}\nonumber\\
\le&\frac{2}{(1-\epsilon)^2}\frac{1}{m}\sum_{r=1}^m\abs{\vct{a}_r^*\vct{h}_{\perp}}^2\mathbb{1}_{\big\{(1-\epsilon)\abs{\vct{a}_r^*\vct{x}}\le\abs{\vct{a}_r^*\vct{h}_{\perp}}\big\}}.
\end{align}
Now define i.i.d.~Gaussian random variables $w_r$ distributed as $\mathcal{N}(0,1)$ and independent from $\vct{a}_r$ and note that
\begin{align}
\label{hperp}
\frac{1}{m}\sum_{r=1}^m\abs{\vct{a}_r^*\vct{h}_{\perp}}^2\mathbb{1}_{\big\{(1-\epsilon)\abs{\vct{a}_r^*\vct{x}}\le\abs{\vct{a}_r^*\vct{h}_{\perp}}\big\}},
\end{align}
has the same distribution as
\begin{align}
\label{hperpc}
\frac{1}{m}\sum_{r=1}^m\abs{\vct{a}_r^*\vct{h}_{\perp}}^2\mathbb{1}_{\big\{(1-\epsilon)\abs{\vct{w}_r}\le\abs{\vct{a}_r^*\vct{h}_{\perp}}\big\}}.
\end{align}
Hence to bound \eqref{hperp} (and hence \eqref{newdev2} via \eqref{firstbnd}) with high probability, it suffices to bound \eqref{hperpc} with high probability. 
To proceed we define the mapping $\mathcal{S}:\R \times \R_{+}\rightarrow \R_{+}$ as
\begin{align}
\label{myS}
\mathcal{S}(h;\beta)=
\left\{
	\begin{array}{ll}
		0 &\quad\quad\quad\text{ } \abs{h} < \beta(1-\Delta) \\
		\frac{1}{\Delta}\left(\abs{h}-\beta(1-\Delta)\right)  & \beta(1-\Delta)\le \abs{h} \le \beta \\
		\abs{h}  & \quad\quad\quad\text{ } \abs{h} > \beta 
	\end{array}
\right.
\end{align}
Note that 
\begin{align*}
\abs{\vct{a}_r^*\vct{h}_{\perp}}\mathbb{1}_{\big\{(1-\epsilon)\abs{\vct{w}_r}\le\abs{\vct{a}_r^*\vct{h}_{\perp}}\big\}}\le\abs{\vct{a}_r^*\vct{h}}\mathbb{1}_{\big\{(1-\epsilon)\abs{\vct{w}_r}\le\abs{\vct{a}_r^*\vct{h}}\big\}}\le\mathcal{S}\left(\vct{a}_r^*\vct{h};(1-\epsilon)\abs{\vct{w}_r}\right)
\end{align*}
Hence to bound \eqref{hperp} (and hence \eqref{newdev2} via \eqref{firstbnd}) with high probability, it suffices to bound 
\begin{align}
\label{sbnd}
\frac{1}{m}\sum_{r=1}^m \mathcal{S}^2\left(\vct{a}_r^*\vct{h};(1-\epsilon)\abs{\vct{w}_r}\right),
\end{align}
with high probability. To bound this term we need to utilize two lemmas whose proofs are deferred to Sections \ref{proofoflem} and \ref{proofkeylem}. 
\begin{lemma}
\label{lemsqrtconc}
Let $\vct{z},\vct{\beta}\in\R^m$. Define the function
\begin{align}
\label{}
f(\vct{z};\vct{\beta})=\sqrt{\sum_{r=1}^m\mathcal{S}^2\left(\vct{z}_r;\vct{\beta}_r\right)},
\end{align}
with $\mathcal{S}$ as defined in \eqref{myS}. Then this function obeys two key properties
\begin{itemize}
\item \textbf{Lipschitz:} i.e.~the function $f$ obeys
\begin{align}
\label{Lip}
\abs{f(\vct{z};\vct{\beta})-f(\vct{y};\vct{\beta})}\le \frac{1}{\Delta}\twonorm{\vct{z}-\vct{y}}.
\end{align}
\item \textbf{Radial convexity:} i.e.~the function $f$ obeys 
\begin{align}
\label{radCVX}
f(\alpha\vct{z};\vct{\beta})\le \alpha f(\vct{z};\vct{\beta}),
\end{align}
for all $\vct{z}$ and $0\le\alpha\le 1$.
\end{itemize}
\end{lemma}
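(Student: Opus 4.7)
The plan is to deduce both properties of $f$ from corresponding coordinatewise properties of the scalar soft-threshold-type function $\mathcal{S}(\cdot;\beta)$, which is the only ``hard'' part. First I would analyze $\mathcal{S}(h;\beta)$ as a function of $h$ for fixed $\beta$ by inspecting its piecewise definition in \eqref{myS}: it is identically zero on $|h|<\beta(1-\Delta)$, affine with slope $\pm 1/\Delta$ on $\beta(1-\Delta)\le|h|\le\beta$, and equal to $|h|$ on $|h|>\beta$. A quick check at $|h|=\beta(1-\Delta)$ and $|h|=\beta$ confirms continuity, and comparing the piecewise slopes shows that $\mathcal{S}(\cdot;\beta)$ is Lipschitz in its first argument with constant $\max(1,1/\Delta)=1/\Delta$ (assuming $\Delta\le 1$).

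For the Lipschitz claim on $f$, I would write $f(\vct{z};\vct{\beta})=\twonorm{\vct{s}(\vct{z};\vct{\beta})}$ where $\vct{s}(\vct{z};\vct{\beta})_r=\mathcal{S}(z_r;\beta_r)$. The reverse triangle inequality in $\ell_2$ gives
\begin{align*}
\abs{f(\vct{z};\vct{\beta})-f(\vct{y};\vct{\beta})}
\le \twonorm{\vct{s}(\vct{z};\vct{\beta})-\vct{s}(\vct{y};\vct{\beta})}
= \sqrt{\sum_{r=1}^m\bigl(\mathcal{S}(z_r;\beta_r)-\mathcal{S}(y_r;\beta_r)\bigr)^2}.
\end{align*}
Applying the coordinatewise Lipschitz bound $\abs{\mathcal{S}(z_r;\beta_r)-\mathcal{S}(y_r;\beta_r)}\le (1/\Delta)\abs{z_r-y_r}$ inside the sum and pulling $1/\Delta$ out of the square root yields \eqref{Lip} immediately.

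For the radial convexity \eqref{radCVX}, the key scalar fact is that the map $h\mapsto \mathcal{S}(h;\beta)/\abs{h}$ is nondecreasing in $\abs{h}$: it equals $0$ on the first piece, rises from $0$ at $\abs{h}=\beta(1-\Delta)$ to $1$ at $\abs{h}=\beta$ on the middle piece (since $\mathcal{S}(h;\beta)/\abs{h}=(1/\Delta)(1-\beta(1-\Delta)/\abs{h})$ is increasing in $\abs{h}$), and equals $1$ on the outer piece. Thus for $0\le\alpha\le 1$ we get $\mathcal{S}(\alpha h;\beta)/\abs{\alpha h}\le \mathcal{S}(h;\beta)/\abs{h}$, i.e., $\mathcal{S}(\alpha h;\beta)\le \alpha\mathcal{S}(h;\beta)$. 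Squaring, summing over $r$, and taking square roots gives
\begin{align*}
f(\alpha\vct{z};\vct{\beta})=\sqrt{\sum_{r=1}^m\mathcal{S}^2(\alpha z_r;\beta_r)}
\le \sqrt{\sum_{r=1}^m\alpha^2\mathcal{S}^2(z_r;\beta_r)}=\alpha\, f(\vct{z};\vct{\beta}),
\end{align*}
establishing \eqref{radCVX}.

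The only step requiring any care is the piecewise verification of the two scalar properties of $\mathcal{S}(\cdot;\beta)$, but because the function is explicit and piecewise-affine both reduce to checking a small number of cases at the breakpoints $\beta(1-\Delta)$ and $\beta$; the lift from scalar coordinates to the vector function $f$ is then routine via the triangle inequality and monotonicity of the square root.
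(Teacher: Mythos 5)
Your proposal is correct and takes essentially the same route as the paper: establish coordinatewise Lipschitz continuity and radial convexity of the scalar map $\mathcal{S}(\cdot;\beta)$, then lift both to $f$ via the reverse $\ell_2$ triangle inequality and monotonicity of the square root. The only difference is cosmetic — you supply an explicit monotonicity-of-$\mathcal{S}(h;\beta)/\abs{h}$ argument for the scalar inequality $\mathcal{S}(\alpha h;\beta)\le\alpha\mathcal{S}(h;\beta)$, which the paper simply asserts, and you read the Lipschitz constant off the piecewise slopes where the paper writes out the derivative.
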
 
\begin{lemma} \label{keylemma} Let $f:\R^n\rightarrow \R_{+}$ be a function obeying \eqref{Lip} with Lipschitz constant $L$.  Furthermore, let $\mtx{A}$ be an $m\times n$ matrix with i.i.d.~$\mathcal{N}(0,1)$ entries. Then for any subset $\mathcal{T}\subset\epsilon\mathbb{S}^{n-1}$ of the sphere of radius $\epsilon$, we have
\begin{align}
\label{keylemeq}
\underset{\vct{z}\in\mathcal{T}}{\sup}\text{ }f(\mtx{A}\vct{z})-\E[f(\mtx{A}\vct{z})]\le CL\left(\omega(\mathcal{T})+t\epsilon\right),
\end{align}
with probability at least $1-2e^{-\frac{t^2}{4}}$. Here, $\omega(\mathcal{T})$ is the Gaussian width of the set $\mathcal{T}$ per Definition \ref{Gausswidth}. 
\end{lemma}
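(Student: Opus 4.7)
\medskip
\noindent\textbf{Proof plan for Lemma \ref{keylemma}.} The approach is to decompose the deviation into a concentration piece plus an expected supremum piece, and then bound each via the tools of Gaussian analysis (Lipschitz concentration and a Talagrand-type comparison with the canonical Gaussian process).

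\medskip
\noindent\emph{Step 1: The mean is constant on $\mathcal{T}$.} Since $\mtx{A}$ has i.i.d.~$\mathcal{N}(0,1)$ entries, for any fixed $\vct{z}\in\R^n$ the vector $\mtx{A}\vct{z}$ is $\mathcal{N}(\vct{0},\twonorm{\vct{z}}^2\mtx{I}_m)$. For every $\vct{z}\in\mathcal{T}\subset\epsilon\mathbb{S}^{n-1}$ we have $\twonorm{\vct{z}}=\epsilon$, so $\mtx{A}\vct{z}$ has the \emph{same} distribution regardless of which $\vct{z}\in\mathcal{T}$ we pick. Hence $\mu:=\E[f(\mtx{A}\vct{z})]$ is a single constant independent of $\vct{z}\in\mathcal{T}$, and the quantity to bound is $\sup_{\vct{z}\in\mathcal{T}} f(\mtx{A}\vct{z})-\mu$.

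\medskip
\noindent\emph{Step 2: Lipschitz concentration for the supremum.} Define $F(\mtx{A}):=\sup_{\vct{z}\in\mathcal{T}} f(\mtx{A}\vct{z})$. For any two matrices $\mtx{A}_1,\mtx{A}_2$ and any $\vct{z}\in\mathcal{T}$, the Lipschitz property of $f$ gives
\begin{align*}
\abs{f(\mtx{A}_1\vct{z})-f(\mtx{A}_2\vct{z})}\le L\twonorm{(\mtx{A}_1-\mtx{A}_2)\vct{z}}\le L\twonorm{\vct{z}}\fronorm{\mtx{A}_1-\mtx{A}_2}=L\epsilon\fronorm{\mtx{A}_1-\mtx{A}_2}.
\end{align*}
Taking suprema preserves the Lipschitz constant, so $F$ is an $L\epsilon$-Lipschitz function (with respect to Frobenius norm) of the Gaussian matrix $\mtx{A}$. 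Standard Gaussian Lipschitz concentration then yields
\begin{align*}
F(\mtx{A})\le \E[F(\mtx{A})] + tL\epsilon
\end{align*}
with probability at least $1-e^{-t^2/2}$.

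\medskip
\noindent\emph{Step 3: Bounding $\E[F(\mtx{A})]-\mu$ via Talagrand comparison.} Consider the centered process $X_{\vct{z}}:=f(\mtx{A}\vct{z})-\mu$ indexed by $\vct{z}\in\mathcal{T}$. For any $\vct{z}_1,\vct{z}_2\in\mathcal{T}$ the map $\mtx{A}\mapsto f(\mtx{A}\vct{z}_1)-f(\mtx{A}\vct{z}_2)$ is Lipschitz in $\mtx{A}$ with constant $L\twonorm{\vct{z}_1-\vct{z}_2}$ (by the same computation as Step 2, but using $\twonorm{\mtx{A}(\vct{z}_1-\vct{z}_2)}\le\fronorm{\mtx{A}}\cdot\twonorm{\vct{z}_1-\vct{z}_2}$ for the difference), and by Step 1 this difference has mean zero. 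Gaussian Lipschitz concentration therefore shows that the increments $X_{\vct{z}_1}-X_{\vct{z}_2}$ are sub-Gaussian with parameter $L\twonorm{\vct{z}_1-\vct{z}_2}$, i.e.~dominated in the sub-Gaussian sense by the increments of the canonical Gaussian process $G_{\vct{z}}:=\langle\vct{g},\vct{z}\rangle$, $\vct{g}\sim\mathcal{N}(\vct{0},\mtx{I}_n)$. Invoking Talagrand's comparison inequality (equivalently, generic chaining / the majorizing measure theorem applied to both processes) gives
\begin{align*}
\E[F(\mtx{A})]-\mu=\E\sup_{\vct{z}\in\mathcal{T}} X_{\vct{z}}\le CL\cdot\E\sup_{\vct{z}\in\mathcal{T}}\langle\vct{g},\vct{z}\rangle=CL\cdot\omega(\mathcal{T}).
\end{align*}

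\medskip
\noindent\emph{Step 4: Conclusion.} Combining Steps 2 and 3, with probability at least $1-e^{-t^2/2}$ (which is at most $2e^{-t^2/4}$),
\begin{align*}
\sup_{\vct{z}\in\mathcal{T}} f(\mtx{A}\vct{z})-\mu\le CL\omega(\mathcal{T})+tL\epsilon,
\end{align*}
which after absorbing constants is the claimed bound \eqref{keylemeq}. The main obstacle is Step 3: it requires verifying that the sub-Gaussian increment bound for $X_{\vct{z}}$ (which only gives concentration of $|X_{\vct{z}_1}-X_{\vct{z}_2}|$ around zero, using that the mean is zero by Step 1) is sufficient to invoke a Talagrand-style dominance by the canonical Gaussian process so that the expected supremum is controlled by the Gaussian width. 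Steps 1 and 2 are essentially direct, but Step 3 is the place where the structure $\mathcal{T}\subset\epsilon\mathbb{S}^{n-1}$ (so that centering is well-defined uniformly) and chaining machinery must come together.
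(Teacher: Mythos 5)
Your high-level plan matches the paper's: reduce to a sub-Gaussian increment bound for the centered process $X_{\vct{z}}=f(\mtx{A}\vct{z})-\E[f(\mtx{A}\vct{z})]$ and then invoke a Talagrand-type result (you split into Lipschitz concentration of the supremum plus a comparison bound on the expectation, whereas the paper applies Talagrand's tail bound directly, but these are essentially equivalent ways of packaging the same chaining machinery). Step 1 and Step 2 are correct.

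The gap is in Step 3, and it is precisely at the point you flag as the obstacle, but the issue is different from what you describe. You write that the map $G(\mtx{A}):=f(\mtx{A}\vct{z}_1)-f(\mtx{A}\vct{z}_2)$ is ``Lipschitz in $\mtx{A}$ with constant $L\twonorm{\vct{z}_1-\vct{z}_2}$,'' but this does not follow from the Lipschitz property of $f$ and is false in general. What the Lipschitz property of $f$ gives is a bound on the \emph{magnitude} of the difference, $\abs{f(\mtx{A}\vct{z}_1)-f(\mtx{A}\vct{z}_2)}\le L\twonorm{\mtx{A}(\vct{z}_1-\vct{z}_2)}$, which is a random quantity of order $\sqrt{m}\twonorm{\vct{z}_1-\vct{z}_2}$ and far too large. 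Redoing your Step-2 computation honestly for $G$ yields
\begin{align*}
\abs{G(\mtx{A}_1)-G(\mtx{A}_2)}\le \abs{f(\mtx{A}_1\vct{z}_1)-f(\mtx{A}_2\vct{z}_1)}+\abs{f(\mtx{A}_1\vct{z}_2)-f(\mtx{A}_2\vct{z}_2)}\le 2L\epsilon\fronorm{\mtx{A}_1-\mtx{A}_2},
\end{align*}
i.e.~a Lipschitz constant of order $L\epsilon$, not $L\twonorm{\vct{z}_1-\vct{z}_2}$; a Lipschitz constant is a bound on the gradient, and there is no reason the gradients at $\vct{z}_1$ and $\vct{z}_2$ should nearly cancel for an arbitrary Lipschitz $f$. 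So Gaussian concentration applied to $G$ only yields $\|X_{\vct{z}_1}-X_{\vct{z}_2}\|_{\psi_2}\lesssim L\epsilon$, which does not give sub-Gaussian increments and cannot be fed into a chaining/comparison argument against the canonical process $\langle\vct{g},\vct{z}\rangle$.

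The missing idea, which the paper takes from the treatment in \cite{HDP}, is to exploit the constraint $\twonorm{\vct{z}_1}=\twonorm{\vct{z}_2}=\epsilon$ more carefully: set $\vct{u}=(\vct{z}_1+\vct{z}_2)/2$ and $\vct{v}=(\vct{z}_1-\vct{z}_2)/2$, and observe that $\vct{u}\perp\vct{v}$ because the norms are equal, so $\mtx{A}\vct{u}$ and $\mtx{A}\vct{v}$ are \emph{independent} Gaussian vectors. Writing $\mtx{A}\vct{z}_1=\mtx{A}\vct{u}+\mtx{A}\vct{v}$ and $\mtx{A}\vct{z}_2=\mtx{A}\vct{u}-\mtx{A}\vct{v}$, and conditioning on $\mtx{A}\vct{u}$, one views $\vct{g}\mapsto f(\mtx{A}\vct{u}+\twonorm{\vct{v}}\vct{g})$ as an $L\twonorm{\vct{v}}$-Lipschitz function of the standard Gaussian $\vct{g}$; Gaussian concentration then gives a conditional sub-Gaussian parameter $\lesssim L\twonorm{\vct{v}}$ around the (shared, by symmetry of $\vct{g}\mapsto-\vct{g}$) conditional mean. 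Triangulating gives $\|f(\mtx{A}\vct{z}_1)-f(\mtx{A}\vct{z}_2)\|_{\psi_2}\lesssim L\twonorm{\vct{v}}=\tfrac{L}{2}\twonorm{\vct{z}_1-\vct{z}_2}$, which is the increment bound your argument needed but did not actually establish. This is where the hypothesis $\mathcal{T}\subset\epsilon\mathbb{S}^{n-1}$ is used in an essential way, beyond merely making the mean constant.
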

With these lemmas in hand we are now ready to bound \eqref{sbnd}. To this aim note that
\begin{align*}
\frac{1}{m}\sum_{r=1}^m \mathcal{S}^2\left(\vct{a}_r^*\vct{h};(1-\epsilon)\abs{\vct{w}_r}\right)=\frac{1}{m}f^2\left(\mtx{A}\vct{h};(1-\epsilon)\abs{\vct{w}}\right).
\end{align*}
Thus using the radial convexity property \eqref{radCVX} from Lemma \ref{lemsqrtconc}, we conclude that for all $\vct{h}\in\mathcal{C}\cap\epsilon\mathcal{B}^n$ we have
\begin{align}
\label{intr}
\frac{1}{m}\sum_{r=1}^m \frac{\mathcal{S}^2\left(\vct{a}_r^*\vct{h};(1-\epsilon)\abs{\vct{w}_r}\right)}{\twonorm{\vct{h}}^2}=&\frac{1}{m}\frac{f^2\left(\mtx{A}\vct{h};(1-\epsilon)\abs{\vct{w}}\right)}{\twonorm{\vct{h}}^2},\nonumber\\
=&\frac{1}{m}\frac{f^2\left(\frac{\twonorm{\vct{h}}}{\epsilon}\mtx{A}\frac{\epsilon\vct{h}}{\twonorm{\vct{h}}};(1-\epsilon)\abs{\vct{w}}\right)}{\twonorm{\vct{h}}^2},\nonumber\\
\le&\frac{1}{m}\frac{\frac{\twonorm{\vct{h}}^2}{\epsilon^2}}{\twonorm{\vct{h}}^2}f^2\left(\mtx{A}\frac{\epsilon\vct{h}}{\twonorm{\vct{h}}};(1-\epsilon)\abs{\vct{w}}\right),\nonumber\\
=&\frac{1}{m\epsilon^2}f^2\left(\mtx{A}\frac{\epsilon\vct{h}}{\twonorm{\vct{h}}};(1-\epsilon)\abs{\vct{w}}\right),\nonumber\\
=&\frac{1}{m\epsilon^2}\Bigg(f\left(\mtx{A}\frac{\epsilon\vct{h}}{\twonorm{\vct{h}}};(1-\epsilon)\abs{\vct{w}}\right)-\E\bigg[f\left(\mtx{A}\frac{\epsilon\vct{h}}{\twonorm{\vct{h}}};(1-\epsilon)\abs{\vct{w}}\right)\bigg]\nonumber\\
&\quad\quad\quad+\E\bigg[f\left(\mtx{A}\frac{\epsilon\vct{h}}{\twonorm{\vct{h}}};(1-\epsilon)\abs{\vct{w}}\right)\bigg]\Bigg)^2,\nonumber\\
\le&\frac{1}{m\epsilon^2}\Bigg(\underset{\vct{u}\in\mathcal{C}\cap \epsilon\mathbb{S}^{n-1}}{\sup}\left(f\left(\mtx{A}\vct{u};(1-\epsilon)\abs{\vct{w}}\right)-\E\bigg[f\left(\mtx{A}\vct{u};(1-\epsilon)\abs{\vct{w}}\right)\bigg]\right)\nonumber\\
&\quad\quad\quad+\E\bigg[f\left(\mtx{A}\frac{\epsilon\vct{h}}{\twonorm{\vct{h}}};(1-\epsilon)\abs{\vct{w}}\right)\bigg]\Bigg)^2.
\end{align}
Now note that by Lemma \ref{lemsqrtconc}, $f\left(\mtx{A}\vct{u};(1-\epsilon)\abs{\vct{w}}\right)$ is $L=1/\Delta$ Lipschitz. Thus, applying Lemma \ref{keylemma} with $\mathcal{T}=\mathcal{C}\cap\epsilon\mathbb{S}^{n-1}$ and $t=\frac{\Delta}{C}\frac{\delta}{2}\sqrt{m}$ we have
\begin{align*}
\underset{\vct{u}\in\mathcal{C}\cap \epsilon\mathbb{S}^{n-1}}{\sup}\text{ }f(\mtx{A}\vct{u};(1-\epsilon)\abs{\vct{w}})-\E[f(\mtx{A}\vct{u};(1-\epsilon)\abs{\vct{w}})]\le \frac{C}{\Delta}\epsilon\cdot\omega(\mathcal{C}\cap\mathbb{S}^{n-1})+\frac{\delta}{2}\epsilon\sqrt{m},
\end{align*}
holds with probability at least $1-e^{-\gamma\delta^2m}$ with $\gamma$ a fixed numerical constant. Combining the latter inequality with \eqref{intr} we conclude that 
\begin{align}
\label{intr2}
\frac{1}{m}\sum_{r=1}^m \frac{\mathcal{S}^2\left(\vct{a}_r^*\vct{h};(1-\epsilon)\abs{\vct{w}_r}\right)}{\twonorm{\vct{h}}^2}\le\left(\frac{C}{\Delta}\frac{\omega(\mathcal{C}\cap\mathbb{S}^{n-1})}{\sqrt{m}}+\frac{\delta}{2}+\frac{1}{\sqrt{m}\epsilon}\E\bigg[f\left(\mtx{A}\frac{\epsilon\vct{h}}{\twonorm{\vct{h}}};(1-\epsilon)\abs{\vct{w}}\right)\bigg]\right)^2,
\end{align}
holds with high probability. Now combining \eqref{intr2} with Jenson's inequality we conclude that
\begin{align}
\label{intr3}
\frac{1}{m}\sum_{r=1}^m \frac{\mathcal{S}^2\left(\vct{a}_r^*\vct{h};(1-\epsilon)\abs{\vct{w}_r}\right)}{\twonorm{\vct{h}}^2}\le&\left(\frac{C}{\Delta}\frac{\omega(\mathcal{C}\cap\mathbb{S}^{n-1})}{\sqrt{m}}+\frac{\delta}{2}+\frac{1}{\sqrt{m}\epsilon}\sqrt{\E\bigg[f^2\left(\mtx{A}\frac{\epsilon\vct{h}}{\twonorm{\vct{h}}};(1-\epsilon)\abs{\vct{w}}\right)\bigg]}\right)^2,\nonumber\\
=&\left(\frac{C}{\Delta}\frac{\omega(\mathcal{C}\cap\mathbb{S}^{n-1})}{\sqrt{m}}+\frac{\delta}{2}+\frac{1}{\sqrt{m}\epsilon}\sqrt{\E\bigg[\sum_{r=1}^m\mathcal{S}^2\left(\vct{a}_r^*\frac{\epsilon\vct{h}}{\twonorm{\vct{h}}};(1-\epsilon)\abs{\vct{w}_r}\right)\bigg]}\right)^2,
\end{align}
holds with high probability. Now note that using $s_r=\frac{(1-\epsilon)\abs{\vct{w}_r}}{\twonorm{\vct{v}}}$ and $\Delta=0.1$ we have
\begin{align*}
\E\big[\mathcal{S}^2(\vct{a}_r^*\vct{v};(1-\epsilon)\abs{\vct{w}_r})\big]=&\frac{1}{\Delta^2}\sqrt{\frac{2}{\pi}}\int_{(1-\epsilon)(1-\Delta)\abs{\vct{w}_r}}^{(1-\epsilon)\abs{\vct{w}_r}}(x-(1-\epsilon)(1-\Delta)\abs{\vct{w}_r})^2e^{-\frac{x^2}{2\twonorm{\vct{v}}^2}}dx\\
&+\sqrt{\frac{2}{\pi}}\int_{(1-\epsilon)\abs{\vct{w}_r}}^{\infty}x^2e^{-\frac{x^2}{2\twonorm{\vct{v}}^2}}dx\\
=&\frac{2}{\Delta^2}\twonorm{\vct{v}}\left((1-\epsilon)^2(1-\Delta)^2\abs{\vct{w}_r}^2+\twonorm{\vct{v}}^2\right)\left(Q\left(\frac{(1-\epsilon)(1-\Delta)\abs{\vct{w}_r}}{\twonorm{\vct{v}}}\right)-Q\left(\frac{(1-\epsilon)\abs{\vct{w}_r}}{\twonorm{\vct{v}}}\right)\right)\\
&+\sqrt{\frac{2}{\pi}}\frac{\abs{(1-\epsilon)\vct{w}_r}}{\Delta^2}\twonorm{\vct{v}}^2\left((1-2\Delta)e^{-\frac{(1-\epsilon)^2\abs{\vct{w}_r}^2}{2\twonorm{\vct{v}}^2}}-(1-\Delta)e^{-\frac{(1-\epsilon)^2(1-\Delta)^2\abs{\vct{w}_r}^2}{2\twonorm{\vct{v}}^2}}\right)\\
&+2\twonorm{\vct{v}}^3Q\left(\frac{(1-\epsilon)\abs{\vct{w}_r}}{\twonorm{\vct{v}}}\right)+\sqrt{\frac{2}{\pi}}(1-\epsilon)\abs{\vct{w}_r}\twonorm{\vct{v}}^2e^{-\frac{(1-\epsilon)^2\abs{\vct{w}_r}^2}{2\twonorm{\vct{v}}^2}}\\
=&\twonorm{\vct{v}}^3\left(\frac{2}{\Delta^2}\left((1-\Delta)^2s_r^2+1\right)\left(Q\left((1-\Delta)s_r\right)-Q(s_r)\right)\right)\\
&+\twonorm{\vct{v}}^3\left(\sqrt{\frac{2}{\pi}}\frac{1}{\Delta^2}s_r\left((1-2\Delta)e^{-\frac{s_r^2}{2}}-(1-\Delta)e^{-\frac{(1-\Delta)^2s_r^2}{2}}\right)+2Q(s_r)+\sqrt{\frac{2}{\pi}}s_re^{-\frac{s_r^2}{2}}\right)\\
\le&\frac{21}{20}\twonorm{\vct{v}}^3.
\end{align*}
Using the latter inequality with $\vct{v}=\frac{\epsilon \vct{h}}{\twonorm{\vct{h}}}$ we conclude that
\begin{align*}
\E\bigg[\sum_{r=1}^m\mathcal{S}^2\left(\vct{a}_r^*\frac{\epsilon\vct{h}}{\twonorm{\vct{h}}};(1-\epsilon)\abs{\vct{w}_r}\right)\bigg]= m\cdot\E\bigg[\mathcal{S}^2\left(\vct{a}_r^*\frac{\epsilon\vct{h}}{\twonorm{\vct{h}}};(1-\epsilon)\abs{\vct{w}_r}\right)\bigg]\le \frac{21}{20}m\epsilon^3.
\end{align*}
Combining the latter with \eqref{intr3} we conclude that as long as $m\ge \frac{4C^2}{\Delta^2}\frac{\omega^2(\mathcal{C}\cap\mathbb{S}^{n-1})}{\delta^2}=400C^2\frac{\omega^2(\mathcal{C}\cap\mathbb{S}^{n-1})}{\delta^2}$,
\begin{align*}
\frac{1}{m}\sum_{r=1}^m \mathcal{S}^2\left(\vct{a}_r^*\vct{h};(1-\epsilon)\abs{\vct{w}_r}\right)\le \left(\frac{C}{\Delta}\frac{\omega(\mathcal{C}\cap\mathbb{S}^{n-1})}{\sqrt{m}}+\frac{\delta}{2}+\sqrt{\frac{21}{20}\epsilon}\right)^2\twonorm{\vct{h}}^2\le \left(\delta+\sqrt{\frac{21}{20}}\epsilon\right)^2\twonorm{\vct{h}}^2,
\end{align*}
holds with high probability. The latter inequality provides a bound on \eqref{sbnd} and in turn bounds \eqref{hperpc} and \eqref{hperp}. Plugging this into \eqref{firstbnd} and \eqref{newdev2} we conclude that
\begin{align*}
\frac{1}{m}\sum_{r=1}^m\abs{\vct{a}_r^*\vct{x}}\left(\sgn{\vct{a}_r^*\vct{z}}-\sgn{\vct{a}_r^*\vct{x}}\right)(\vct{a}_r^*\vct{u})\le&\sqrt{2(1+\delta)}\sqrt{\frac{1}{m}\sum_{r=1}^m\abs{\vct{a}_r^*\vct{x}}^2\left(1-\sgn{\vct{a}_r^*\vct{x}}\sgn{\vct{a}_r^*\vct{z}}\right)},\\
\le& 2\frac{\sqrt{(1+\delta)}}{(1-\epsilon)}\sqrt{\frac{1}{m}\sum_{r=1}^m\abs{\vct{a}_r^*\vct{h}_{\perp}}^2\mathbb{1}_{\big\{(1-\epsilon)\abs{\vct{a}_r^*\vct{x}}\le\abs{\vct{a}_r^*\vct{h}_{\perp}}\big\}}},\\
\le&2\frac{\sqrt{(1+\delta)}}{(1-\epsilon)}\left(\delta+\sqrt{\frac{21}{20}}\epsilon\right)\twonorm{\vct{h}},\\
\le&\frac{32}{100}\twonorm{\vct{h}}.
\end{align*}
holds with high probability. Where in the last inequality we have used $\epsilon=\frac{2}{15}$ and $\delta=1/1000$. This completes the proof of the lemma and the bound on the second term.

\subsubsection{Proof of Lemma (Lemma \eqref{lemsqrtconc})}
\label{proofoflem}

We begin by proving two useful properties for the function $f$. 

\noindent\textbf{Proof of Lipschitzness:} Note that
\begin{align*}
\frac{d}{dh}\mathcal{S}(h;\beta)=\left\{
	\begin{array}{ll}
		0 &\quad\quad\quad\text{ } \abs{h} < \beta(1-\Delta) \\
		\frac{1}{\Delta}\sgn{h}  & \beta(1-\Delta)\le \abs{h} \le \beta \\
		\sgn{h}  & \quad\quad\quad\text{ } \abs{h} > \beta 
	\end{array}
\right.
\end{align*}
Thus $\mathcal{S}(h;\beta_r)$ is Lipschitz with Lipschitz constant $L=1/\Delta$ which implies 
\begin{align*}
\abs{f(\vct{y};\vct{\beta})-f(\vct{x};\vct{\beta})}\le\sqrt{\sum_{r=1}^m\abs{\mathcal{S}(\vct{y}_r;\vct{\beta}_r)-\mathcal{S}(\vct{x}_r;\vct{\beta}_r)}^2}\le\sqrt{L^2\left(\sum_{r=1}^m\abs{\vct{y}_r-\vct{x}_r}^2\right)}=L\twonorm{\vct{y}-\vct{x}}.
\end{align*}

\noindent\textbf{Proof of Radial convexity:}
Note that for all $h\in\R$, $\beta\in\R_{+}$ and $0\le\alpha\le 1$ we have
\begin{align*}
\mathcal{S}(\alpha h;\beta)\le\alpha \mathcal{S}(h;\beta).
\end{align*}
Thus for all $\vct{x}\in\R^n$, $\vct{\beta}\in\R_{+}^m$ and $0\le\alpha\le 1$ we have
\begin{align*}
f(\alpha\vct{x};\vct{\beta})=\sqrt{\sum_{r=1}^m\mathcal{S}^2(\alpha \vct{x}_r;\vct{\beta}_r)}\le\sqrt{\alpha^2\sum_{r=1}^m\mathcal{S}^2(\vct{x}_r;\vct{\beta}_r)}=\alpha f(\vct{x};\vct{\beta}),
\end{align*}
completing the proof of radial convexity.

\subsubsection{Proof of Key Lemma (Lemma \eqref{keylemma})}
\label{proofkeylem}
This proof is heavily inspired by \cite[Theorem 11.1.6]{HDP} with some steps directly borrowed. However, a few modifications are applied when necessary. To prove this lemma we make use of a powerful result due to Talagrand (see also \cite[Theorem 3.2]{dirksen2015tail}). We state the version of this result stated in \cite[Theorem 4.1]{liaw2016simple}.
\begin{lemma}\label{talag}[Talagrand's Tail bound] Let $(X_{\vct{x}})_{\vct{x}\in\mathcal{T}}$ be a random process on a subset $\mathcal{T}\subset \R^n$. Assume that for all $\vct{x},\vct{y}\in\mathcal{T}$, we have
\begin{align*}
\|X_{\vct{x}}-X_{\vct{y}}\|_{\psi_2}\le K\twonorm{\vct{x}-\vct{y}}.
\end{align*}
The for every $u\ge 0$ we have
\begin{align*}
\underset{\vct{x}\in\mathcal{T}}{\sup}\text{ }X_{\vct{x}}\le CK\left(\omega(\mathcal{T})+u\cdot\text{diam}(\mathcal{T})\right),
\end{align*}
with probability at least $1-2e^{-u^2}$.
\end{lemma}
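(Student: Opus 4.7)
The plan is to prove this tail bound via Talagrand's generic chaining, which is the route taken by Liaw, Mehrabian, Plan, and Vershynin in the paper cited. I would proceed in three steps: (i) reduce to a centered process, (ii) execute a chaining argument driven by an admissible partition sequence, and (iii) convert the resulting $\gamma_2$-functional bound into a Gaussian-width bound via Talagrand's majorizing measure theorem.

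First, I fix an arbitrary $\vct{x}_0\in\mathcal{T}$ and define $\widetilde X_{\vct{x}}:=X_{\vct{x}}-X_{\vct{x}_0}$, so that $\widetilde X_{\vct{x}_0}=0$ while the increments still obey $\|\widetilde X_{\vct{x}}-\widetilde X_{\vct{y}}\|_{\psi_2}\le K\twonorm{\vct{x}-\vct{y}}$. Since $\sup_{\vct{x}\in\mathcal{T}} X_{\vct{x}} = X_{\vct{x}_0} + \sup_{\vct{x}\in\mathcal{T}} \widetilde X_{\vct{x}}$, it suffices to control $\sup_{\vct{x}\in\mathcal{T}}\widetilde X_{\vct{x}}$ (in the intended application of this lemma the process is already centered, so the stray constant $X_{\vct{x}_0}$ is absent).

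Next, I fix an admissible sequence of partitions $\{\mathcal{A}_n\}_{n\ge 0}$ of $\mathcal{T}$ with $|\mathcal{A}_0|=1$ and $|\mathcal{A}_n|\le 2^{2^n}$ for $n\ge 1$, chosen to nearly attain Talagrand's functional $\gamma_2(\mathcal{T}):=\inf\sup_{\vct{x}\in\mathcal{T}}\sum_{n\ge 0}2^{n/2}\mathrm{diam}(A_n(\vct{x}))$, where $A_n(\vct{x})$ denotes the cell of $\mathcal{A}_n$ containing $\vct{x}$. Pick a representative $\pi_n(\vct{x})\in A_n(\vct{x})$ with $\pi_0(\vct{x})=\vct{x}_0$ and telescope $\widetilde X_{\vct{x}}=\sum_{n\ge 1}\bigl(X_{\pi_n(\vct{x})}-X_{\pi_{n-1}(\vct{x})}\bigr)$. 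At level $n$ there are at most $|\mathcal{A}_n|\cdot|\mathcal{A}_{n-1}|\le 2^{2^{n+1}}$ distinct chain links, each sub-Gaussian with scale $\le K\,\mathrm{diam}(A_{n-1}(\vct{x}))$. Applying the standard sub-Gaussian tail $\P(|Z|>t)\le 2\exp(-ct^2/\|Z\|_{\psi_2}^2)$ at threshold $s_n=CK\,\mathrm{diam}(A_{n-1}(\vct{x}))\bigl(2^{(n+1)/2}+u\bigr)$ and taking a union bound across the $2^{2^{n+1}}$ links at level $n$, the event that every link simultaneously obeys $|X_{\pi_n(\vct{x})}-X_{\pi_{n-1}(\vct{x})}|\le s_n$ has probability at least $1-\sum_{n\ge 1}2e^{-2^{n+1}-u^2}\ge 1-2e^{-u^2}$. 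Summing $s_n$ over $n$ and using $\mathrm{diam}(A_{n-1}(\vct{x}))\le\mathrm{diam}(\mathcal{T})$ for the term proportional to $u$ gives
$$\sup_{\vct{x}\in\mathcal{T}}\widetilde X_{\vct{x}}\le CK\bigl(\gamma_2(\mathcal{T})+u\cdot\mathrm{diam}(\mathcal{T})\bigr)$$
with probability at least $1-2e^{-u^2}$.

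Finally, I invoke Talagrand's majorizing measure theorem, applied to the canonical Gaussian process $G_{\vct{x}}=\langle\vct{g},\vct{x}\rangle$ on $\mathcal{T}$, which yields $\gamma_2(\mathcal{T})\lesssim\E\sup_{\vct{x}\in\mathcal{T}}G_{\vct{x}}=\omega(\mathcal{T})$. Substituting this into the previous display gives the claimed bound. The main obstacle in a fully self-contained treatment is the majorizing measure theorem itself, which is deep; for this paper's purposes the cleanest route is to cite it as a black box (it is also the key ingredient used by Dirksen and by Liaw--Mehrabian--Plan--Vershynin). Alternatively, one could replace $\gamma_2$ by Dudley's entropy integral via a weaker chaining argument, incurring only logarithmic losses that would still suffice for the downstream application in Lemma \ref{keylemma}.
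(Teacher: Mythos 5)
The paper does not actually prove this lemma: it is quoted as a black box from Liaw--Mehrabian--Plan--Vershynin (Theorem 4.1 there; see also Dirksen and Talagrand), so there is no internal proof to compare against. Your architecture --- generic chaining against an admissible partition sequence, then the majorizing measure theorem to replace $\gamma_2(\mathcal{T})$ by $\omega(\mathcal{T})$ --- is exactly the proof in those references, so the route is the intended one.

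There is, however, a genuine flaw in your bookkeeping of the deviation term. With thresholds $s_n = CK\,\mathrm{diam}(A_{n-1}(\vct{x}))\,(2^{(n+1)/2}+u)$ at every level, the $u$-part of $\sum_{n\ge 1}s_n$ is $u\sum_{n\ge 1}\mathrm{diam}(A_{n-1}(\vct{x}))$; bounding each diameter by $\mathrm{diam}(\mathcal{T})$ and summing over infinitely many levels diverges, so the step ``using $\mathrm{diam}(A_{n-1}(\vct{x}))\le\mathrm{diam}(\mathcal{T})$ for the term proportional to $u$'' does not yield $u\cdot\mathrm{diam}(\mathcal{T})$. If instead you use the available decay $\mathrm{diam}(A_{n-1}(\vct{x}))\lesssim 2^{-n/2}\gamma_2(\mathcal{T})$, the sum converges but produces a deviation of order $u\,\gamma_2(\mathcal{T})$ rather than $u\,\mathrm{diam}(\mathcal{T})$, which is strictly weaker and would not suffice downstream: in the proof of Lemma \ref{implemAWF} the bound is invoked with $t\asymp \delta\sqrt{m}$ on $\mathcal{T}=\mathcal{C}\cap\epsilon\mathbb{S}^{n-1}$, and a deviation $t\,\omega(\mathcal{T})$ in place of $t\,\mathrm{diam}(\mathcal{T})$ leaves, after dividing by $\epsilon\sqrt{m}$, a term of order $\delta\,\omega(\mathcal{C}\cap\mathbb{S}^{n-1})$ that is not small. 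The standard repair (this is how Dirksen and Talagrand obtain a deviation proportional to the diameter) is to start the chain at the $u$-dependent level $n_u$ with $2^{n_u}\asymp u^2$: control the single coarse link $X_{\pi_{n_u}(\vct{x})}-X_{\vct{x}_0}$ by $\lesssim Ku\,\mathrm{diam}(\mathcal{T})$ via a union bound over the at most $2^{2^{n_u}}\approx e^{cu^2}$ points of that net, and run the usual $2^{n/2}$-weighted chaining only for $n>n_u$ (equivalently, give the $u$-part geometrically decaying weights across levels). With that correction, and the majorizing measure theorem cited as a black box as you propose, your argument goes through and reproduces the quoted statement.
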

To apply this lemma define the random process
\begin{align*}
X_{\vct{x}}:=f(\mtx{A}\vct{x})-\E[f(\mtx{A}\vct{x})].
\end{align*}
We shall prove that $X_{\vct{x}}$ has sub-gaussian increments with respect to the Euclidean norm, namely
\begin{align}
\label{sginc}
\|X_{\vct{x}}-X_{\vct{y}}\|_{\psi_2}\le c L\twonorm{\vct{x}-\vct{y}}\quad\text{for all}\quad\vct{x},\vct{y}\in\mathcal{T}.
\end{align}
The latter inequality together with Lemma \ref{talag} immediately implies \eqref{keylemeq} concluding the proof. All that remains is to show that the sub-Gaussian property \eqref{sginc} indeed holds. To this aim, first note that without loss of generality we may assume that $L=1$. We also note that since $\mathcal{T}$ is a subset of the unit sphere of radius $\epsilon$ ($\mathcal{T}\subset \epsilon \mathbb{S}^{n-1}$) $\twonorm{\vct{x}}=\twonorm{\vct{y}}=\epsilon$ for all $\vct{x},\vct{y}\in\mathcal{T}$. When $\twonorm{\vct{x}}=\twonorm{\vct{y}}=\epsilon$, $\mtx{A}\vct{x}$ and $\mtx{A}\vct{y}$ have the same marginal distribution (not necessarily independent), that is $\mtx{A}\vct{x}\sim \epsilon \vct{g}$ and $\mtx{A}\vct{y}\sim \epsilon \vct{g'}$ with $\vct{g},\vct{g'}\sim\mathcal{N}(\vct{0},\mtx{I})$. Thus
\begin{align*}
\E[f(\mtx{A}\vct{x})]=\E[f(\epsilon \vct{g})]=\E[f(\epsilon \vct{g}')]=\E[f(\mtx{A}\vct{y})].
\end{align*}
Hence, proving the inequality in \eqref{sginc} reduces to proving
\begin{align}
\label{reduc}
\|f(\mtx{A}\vct{x})-f(\mtx{A}\vct{y})\|_{\psi_2}\le C\twonorm{\vct{x}-\vct{y}}.
\end{align}
Following \cite{HDP} we proceed by creating independence by considering the vectors
\begin{align*}
\vct{u}:=\frac{\vct{x}+\vct{y}}{2},\quad\vct{v}:=\frac{\vct{x}-\vct{y}}{2}.
\end{align*}
Then $\vct{x}=\vct{u}+\vct{v}$ and $\vct{y}=\vct{u}-\vct{v}$, and thus 
\begin{align*}
\mtx{A}\vct{x}=\mtx{A}\vct{u}+\mtx{A}\vct{v}\quad\text{and}\quad \mtx{A}\vct{y}=\mtx{A}\vct{u}-\mtx{A}\vct{v}.
\end{align*}
Note that the vectors $\vct{u}$ and $\vct{v}$ are orthogonal which implies that the Gaussian random vectors $\mtx{A}\vct{u}$ and $\mtx{A}\vct{v}$ are independent. Now note that conditioned on $\mtx{A}\vct{u}$, $\mtx{A}\vct{v}$ is a Gaussian random vector that can be expressed as 
\begin{align*}
\mtx{A}\vct{u}+\mtx{A}\vct{v}=\mtx{A}\vct{u}+\twonorm{\vct{v}}\vct{g},\quad\text{where}\quad\vct{g}\sim\mathcal{N}(\vct{0},\mtx{I}_m). 
\end{align*}
Note that the Lipschitz property implies that for any $\tilde{\vct{g}}$ and $\vct{g}$
\begin{align*}
\abs{f(\mtx{A}\vct{u}+\twonorm{\vct{v}}\tilde{\vct{g}})-f(\mtx{A}\vct{u}+\twonorm{\vct{v}}\vct{g})}\le \twonorm{\vct{v}}\twonorm{\tilde{\vct{g}}-\vct{g}},
\end{align*}
so that $f(\mtx{A}\vct{u}+\twonorm{\vct{v}}\vct{g})$ is a Lipschitz function of $\vct{g}$ (Lipschitz constant equal to $\twonorm{\vct{v}}$). Thus by concentration of Lipschitz functions of Gaussians we conclude that conditioned on $\mtx{A}\vct{u}$ we have
\begin{align}
\label{fA1}
\|f(\mtx{A}\vct{u}+\mtx{A}\vct{v})-\E[f(\mtx{A}\vct{u}+\mtx{A}\vct{v})]\|_{\psi_2}\le C\twonorm{\vct{v}}.
\end{align}
Now note that since the random vector $\mtx{A}\vct{u}-\mtx{A}\vct{v}$ has the same distribution as $\mtx{A}\vct{u}+\mtx{A}\vct{v}$, conditioned on $\mtx{A}\vct{u}$ it satisfies the same bound. That is,
\begin{align}
\label{fA2}
\|f(\mtx{A}\vct{u}-\mtx{A}\vct{v})-\E[f(\mtx{A}\vct{u}-\mtx{A}\vct{v})]\|_{\psi_2}\le C\twonorm{\vct{v}}.
\end{align}
Combining \eqref{fA1} and \eqref{fA2} via the triangular inequality we conclude that conditioned on $\mtx{A}\vct{u}$
\begin{align*}
\|f(\mtx{A}\vct{u}+\mtx{A}\vct{v})-f(\mtx{A}\vct{u}-\mtx{A}\vct{v})\|_{\psi_2}\le 2C\twonorm{\vct{v}},
\end{align*}
While this bound is for the conditional distribution, it also holds for the original distribution as its is true for any fixed $\mtx{A}\vct{u}$. So that 
\begin{align*}
\|f(\mtx{A}\vct{u}+\mtx{A}\vct{v})-f(\mtx{A}\vct{u}-\mtx{A}\vct{v})\|_{\psi_2}\le 2C\twonorm{\vct{v}}.
\end{align*}
holds without any conditioning. Rewriting $\vct{u}$ and $\vct{v}$ in terms of $\vct{x}$ and $\vct{y}$ we conclude that
\begin{align*}
\|f(\mtx{A}\vct{x})-f(\mtx{A}\vct{y})\|_{\psi_2}\le C\twonorm{\vct{x}-\vct{y}}.
\end{align*}
This completes the proof of \eqref{sginc}.
\subsection*{Acknowledgements}
M.S. would like to thank Martin Wainwright for helpful discussions and feedback during the early stages of this work while he was a postdoc at UC Berkeley. He would also like to thank Emmanuel Candes, Samet Oymak, Xiaodong Li, Laura Waller, Lei Tian and John Wright for many stimulating discussions related to this paper.

\bibliography{Bibfiles}
\bibliographystyle{plain}

\appendix 
\section{Proof of preliminary lemmas}\label{proofofsimp}
\subsection{Proof of Lemma \ref{gordontype}}
\label{pfgordontype}
This result is rather straightforward consequence of Gordon's escape through the mesh lemma stated below.
\begin{theorem}[Gordon's escape through the mesh]\label{GDlemma} Let $\delta\in(0,1)$, $\mathcal{C}\subset\R^n$ be a subset of the unit sphere ($\mathcal{C}\subset\mathbb{S}^{n-1}$) and let $\mtx{A}\in\R^{m\times n}$ be a matrix with i.i.d $\mathcal{N}(0,1)$ entries. Furthermore, set $b_m=\sqrt{2}\frac{\Gamma\left(\frac{m+1}{2}\right)}{\Gamma\left(\frac{m}{2}\right)}$. Then, 
\begin{align}
\label{Gisometry}
\abs{\frac{\twonorm{\mtx{A}\vct{x}}}{b_m}-\twonorm{\vct{x}}}\le \delta\twonorm{\vct{x}},
\end{align}
holds for all $\x\in\mathcal{C}$ with probability at least $1-2e^{-\frac{\eta^2}{2}}$ as long as
\begin{align}
\label{nummeas}
m\ge\frac{\left(\omega(\mathcal{T})+\eta\right)^2}{\delta^2}.
\end{align}
\end{theorem}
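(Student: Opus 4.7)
The plan is to derive this two-sided uniform norm-preservation from a Gaussian process comparison (Gordon's inequality, with a Sudakov-Fernique variant for the max-max direction) combined with Gaussian concentration of Lipschitz functions. Both sides of the inequality scale homogeneously in $\vct{x}$, so it suffices to treat the case $\mathcal{C}\subset\mathbb{S}^{n-1}$.

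I would introduce two Gaussian processes on the product index set $\mathcal{C}\times\mathbb{S}^{m-1}$,
\[
X_{(\vct{x},\vct{u})}:=\langle\vct{u},\mtx{A}\vct{x}\rangle,\qquad Y_{(\vct{x},\vct{u})}:=\twonorm{\vct{x}}\langle\vct{g},\vct{u}\rangle+\twonorm{\vct{u}}\langle\vct{h},\vct{x}\rangle,
\]
with $\vct{g}\sim\mathcal{N}(\vct{0},\mtx{I}_m)$ and $\vct{h}\sim\mathcal{N}(\vct{0},\mtx{I}_n)$ independent. A direct computation yields, for unit-norm indices,
\[
\E\bigl[(Y_\alpha-Y_\beta)^2\bigr]-\E\bigl[(X_\alpha-X_\beta)^2\bigr]=2\bigl(1-\langle\vct{u},\vct{u}'\rangle\bigr)\bigl(1-\langle\vct{x},\vct{x}'\rangle\bigr)\ge 0,
\]
with equality whenever $\vct{x}=\vct{x}'$, which is exactly the hypothesis needed both for Gordon's $\min_{\vct{x}}\max_{\vct{u}}$ comparison and for the $\max_{(\vct{x},\vct{u})}$ Sudakov-Fernique comparison. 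Since $\max_{\vct{u}\in\mathbb{S}^{m-1}}\langle\vct{u},\mtx{A}\vct{x}\rangle=\twonorm{\mtx{A}\vct{x}}$ and $\max_{\vct{u}\in\mathbb{S}^{m-1}}Y_{(\vct{x},\vct{u})}=\twonorm{\vct{g}}+\langle\vct{h},\vct{x}\rangle$, the two comparisons deliver
\[
\E\bigl[\textstyle\inf_{\vct{x}\in\mathcal{C}}\twonorm{\mtx{A}\vct{x}}\bigr]\ge b_m-\omega(\mathcal{C}),\qquad \E\bigl[\textstyle\sup_{\vct{x}\in\mathcal{C}}\twonorm{\mtx{A}\vct{x}}\bigr]\le b_m+\omega(\mathcal{C}),
\]
after using $\E[\twonorm{\vct{g}}]=b_m$ and the symmetry $-\vct{h}\sim\vct{h}$.

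Next I would upgrade each expectation bound to a high-probability statement. The maps $\mtx{A}\mapsto\inf_{\vct{x}\in\mathcal{C}}\twonorm{\mtx{A}\vct{x}}$ and $\mtx{A}\mapsto\sup_{\vct{x}\in\mathcal{C}}\twonorm{\mtx{A}\vct{x}}$ are both $1$-Lipschitz in the Frobenius norm on $\R^{m\times n}$, because for $\twonorm{\vct{x}}=1$ we have $|\twonorm{\mtx{A}\vct{x}}-\twonorm{\mtx{A}'\vct{x}}|\le\twonorm{(\mtx{A}-\mtx{A}')\vct{x}}\le\fronorm{\mtx{A}-\mtx{A}'}$. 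The Borell-Tsirelson-Ibragimov-Sudakov Gaussian concentration inequality then gives a one-sided deviation of $\eta$ with probability at most $e^{-\eta^2/2}$, and a union bound yields
\[
\sup_{\vct{x}\in\mathcal{C}}\bigl|\twonorm{\mtx{A}\vct{x}}-b_m\bigr|\le \omega(\mathcal{C})+\eta
\]
with probability at least $1-2e^{-\eta^2/2}$. Dividing through by $b_m\approx\sqrt{m}$ and invoking the hypothesis $m\ge(\omega(\mathcal{C})+\eta)^2/\delta^2$ converts this absolute bound into the required relative bound $|\twonorm{\mtx{A}\vct{x}}/b_m-\twonorm{\vct{x}}|\le\delta\twonorm{\vct{x}}$, uniformly for all $\vct{x}$ in the cone by homogeneity.

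The main obstacle, though it is a classical one, is the correct setup of the two processes and the verification of the Gordon/Sudakov-Fernique comparison hypothesis; once the variance identity above is in hand, the concentration step and the rescaling back to the cone are routine.
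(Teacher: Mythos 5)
The paper states this theorem without proof, citing Gordon's original escape--through--the--mesh result; your job was therefore to supply the missing argument, and what you have written is the correct standard one. The two--process setup with $X_{(\vct{x},\vct{u})}=\langle\vct{u},\mtx{A}\vct{x}\rangle$ and $Y_{(\vct{x},\vct{u})}=\twonorm{\vct{x}}\langle\vct{g},\vct{u}\rangle+\twonorm{\vct{u}}\langle\vct{h},\vct{x}\rangle$, the increment--variance identity $\E[(Y_\alpha-Y_\beta)^2]-\E[(X_\alpha-X_\beta)^2]=2(1-\langle\vct{u},\vct{u}'\rangle)(1-\langle\vct{x},\vct{x}'\rangle)$ on the product of spheres (vanishing when $\vct{x}=\vct{x}'$, which is exactly the equality clause Gordon's minimax comparison requires), the resulting expectation bounds $\E[\inf_{\vct{x}}\twonorm{\mtx{A}\vct{x}}]\ge b_m-\omega(\mathcal{C})$ and $\E[\sup_{\vct{x}}\twonorm{\mtx{A}\vct{x}}]\le b_m+\omega(\mathcal{C})$ via Gordon and Sudakov--Fernique respectively, and the lift to a high--probability statement using Borell--TIS for the two $1$--Lipschitz (in Frobenius norm) functionals are all correct, and this is the same Gaussian--process--comparison machinery the paper itself uses to prove its generalization in Appendix~\ref{pfgordontype2}. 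You also correctly read the paper's $\omega(\mathcal{T})$ in the hypothesis as $\omega(\mathcal{C})$; $\mathcal{T}$ there is an unreplaced typo.

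One small thing to tighten in the final step: what your argument actually delivers is $\sup_{\vct{x}\in\mathcal{C}}\abs{\twonorm{\mtx{A}\vct{x}}-b_m}\le\omega(\mathcal{C})+\eta$, and to turn this into $\abs{\twonorm{\mtx{A}\vct{x}}/b_m-1}\le\delta$ you need $\omega(\mathcal{C})+\eta\le\delta\,b_m$, while the stated hypothesis only gives $\omega(\mathcal{C})+\eta\le\delta\sqrt{m}$. Since $b_m<\sqrt{m}$, the implication does not literally close; it is off by the factor $\sqrt{m}/b_m\le\sqrt{1+1/m}$. The slack is harmless given that the paper tracks no constants (and vanishes if the hypothesis is phrased with $b_m^2$ or $m-1$ in place of $m$), but the transition from the additive to the multiplicative statement should go through $\abs{\twonorm{\mtx{A}\vct{x}}/b_m-1}\le(\omega(\mathcal{C})+\eta)/b_m$ together with the explicit estimate $b_m\ge\sqrt{m-1}$ rather than the informal $b_m\approx\sqrt{m}$; as written, the last sentence of your argument slightly overstates what the hypothesis $m\ge(\omega(\mathcal{C})+\eta)^2/\delta^2$ buys you.
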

From Gordon's lemma above together with the fact that $\delta\le 1$ we conclude that
\begin{align}
\label{GordonQuad}
-2\delta\twonorm{\vct{x}}^2\le \frac{\twonorm{\mtx{A}\vct{x}}^2}{b_m^2}-\twonorm{\vct{x}}^2\le3\delta\twonorm{\vct{x}}^2
\end{align}
Using \eqref{GordonQuad} together with the fact that $b_m^2\le m$ we conclude that
\begin{align}
\label{oneside}
\frac{\twonorm{\mtx{A}\vct{x}}^2}{m}-\twonorm{\vct{x}}^2\le \frac{\twonorm{\mtx{A}\vct{x}}^2}{b_m^2}-\twonorm{\vct{x}}^2 \le3\delta\twonorm{\vct{x}}^2.
 \end{align}
Using \eqref{GordonQuad} together with the fact that $b_m^2\ge m-\frac{1}{2}$ and $m\ge \frac{1}{2\delta}-1$ we conclude that
\begin{align}
\label{otherside}
-2\delta\twonorm{\vct{x}}^2\le\frac{\twonorm{\mtx{A}\vct{x}}^2}{m-\frac{1}{2}}-\twonorm{\vct{x}}^2\Rightarrow \frac{\twonorm{\mtx{A}\vct{x}}^2}{m} -\twonorm{\vct{x}}^2\ge\left(-2\delta+\frac{\delta}{m}-\frac{1}{2m}\right)\twonorm{\vct{x}}^2\ge -3\delta \twonorm{\vct{x}}^2.
\end{align}
Combining \eqref{oneside} and \eqref{otherside} and replacing $\delta$ with $\delta/3$ we conclude that as long as
\begin{align*}
m\ge \max\left(9\frac{\left(\omega(\mathcal{C}\cap\mathbb{S}^{n-1})+\eta\right)^2}{\delta^2},\frac{3}{2\delta}-1\right),
\end{align*}
then
\begin{align*}
\abs{\frac{1}{m}\twonorm{\mtx{A}\vct{x}}^2-\twonorm{\vct{x}}^2}\le\delta\twonorm{\vct{x}}^2,
\end{align*}
holds with probability at least $1-2e^{-\frac{\eta^2}{2}}$. Using $\eta=\frac{\delta}{6\sqrt{5}}\sqrt{m}$ and the inequality $(a+b)^2\le 2a^2+2b^2$ as long as
\begin{align*}
m\ge \max\left(20\frac{\omega^2(\mathcal{C}\cap\mathbb{S}^{n-1})}{\delta^2},\frac{3}{2\delta}-1\right),
\end{align*}
then
\begin{align*}
\abs{\frac{1}{m}\twonorm{\mtx{A}\vct{x}}^2-\twonorm{\vct{x}}^2}\le\delta\twonorm{\vct{x}}^2,
\end{align*}
holds with probability at least $1-2e^{-\frac{\delta^2}{360}m}$.

\subsection{Proof of Lemma \ref{gordontypenonsym}}\label{appendix2}
Note that
\begin{align}
\label{mylemma42}
\frac{1}{m}\sum_{r=1}^m \left(\vct{a}_r^*\frac{\vct{u}}{\twonorm{\vct{u}}}\right)\left(\vct{a}_r^*\frac{\vct{h}}{\twonorm{\vct{h}}}\right)-\frac{(\vct{u}^*\vct{h})}{\twonorm{\vct{u}}\twonorm{\vct{h}}}=&2\left(\frac{1}{m}\sum_{r=1}^m\left(\vct{a}_r^*\left(\frac{\vct{u}}{2\twonorm{\vct{u}}}+\frac{\vct{h}}{2\twonorm{\vct{h}}}\right)\right)^2-\twonorm{\frac{\vct{u}}{2\twonorm{\vct{u}}}+\frac{\vct{h}}{2\twonorm{\vct{h}}}}^2\right)\nonumber\\
&-\left(\frac{1}{m}\sum_{r=1}^m\left(\vct{a}_r^*\left(\frac{\vct{u}}{\twonorm{\vct{u}}}\right)\right)^2-1\right)-\left(\frac{1}{m}\sum_{r=1}^m\left(\vct{a}_r^*\left(\frac{\vct{h}}{\twonorm{\vct{h}}}\right)\right)^2-1\right)
\end{align}
Now note that 
\begin{align*}
\frac{\vct{u}}{2\twonorm{\vct{u}}}+\frac{\vct{h}}{2\twonorm{\vct{h}}}\in\frac{1}{2}\mathcal{C}\cap\mathbb{S}^{n-1}+\frac{1}{2}\mathcal{C}\cap\mathbb{S}^{n-1}
\end{align*}
Thus by Lemma \ref{gordontype} as long as
\begin{align*}
m\ge& \max\left(80\frac{\omega^2\left(\frac{1}{2}\mathcal{C}\cap\mathbb{S}^{n-1}+\frac{1}{2}\mathcal{C}\cap\mathbb{S}^{n-1}\right)}{\delta^2},\frac{2}{\delta}-1\right),\\
\ge &\max\left(80\frac{\omega^2\left(\mathcal{C}\cap\mathbb{S}^{n-1}\right)}{\delta^2},\frac{2}{\delta}-1\right)
\end{align*}
then for all $\vct{u},\vct{h}\in\mathcal{C}$
\begin{align*}
\abs{\frac{1}{m}\sum_{r=1}^m\left(\vct{a}_r^*\left(\frac{\vct{u}}{2\twonorm{\vct{u}}}+\frac{\vct{h}}{2\twonorm{\vct{h}}}\right)\right)^2-\twonorm{\frac{\vct{u}}{2\twonorm{\vct{u}}}+\frac{\vct{h}}{2\twonorm{\vct{h}}}}^2}\le \frac{\delta}{4}
\end{align*}
holds with probability at least $1-2e^{-\frac{\delta^2}{1440}m}$. Similarly, as long as
\begin{align*}
m\ge \max\left(80\frac{\omega^2\left(\mathcal{C}\cap\mathbb{S}^{n-1}\right)}{\delta^2},\frac{2}{\delta}-1\right)
\end{align*}
then
\begin{align*}
\abs{\frac{1}{m}\sum_{r=1}^m\left(\vct{a}_r^*\left(\frac{\vct{u}}{\twonorm{\vct{u}}}\right)\right)^2-1}\le& \frac{\delta}{4}\\
\abs{\frac{1}{m}\sum_{r=1}^m\left(\vct{a}_r^*\left(\frac{\vct{h}}{\twonorm{\vct{h}}}\right)\right)^2-1}\le& \frac{\delta}{4}
\end{align*}
holds with probability at least $1-4e^{-\frac{\delta^2}{1440}m}$. Now uses these bounds via the triangular inequality in \eqref{mylemma42} we conclude that 
\begin{align*}
\abs{\frac{1}{m}\sum_{r=1}^m (\vct{a}_r^*\vct{u})(\vct{a}_r^*\vct{h})-\vct{u}^*\vct{h}}\le \delta\twonorm{\vct{u}}\twonorm{\vct{h}}
\end{align*}
holds with probability at least $1-6e^{-\frac{\delta^2}{1440}m}$.
\subsection{Proof of Lemma \ref{GTtypelem}}
\label{pfgordontype2}
Our proof is related to the proof of Gordon's celebrated escape through the mesh. We will first show the bound $\twonorm{\mtx{D}\mtx{A}\vct{u}}\le b_m(\vct{d})\twonorm{\vct{u}}+\infnorm{\vct{d}}\omega(\mathcal{T})+\eta$. For $\vct{u}\in\mathcal{T}$ and $\vct{v}\in\mathbb{S}^{m-1}=\{\vct{v}\in\R^m;\text{ }\twonorm{\vct{v}}=1\}$, we define the two Gaussian processes
\begin{align*}
X_{\vct{u},\vct{v}}=\vct{v}^*\mtx{D}\mtx{A}\vct{u}\quad\text{and}\quad Y_{\vct{u},\vct{v}}=\twonorm{\vct{u}}\vct{a}^*\mtx{D}\vct{v}+\twonorm{\mtx{D}\vct{v}}\vct{g}^*\vct{u}.
\end{align*}
Here $\vct{a}\in\R^m$ is distributed as $\mathcal{N}(\vct{0},\mtx{I}_m)$ and $\vct{g}\in\R^n$ is distributed as $\mathcal{N}(\vct{0},\mtx{I}_n)$. It follows that for all $\vct{u},\vct{u}'\in\mathcal{T}$ and $\vct{v},\vct{v}'\in\mathbb{S}^{m-1}$, we have
\begin{align*}
\mathbb{E}\abs{Y_{\vct{u},\vct{v}}-Y_{\vct{u}',\vct{v}'}}^2-\mathbb{E}\abs{X_{\vct{u},\vct{v}}-X_{\vct{u}',\vct{v}'}}^2=&\twonorm{\twonorm{\vct{u}}\mtx{D}\vct{v}-\twonorm{\vct{u}'}\mtx{D}\vct{v}'}^2+\twonorm{\twonorm{\mtx{D}\vct{v}}\vct{u}-\twonorm{\mtx{D}\vct{v}'}\vct{u}'}^2\\
&-\fronorm{\vct{u}(\mtx{D}\vct{v})^*-\vct{u}'(\mtx{D}\vct{v}')^*}^2\\
=&\twonorm{\vct{u}}^2\twonorm{\mtx{D}\vct{v}}^2+\twonorm{\vct{u}'}^2\twonorm{\mtx{D}\vct{v}'}^2+2\langle \vct{u},\vct{u}'\rangle\langle \mtx{D}\vct{v},\mtx{D}\vct{v}'\rangle\\
&-2\twonorm{\vct{u}}\twonorm{\vct{u}'}\langle\mtx{D}\vct{v},\mtx{D}\vct{v}'\rangle-2\twonorm{\mtx{D}\vct{v}}\twonorm{\mtx{D}\vct{v}'}\langle\vct{u},\vct{u}'\rangle\\
=&\twonorm{\twonorm{\vct{u}}\twonorm{\mtx{D}\vct{v}}-\twonorm{\vct{u}'}\twonorm{\mtx{D}\vct{v}'}}^2\\
&+2(\twonorm{\mtx{D}\vct{v}}\twonorm{\mtx{D}\vct{v}'}-\langle\mtx{D}\vct{v},\mtx{D}\vct{v}'\rangle)(\twonorm{\vct{u}}\twonorm{\vct{u}'}-\langle\vct{u},\vct{u}'\rangle)\\
\ge& 0,
\end{align*}
with equality if $\vct{u}=\vct{u}'$ and $\mtx{D}\vct{v}=\mtx{D}\vct{v}'$.

We note that by standard concentration of measure for Gaussian random variables
\begin{align*}
\mathbb{P}\Big\{\twonorm{\mtx{D}\vct{a}}\ge b_m(\vct{d})+\eta\Big\}\le e^{-\frac{\eta^2}{2\infnorm{\vct{d}}^2}}.
\end{align*}
We also have
\begin{align*}
\Big\{\vct{a}: \quad\twonorm{\vct{u}}\twonorm{\mtx{D}\vct{a}}\ge \twonorm{\vct{u}}b_m(\vct{d})+\eta\Big\}\subset&\Big\{\vct{a}: \quad\twonorm{\vct{u}}\twonorm{\mtx{D}\vct{a}}\ge \twonorm{\vct{u}}b_m(\vct{d})+\twonorm{\vct{u}}\frac{\eta}{\sigma(\mathcal{T})}\Big\}\\
=&\Big\{\vct{a}: \quad\twonorm{\mtx{D}\vct{a}}\ge b_m(\vct{d})+\frac{\eta}{\sigma(\mathcal{T})}\Big\}\\
=&\Big\{\vct{a}: \quad\twonorm{\mtx{D}\vct{a}}\ge b_m(\vct{d})+\frac{\eta}{\sigma(\mathcal{T})}\Big\}.
\end{align*}
Thus, 
\begin{align}
\mathbb{P}\Bigg\{\underset{\vct{u}\in\mathcal{T}}{\bigcup}\big\{\vct{a}:\quad\twonorm{\vct{u}}\twonorm{\mtx{D}\vct{a}}&>\twonorm{\vct{u}} b_m(\vct{d})+\eta_1\big\}\Bigg\}\nonumber\\
&\le\mathbb{P}\Big\{\vct{a}:\text{ }\twonorm{\mtx{D}\vct{a}}\ge b_m(\vct{d})+\frac{\eta_1}{\sigma(\mathcal{T})}\Big\}\le e^{-\frac{\eta_1^2}{2\infnorm{\vct{d}}^2\sigma^2(\mathcal{T})}},
\end{align}
which immediately implies
\begin{align}
\label{myfirstineqGT}
\mathbb{P}\Big\{\underset{\vct{u}\in\mathcal{T}}{\max}\text{ }\twonorm{\vct{u}}\left(\twonorm{\mtx{D}\vct{a}}-b_m(\vct{d})\right)>\frac{\eta}{2}\Big\}\le e^{-\frac{\eta^2}{8\infnorm{\vct{d}}^2\sigma^2(\mathcal{T})}}.
\end{align}
Also 
\begin{align}
\label{mysecondineqGT}
\mathbb{P}\Big\{\text{ }\infnorm{\vct{d}}\underset{\vct{u}\in\mathcal{T}}{\max} \text{ }\left(\vct{g}^*\vct{u}\right)>\infnorm{\vct{d}}\omega(\mathcal{T})+\frac{\eta}{2}\Big\}=\mathbb{P}\Big\{\text{ }\infnorm{\vct{d}}\underset{\vct{u}\in\mathcal{T}}{\max} \text{ }\left(\vct{g}^*\vct{u}\right)>\infnorm{\vct{d}}\E\big[\underset{\vct{u}\in\mathcal{T}}{\max} \text{ }\left(\vct{g}^*\vct{u}\right)\big]+\frac{\eta}{2}\Big\}\le e^{-\frac{\eta^2}{8\infnorm{\vct{d}}^2\sigma^2(\mathcal{T})}}.
\end{align}
%
Note that if
\begin{align*}
\underset{\vct{u}\in\mathcal{T}, \vct{v}\in\mathbb{S}^{n-1}}{\max}\text{ }\left(\twonorm{\vct{u}}\left(\vct{a}^*\mtx{D}\vct{v}-b_m(\vct{d})\right)+\twonorm{\mtx{D}\vct{v}}\left((\vct{g}^*\vct{u})-\omega(\mathcal{T})\right)\right)>\eta,
\end{align*}
then either
\begin{align*}
\underset{\vct{u}\in\mathcal{T}, \vct{v}\in\mathbb{S}^{n-1}}{\max}\text{ }\twonorm{\vct{u}}\left(\vct{a}^*\mtx{D}\vct{v}-b_m(\vct{d})\right)=\underset{\vct{u}\in\mathcal{T}}{\max}\text{ }\twonorm{\vct{u}}\left(\twonorm{\mtx{D}\vct{a}}-b_m(\vct{d})\right)>\frac{\eta}{2},
\end{align*}
or
\begin{align*}
\underset{\vct{u}\in\mathcal{T}, \vct{v}\in\mathbb{S}^{n-1}}{\max}\text{ }\twonorm{\mtx{D}\vct{v}}\left((\vct{g}^*\vct{u})-\omega(\mathcal{T})\right)=\infnorm{\vct{d}}\cdot\underset{\vct{u}\in\mathcal{T}}{\max} \text{ }\left(\vct{g}^*\vct{u}\right)-\infnorm{\vct{d}}\omega(\mathcal{T})>\frac{\eta}{2}.
\end{align*}
This implies that
\begin{align*}
\big\{\vct{a},\vct{g}:\text{ }\underset{\vct{u}\in\mathcal{T}}{\max}\text{ }\left(\twonorm{\vct{u}}\twonorm{\mtx{D}\vct{a}}+\infnorm{\vct{d}}\vct{g}^*\vct{u}-b_m(\vct{d})\twonorm{\vct{u}}\right)>\infnorm{\vct{d}}\omega(\mathcal{T})+\eta\big\}
\end{align*}
is a subset of 
\begin{align*}
\big\{\vct{a},\vct{g}:\text{ }\underset{\vct{u}\in\mathcal{T}}{\max}\text{ }\twonorm{\vct{u}}\left(\twonorm{\mtx{D}\vct{a}}-b_m(\vct{d})\right)>\frac{\eta}{2}\big\}\bigcup\big\{\vct{a},\vct{g}:\text{ }\infnorm{\vct{d}}\cdot\underset{\vct{u}\in\mathcal{T}}{\max} \text{ }\left(\vct{g}^*\vct{u}\right)>\infnorm{\vct{d}}\omega(\mathcal{T})+\frac{\eta}{2}\big\}.
\end{align*}
Using the latter together with \eqref{myfirstineqGT} and \eqref{mysecondineqGT} and using the independence of $\vct{a}$ and $\vct{g}$ we have
\begin{align}
\label{mythirdineqGT}
\mathbb{P}\Big\{\underset{\vct{u}\in\mathcal{T}, \vct{v}\in\mathbb{S}^{n-1}}{\max}\text{ }&\left(\twonorm{\vct{u}}\left(\vct{a}^*\mtx{D}\vct{v}-b_m(\vct{d})\right)+\twonorm{\mtx{D}\vct{v}}\left((\vct{g}^*\vct{u})-\omega(\mathcal{T})\right)\right)>\eta\Big\}\nonumber\\
\le& \mathbb{P}\Big\{\underset{\vct{u}\in\mathcal{T}}{\max}\text{ }\twonorm{\vct{u}}\left(\twonorm{\mtx{D}\vct{a}}-b_m(\vct{d})\right)>\frac{\eta}{2}\Big\}+\mathbb{P}\Big\{\text{ }\infnorm{\vct{d}}\cdot\underset{\vct{u}\in\mathcal{T}}{\max} \text{ }\left(\vct{g}^*\vct{u}\right)>\infnorm{\vct{d}}\omega(\mathcal{T})+\frac{\eta}{2}\Big\}\nonumber\\
\le& 2e^{-\frac{\eta^2}{8\infnorm{\vct{d}}^2\sigma^2(\mathcal{T})}}.
\end{align}
The latter inequality is equivalent to
\begin{align*}
\mathbb{P}\Big\{\underset{\vct{u}\in\mathcal{T},\vct{v}\in\mathbb{S}^{n-1}}{\bigcup}[Y_{\vct{u},\vct{v}}>b_m(\vct{d})\twonorm{\vct{u}}+\infnorm{\vct{d}}\omega(\mathcal{T})+\eta]\Big\}\le 2e^{-\frac{\eta^2}{8\infnorm{\vct{d}}^2\sigma^2(\mathcal{T})}}.
\end{align*}
Now using Slepian's second inequality with $\eta_{\vct{u},\vct{v}}=b_m(\vct{d})\twonorm{\vct{u}}+\eta$ we have
\begin{align*}
\mathbb{P}\Big\{\underset{\vct{u}\in\mathcal{T},\vct{v}\in\mathbb{S}^{n-1}}{\bigcup}[X_{\vct{u},\vct{v}}>b_m(\vct{d})\twonorm{\vct{u}}&+\infnorm{\vct{d}}\omega(\mathcal{T})+\eta]\Big\}\nonumber\\
&\le\mathbb{P}\Big\{\underset{\vct{u}\in\mathcal{T},\vct{v}\in\mathbb{S}^{n-1}}{\bigcup}[Y_{\vct{u},\vct{v}}>b_m(\vct{d})\twonorm{\vct{u}}+\infnorm{\vct{d}}\omega(\mathcal{T})+\eta]\Big\}\le 2e^{-\frac{\eta^2}{8\infnorm{\vct{d}}^2\sigma^2(\mathcal{T})}}.
\end{align*}
Noting that
\begin{align*}
\mathbb{P}\Big\{\underset{\vct{u}\in\mathcal{T},\text{ }\vct{v}\in\mathbb{S}^{n-1}}{\max}\text{ }\mtx{X}_{\vct{u},\vct{v}}>\twonorm{\vct{u}}b_m(\vct{d})+\infnorm{\vct{d}}\omega(\mathcal{T})+\eta\Big\}=\mathbb{P}\Big\{\underset{\vct{u}\in\mathcal{T},\text{ }\vct{v}\in\mathbb{S}^{n-1}}{\bigcup}[X_{\vct{u},\vct{v}}>b_m(\vct{d})\twonorm{\vct{u}}+\infnorm{\vct{d}}\omega(\mathcal{T})+\eta]\Big\},
\end{align*}
concludes the proof. 

Next, we show that $\twonorm{\mtx{D}\mtx{A}\vct{u}}\ge b_m(\vct{d})\twonorm{\vct{u}}-\infnorm{\vct{d}}\omega(\mathcal{T})-\eta$. To accomplish this, we make use \cite[Lemma 5.1]{OymLAS}. 
\begin{corollary}\cite[Lemma 5.1]{OymLAS} Let $\mtx{A}\in\R^{m\times n},\vct{g}\in\R^n,\vct{h}\in\R^m$ be independent vectors with independent $\mathcal{N}(0,1)$ entries. Then, for any $c\in\R$
\begin{align*}
\mathbb{P}(\min_{\vct{u}\in \mathcal{T}}\max_{\vct{v}\in\mathcal{T}'}\vct{v}^*\mtx{A}\vct{u}-\phi(\vct{u},\vct{v})\geq c)\geq 2\mathbb{P}(\min_{\vct{u}\in\mathcal{T}}\max_{\vct{v}\in\mathcal{T}'}\vct{v}^*\vct{h}\twonorm{\vct{u}}-\vct{u}^*\vct{g}\twonorm{\vct{v}}-\phi(\vct{u},\vct{v})\geq c)-1.
\end{align*}
\end{corollary}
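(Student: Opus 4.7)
The approach is Gordon's Gaussian comparison theorem applied to a matched pair of Gaussian processes, followed by a symmetrization step that is responsible for the factor $2$ and the additive $-1$ on the right-hand side. This is the ``Gordon form'' of the Convex Gaussian Min-Max Theorem later spelled out by Thrampoulidis-Oymak-Hassibi, and the present corollary is essentially a black-box from the companion paper \cite{OymLAS}.

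First I would introduce the primary Gaussian process $X_{\vct{u},\vct{v}}=\vct{v}^*\mtx{A}\vct{u}-\phi(\vct{u},\vct{v})$ and the decoupled auxiliary process $Y_{\vct{u},\vct{v}}=\vct{v}^*\vct{h}\twonorm{\vct{u}}-\vct{u}^*\vct{g}\twonorm{\vct{v}}-\phi(\vct{u},\vct{v})$ indexed by $(\vct{u},\vct{v})\in\mathcal{T}\times\mathcal{T}'$, and verify the Gordon-Slepian increment hypotheses. A direct Gaussian computation gives
\begin{align*}
\E(X_{\vct{u},\vct{v}}-X_{\vct{u}',\vct{v}'})^2 &= \twonorm{\vct{u}}^2\twonorm{\vct{v}}^2+\twonorm{\vct{u}'}^2\twonorm{\vct{v}'}^2-2\langle\vct{u},\vct{u}'\rangle\langle\vct{v},\vct{v}'\rangle,\\
\E(Y_{\vct{u},\vct{v}}-Y_{\vct{u}',\vct{v}'})^2 &= \twonorm{\twonorm{\vct{u}}\vct{v}-\twonorm{\vct{u}'}\vct{v}'}^2+\twonorm{\twonorm{\vct{v}}\vct{u}-\twonorm{\vct{v}'}\vct{u}'}^2.
\end{align*}
A short manipulation based on Cauchy-Schwarz ($\langle\vct{u},\vct{u}'\rangle\leq\twonorm{\vct{u}}\twonorm{\vct{u}'}$ and the analogous bound in $\vct{v}$), together with an appropriate $\sqrt 2$-rescaling of $Y$ to equalize pointwise variances with $X$, confirms the Slepian-type ordering on diagonal increments ($\vct{u}=\vct{u}'$) and the reverse Gordon ordering on off-diagonal increments ($\vct{u}\neq\vct{u}'$). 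Since $\phi(\vct{u},\vct{v})$ is deterministic, subtracting it does not affect any increment covariance.

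Second, Gordon's theorem then delivers a one-sided stochastic comparison for the $\max_{\vct{v}}\min_{\vct{u}}$ functional; combined with the pointwise inequality $\min_{\vct{u}}\max_{\vct{v}}X\ge\max_{\vct{v}}\min_{\vct{u}}X$, one obtains
\begin{align*}
\mathbb{P}\bigl(\min_{\vct{u}\in\mathcal{T}}\max_{\vct{v}\in\mathcal{T}'}(\vct{v}^*\mtx{A}\vct{u}-\phi)\geq c\bigr)\ \geq\ \mathbb{P}\bigl(\max_{\vct{v}\in\mathcal{T}'}\min_{\vct{u}\in\mathcal{T}}(\vct{v}^*\vct{h}\twonorm{\vct{u}}-\vct{u}^*\vct{g}\twonorm{\vct{v}}-\phi)\geq c\bigr).
\end{align*}

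The main obstacle - and the origin of the constants $2$ and $-1$ - is the conversion of $\max_{\vct{v}}\min_{\vct{u}}$ on the auxiliary side into the $\min_{\vct{u}}\max_{\vct{v}}$ actually appearing in the statement. In the absence of any convexity of $\mathcal{T},\mathcal{T}',\phi$ no Sion-type saddle-point argument is available, and the loss is paid by a symmetrization of the auxiliary process exploiting the Gaussian sign symmetry $(\vct{g},\vct{h})\mapsto(-\vct{g},-\vct{h})$. Coupling the two sign-symmetric copies and a union bound produce the relation
\begin{align*}
\mathbb{P}\bigl(\max_{\vct{v}}\min_{\vct{u}} Y_{\vct{u},\vct{v}}\geq c\bigr)\ \geq\ 2\,\mathbb{P}\bigl(\min_{\vct{u}}\max_{\vct{v}} Y_{\vct{u},\vct{v}}\geq c\bigr)-1,
\end{align*}
and chaining this with the preceding Gordon comparison yields the claim. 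I expect this last symmetrization/union-bound step to be the subtle ingredient: it is precisely the point at which a convex-concave saddle-point argument (which is not assumed available here) would otherwise remove the factor $2$ and the $-1$ to give a clean $\mathbb{P}(X)\geq\mathbb{P}(Y)$ statement.
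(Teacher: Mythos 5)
The paper does not actually prove this statement -- it is imported verbatim (with a renaming of the auxiliary processes) as a black box from \cite[Lemma 5.1]{OymLAS}, so there is no paper proof to compare against. Nevertheless, your proposal for how such a proof should go contains two genuine errors.

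\textbf{The $\sqrt{2}$-rescaling does not satisfy Gordon's covariance conditions.} The processes $X_{\vct{u},\vct{v}}=\vct{v}^*\mtx{A}\vct{u}$ and $Y_{\vct{u},\vct{v}}=\twonorm{\vct{u}}\vct{v}^*\vct{h}-\twonorm{\vct{v}}\vct{u}^*\vct{g}$ have pointwise variances $\twonorm{\vct{u}}^2\twonorm{\vct{v}}^2$ and $2\twonorm{\vct{u}}^2\twonorm{\vct{v}}^2$ respectively, so a $\sqrt 2$-rescaling of the Gaussian part of $Y$ indeed equalizes variances and makes the same-row (Slepian) condition follow from Cauchy--Schwarz in $\vct{v}$. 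However, the cross-row condition then reads
\begin{align*}
\tfrac12\left(\twonorm{\vct{u}}\twonorm{\vct{u}'}(\vct{v}^*\vct{v}')+\twonorm{\vct{v}}\twonorm{\vct{v}'}(\vct{u}^*\vct{u}')\right) \;\le\; (\vct{u}^*\vct{u}')(\vct{v}^*\vct{v}'),
\end{align*}
which fails: take $\vct{u}'=-\vct{u}$ and $\vct{v}'=\vct{v}$, where the left-hand side is $0$ but the right-hand side is $-\twonorm{\vct{u}}^2\twonorm{\vct{v}}^2<0$. The correct remedy -- used in \cite{OymLAS} and the convex Gaussian min-max theorem literature -- is not a rescaling but the introduction of an independent scalar Gaussian $\gamma\sim\mathcal{N}(0,1)$ and the augmented primary process $Z_{\vct{u},\vct{v}}=\vct{v}^*\mtx{A}\vct{u}+\gamma\twonorm{\vct{u}}\twonorm{\vct{v}}$. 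One checks that $\E Z_{\vct{u},\vct{v}}^2=\E Y_{\vct{u},\vct{v}}^2$, equality on the same-row covariances, and on cross rows $\E Z_{\vct{u},\vct{v}}Z_{\vct{u}',\vct{v}'}-\E Y_{\vct{u},\vct{v}}Y_{\vct{u}',\vct{v}'}=(\vct{u}^*\vct{u}'-\twonorm{\vct{u}}\twonorm{\vct{u}'})(\vct{v}^*\vct{v}'-\twonorm{\vct{v}}\twonorm{\vct{v}'})\ge 0$, which is exactly the direction Gordon requires. Gordon's tail form then yields $\mathbb{P}(\min_{\vct{u}}\max_{\vct{v}}(Z-\phi)\ge c)\ge\mathbb{P}(\min_{\vct{u}}\max_{\vct{v}}(Y-\phi)\ge c)$ directly in $\min$--$\max$ form; there is no detour through $\max$--$\min$.

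\textbf{The factor $2$ and the $-1$ come from removing $\gamma$, not from a min/max swap.} Gordon already gives you the comparison in the desired $\min_{\vct{u}}\max_{\vct{v}}$ order on both sides, so no Sion-type or symmetrization argument to flip operators is needed (nor do I believe your proposed inequality $\mathbb{P}(\max\min Y\ge c)\ge2\mathbb{P}(\min\max Y\ge c)-1$ can hold in general, since $\max\min\le\min\max$ always implies the opposite monotonicity). The actual source of the constants is the elimination of $\gamma$: on the event $\{\gamma\le0\}$ one has $Z_{\vct{u},\vct{v}}\le\vct{v}^*\mtx{A}\vct{u}$ pointwise, and since $\gamma$ is independent of $\mtx{A}$,
\begin{align*}
\mathbb{P}\left(\min_{\vct{u}}\max_{\vct{v}}(Z-\phi)\ge c\right) \le \tfrac12\,\mathbb{P}\left(\min_{\vct{u}}\max_{\vct{v}}(\vct{v}^*\mtx{A}\vct{u}-\phi)\ge c\right)+\tfrac12.
\end{align*}
Rearranging and chaining with the Gordon comparison gives precisely the stated $2\mathbb{P}(\cdot)-1$ bound. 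Your high-level identification of the result as a Gordon comparison plus a step costing a factor of $2$ is accurate, but the proposed mechanism for both the covariance matching and the factor of $2$ is incorrect.
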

As long as $\mtx{D}$ is invertible, the previous lemma implies that
\begin{align}
\label{myeqd}
\mathbb{P}(\min_{\vct{u}\in \mathcal{T}}\max_{\vct{v}\in\mathbb{S}^{m-1}}\text{ }\vct{v}^*\mtx{D}\mtx{A}\vct{u}-b_m(\vct{d})&\twonorm{\vct{u}}\geq c)\nonumber\\
&\geq 2\mathbb{P}(\min_{\vct{u}\in\mathcal{T}}\max_{\vct{v}\in\mathbb{S}^{m-1}}\text{ }\vct{v}^*\mtx{D}\vct{h}\twonorm{\vct{u}}-\vct{u}^*\vct{g}\twonorm{\mtx{D}\vct{v}}-b_m(\vct{d})\twonorm{\vct{u}}\geq c)-1.
\end{align}
Assume $\vct{u}^*\vct{g}$ is non-negative, using $\vct{v}=\frac{\mtx{D}\vct{h}}{\twonorm{\mtx{D}\vct{h}}}$ and the definition of max we have
\begin{align*}
\max_{\vct{v}\in\mathbb{S}^{m-1}}\text{ }\vct{v}^*\mtx{D}\vct{h}\twonorm{\vct{u}}-\vct{u}^*\vct{g}\twonorm{\mtx{D}\vct{v}}\ge& \twonorm{\mtx{D}\vct{h}}\twonorm{\vct{u}}-(\vct{u}^*\vct{g})\twonorm{\mtx{D}\frac{\mtx{D}\vct{h}}{\twonorm{\mtx{D}\vct{h}}}},\nonumber\\
\ge&\twonorm{\mtx{D}\vct{h}}\twonorm{\vct{u}}-(\vct{u}^*\vct{g})\infnorm{\vct{d}}.
\end{align*}
Utilizing the latter inequality in \eqref{myeqd} we arrive at 
\begin{align}
\label{myeqd2}
\mathbb{P}(\min_{\vct{u}\in \mathcal{T}}\max_{\vct{v}\in\mathbb{S}^{m-1}}\text{ }\vct{v}^*\mtx{D}\mtx{A}\vct{u}-b_m(\vct{d})&\twonorm{\vct{u}}\geq c)\nonumber\\
&\geq 2\mathbb{P}(\min_{\vct{u}\in\mathcal{T}}\text{ }\twonorm{\mtx{D}\vct{h}}\twonorm{\vct{u}}-(\vct{u}^*\vct{g})\infnorm{\vct{d}}-b_m(\vct{d})\twonorm{\vct{u}}\geq c)-1.
\end{align}
Now using $c=-\omega(\mathcal{T})\infnorm{\vct{d}}-\eta$ in the above inequality we arrive at
\begin{align}
\label{myeqq}
\mathbb{P}\bigg(\min_{\vct{u}\in \mathcal{T}}\max_{\vct{v}\in\mathbb{S}^{m-1}}\text{ }\vct{v}^*\mtx{D}\mtx{A}\vct{u}-b_m(\vct{d})\twonorm{\vct{u}}&\le -\infnorm{\vct{d}}\omega(\mathcal{T})-\eta\bigg)\nonumber\\
&\le 2\mathbb{P}\left(\min_{\vct{u}\in \mathcal{T}} (\twonorm{\mtx{D}\vct{h}}-b_m(\vct{d}))\twonorm{\vct{u}}-\infnorm{\vct{d}}\left(\vct{u}^*\vct{g}-\omega(\mathcal{T})\right)\le-\eta\right).
\end{align}
Note that if $\min_{\vct{u}\in \mathcal{T}} (\twonorm{\mtx{D}\vct{h}}-b_m(\vct{d}))\twonorm{\vct{u}}-\infnorm{\vct{d}}\left(\vct{u}^*\vct{g}-\omega(\mathcal{T})\right)\le-\eta$, then either
\begin{align*}
\min_{\vct{u}\in \mathcal{T}}\text{ }(\twonorm{\mtx{D}\vct{h}}-b_m(\vct{d}))\twonorm{\vct{u}}\le -\frac{\eta}{2}
\end{align*}
or
\begin{align*}
\max_{\vct{u}\in \mathcal{T}}\text{ }\infnorm{\vct{d}}\left(\vct{u}^*\vct{g}-\omega(\mathcal{T})\right)\ge \frac{\eta}{2}.
\end{align*}
Thus applying the union bound in \eqref{myeqq} with some simple algebraic manipulations yields
\begin{align*}
\mathbb{P}\bigg(\min_{\vct{u}\in \mathcal{T}}\max_{\vct{v}\in\mathbb{S}^{m-1}}\text{ }\vct{v}^*\mtx{D}\mtx{A}\vct{u}-b_m&(\vct{d})\twonorm{\vct{u}}\le -\infnorm{\vct{d}}\omega(\mathcal{T})-\eta\bigg),\\
&\le 2\mathbb{P}\left(\twonorm{\mtx{D}\vct{h}}-b_m(\vct{d})\le -\frac{\eta}{2\sigma(\mathcal{T})}\right)+2\mathbb{P}\left(\max_{\vct{u}\in \mathcal{T}}\text{ }\vct{u}^*\vct{g}-\omega(\mathcal{T})\ge \frac{\eta}{2\infnorm{\vct{d}}}\right),\\
&\le4e^{-\frac{\eta^2}{4\infnorm{\vct{d}}^2\sigma^2(\mathcal{T})}}.
\end{align*}
\subsection{Proof of Lemma \ref{GordonExtra}}
\label{pfgordontype3}
Without loss of generality we assume $\twonorm{\vct{x}}$ and $\vct{h}\in\mathcal{C}\cap\mathbb{S}^{n-1}$. To prove this lemma note that
\begin{align}
\label{main44}
\abs{\frac{1}{m}\sum_{r=1}^m\left(\vct{a}_r^*\vct{h}\right)^2\left(\vct{a}_r^*\vct{x}\right)^2-\left(\twonorm{\vct{h}}^2+2(\vct{h}^*\vct{x})^2\right)}\le&(\vct{h}^*\vct{x})^2\abs{\frac{1}{m}\sum_{r=1}^m\left(\vct{a}_r^*\vct{x}\right)^4-3}\nonumber\\
&+\abs{\frac{1}{m}\sum_{r=1}^m(\vct{a}_r^*\vct{x})^2\left(\vct{a}_r^*(\vct{h}-(\vct{h}^*\vct{x})\vct{x})\right)^2-\twonorm{\vct{h}-(\vct{h}^*\vct{x})\vct{x}}^2}\nonumber\\
&+2\abs{\vct{h}^*\vct{x}}\abs{\frac{1}{m}\sum_{r=1}^m(\vct{a}_r^*\vct{x})^3\left(\vct{a}_r^*(\vct{h}-(\vct{h}^*\vct{x})\vct{x})\right)}.
\end{align}
We now proceed by bounding each of the terms separately. To bound the first term of \eqref{main44} we use the Chebyshev's inequality to conclude that for $m\ge\frac{1536}{\delta^2}$
\begin{align*}
\mathbb{P}\left(\abs{\frac{1}{m}\sum_{r=1}^m\left(\vct{a}_r^*\vct{x}\right)^4-3}\ge \frac{\delta}{4}\right)\le \frac{1536}{\delta^2m^2}\le \frac{1}{m},
\end{align*}
holds with probability at least $1-\frac{1}{m}$.

To bound the second term in \eqref{main44} note that for all $r$ the random variables $\vct{a}_r^*\vct{x}$ and $\vct{a}_r^*\left(\vct{h}-(\vct{h}^*\vct{x})\vct{x}\right)$ are independent from each other and therefore we can utilize Corollary \ref{coroG}. to conclude that as long as
\begin{align}
\label{sampE}
\sum_{r=1}^m (\vct{a}_r^*\vct{x})^2\ge \max\left(80\left(\max_{r}\abs{\vct{a}_r^*\vct{x}}^2\right)\frac{\omega^2(\mathcal{T})}{\delta^2},\frac{6}{\delta}-1\right),
\end{align}
then
\begin{align*}
\abs{\frac{1}{m}\sum_{r=1}^m(\vct{a}_r^*\vct{x})^2\left(\vct{a}_r^*(\vct{h}-(\vct{h}^*\vct{x})\vct{x})\right)^2-\twonorm{\vct{h}-(\vct{h}^*\vct{x})\vct{x}}^2}\le \frac{\delta}{4},
\end{align*}
holds with probability at least $1-6e^{-\frac{\delta^2}{1440}\left(\sum_{r=1}^m (\vct{a}_r^*\vct{x})^2\right)}$. Here, $\mathcal{T}$ denotes the projection of $\mathcal{C}\cap\mathbb{S}^{n-1}$ on the subspace orthogonal to the direction of $\vct{x}$. To bound the second term all that remains is to check that \eqref{sampE} holds. To this aim note that for a fixed vector $\vct{x}$
\begin{align*}
\max_{r}\text{ }\abs{\vct{a}_r^*\vct{x}}\le\sqrt{2\log n }\twonorm{\vct{x}},
\end{align*}
holds with probability at least $1-1/n$. Furthermore, by concentration of chi-squared random variables 
\begin{align*}
\frac{1}{m}\sum_{r=1}^m (\vct{a}_r^*\vct{x})^2\ge \frac{1}{2}.
\end{align*}
holds with probability at least $1-e^{-\gamma m}$ with $\gamma=0.01$. Therefore, using the fact that $\omega(\mathcal{T})\le\omega(\mathcal{C}\cap \mathbb{S}^{n-1})$, as long as 
\begin{align*}
m\ge \max\left(320\frac{\omega^2(\mathcal{C}\cap \mathbb{S}^{n-1})}{\delta^2}\log n,\frac{12}{\delta}-2\right),
\end{align*}
the second term of \eqref{main44} is bounded in absolute value by $\delta$ with probability at least $1-\frac{1}{n}-e^{\gamma_1 m}-6e^{-\gamma_2 \delta^2 m}$.

To bound the third term in \eqref{main44} note that we have
\begin{align*}
\frac{1}{m}\sum_{r=1}^m(\vct{a}_r^*\vct{x})^3\left(\vct{a}_r^*(\vct{h}-(\vct{h}^*\vct{x})\vct{x})\right)=\big\langle\frac{1}{m}\sum_{r=1}^m(\vct{a}_r^*\vct{x})^3\left(\vct{a}_r-(\vct{a}_r^*\vct{x})\vct{x}\right),\vct{h}-(\vct{h}^*\vct{x})\vct{x}\big\rangle.
\end{align*}
Now note that $\frac{1}{m}\sum_{r=1}^m(\vct{a}_r^*\vct{x})^3\left(\vct{a}_r-(\vct{a}_r^*\vct{x})\vct{x}\right)$ has the same distribution as $\left(\sqrt{\frac{1}{m^2}\sum_{r=1}^m(\vct{a}_r^*\vct{x})^6}\right)(\mtx{I}-\vct{x}\vct{x}^*)\vct{g}$ where $\vct{g}$ is a standard normal Gaussian random vector independent of the measurement vectors $\vct{a}_r$. So to bound the third term in \eqref{main44} it suffices to bound
\begin{align*}
\left(\sqrt{\frac{1}{m^2}\sum_{r=1}^m(\vct{a}_r^*\vct{x})^6}\right)\abs{\langle \vct{g},\vct{h}-(\vct{h}^*\vct{x})\vct{x}\rangle}.
\end{align*}
To bound this expression first note that by Chebyshev's inequality as long as $m\ge(2034)^2$,
\begin{align}
\label{mytempg}
\frac{1}{m}\sum_{r=1}^m(\vct{a}_r^*\vct{x})^6\le 20,
\end{align}
holds with probability at least $1-\frac{1}{m}$. Furthermore, by standard concentration of Gaussian processes
\begin{align*}
\abs{\langle \vct{g},\vct{h}-(\vct{h}^*\vct{x})\vct{x}\rangle}\le 2\omega(\mathcal{T})+\eta\le 2\omega(\mathcal{C}\cap\mathbb{S}^{n-1})+\eta,
\end{align*}
with probability at least $1-e^{-\frac{\eta^2}{2}}$. Now using $\eta=\frac{\delta}{16\sqrt{5}}\sqrt{m}$ in the above equation and combining it with \eqref{mytempg} we conclude that
\begin{align*}
\abs{\frac{1}{m}\sum_{r=1}^m(\vct{a}_r^*\vct{x})^3\left(\vct{a}_r^*(\vct{h}-(\vct{h}^*\vct{x})\vct{x})\right)}\le\frac{\sqrt{20}}{\sqrt{m}}(2\omega(\mathcal{C}\cap\mathbb{S}^{n-1})+\eta)=\frac{\sqrt{20}}{\sqrt{m}}\omega(\mathcal{C}\cap\mathbb{S}^{n-1})+\frac{\delta}{8},
\end{align*}
holds with probability at least $1-\frac{1}{m}-e^{-\frac{\delta^2}{2560}m}$. Therefore, as long as $m\ge1280\frac{\omega^2(\mathcal{C}\cap\mathbb{S}^{n-1})}{\delta^2}$, the third term with \eqref{main44} is bounded by $\frac{\delta}{2}$. Combining the bounds on the three terms concludes the proof of this lemma.

\subsection{Proof of Lemma \ref{RIP1}}
\label{ProofRIP1}
To prove this lemma we apply a powerful result of Mendelson in \cite{mendelson2012oracle}. Please also see \cite{eldar2014phase} for related calculations. To state this theorem we make use of two definitions. Define function classes
\begin{align*}
\mathcal{F}=&\{\abs{\langle\vct{u},\cdot\rangle}:\vct{u}\in \mathcal{T}_{\mathcal{F}}\}\quad\text{and}\quad\mathcal{H}=\{\abs{\langle\vct{u},\cdot\rangle}:\vct{u}\in \mathcal{T}_{\mathcal{H}}\}.
\end{align*}
Also for a set $\mathcal{T}$ define
\begin{align*}
d(\mathcal{T})=\underset{\vct{u}\in\mathcal{T}}{\sup}\twonorm{\vct{u}}^2.
\end{align*}
\begin{lemma}\cite{mendelson2012oracle}\label{mend} There exists absolute constants $c_1, c_2$ and $c_3$ for which the following holds. Let $\mathcal{T},\mathcal{H}\subset\R^n$ of cardinality at least 2 and set $\mathcal{F}$ and $\mathcal{H}$ to be the corresponding classes as defined above. Assume without loss of generality that $\frac{\omega(\mathcal{T}_{\mathcal{F}})}{d(\mathcal{T}_{\mathcal{F}})}\ge\frac{\omega(\mathcal{T}_{\mathcal{H}})}{d(\mathcal{T}_{\mathcal{H}})}$. For $t\ge c_1$, with probability at least
\begin{align*}
1-2\cdot\emph{exp}\left(-c_2t^2\min\Bigg\{m,\left(\frac{\omega(\mathcal{T}_\mathcal{F})}{d(\mathcal{T}_\mathcal{F})}\right)^2\Bigg\}\right),
\end{align*}
\begin{align*}
\underset{f\in\mathcal{F},\text{ }h\in\mathcal{H}}{\sup}\abs{\frac{1}{m}\sum_{r=1}^m f(\vct{a}_r)h(\vct{a}_r)-\mathbb{E} [fh]}\le c_3 t^2\left(td(\mathcal{T}_\mathcal{H})\frac{\omega(\mathcal{T}_\mathcal{F})}{\sqrt{m}}+\frac{\omega(\mathcal{T}_\mathcal{F})\omega(\mathcal{T}_\mathcal{H})}{m}\right).
\end{align*}
\end{lemma}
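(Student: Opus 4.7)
The plan is to follow Mendelson's approach via polarization, truncation, and generic chaining for sub-exponential processes. Since $\vct{a}\sim\mathcal{N}(\vct{0},\mtx{I})$, every $f(\vct{a})=\abs{\langle\vct{u},\vct{a}\rangle}$ with $\vct{u}\in\mathcal{T}_\mathcal{F}$ is sub-Gaussian with $\|f\|_{\psi_2}\lesssim\twonorm{\vct{u}}$, and likewise for $h\in\mathcal{H}$. The pointwise product $fh$ is therefore sub-exponential with $\|fh\|_{\psi_1}\lesssim\twonorm{\vct{u}}\twonorm{\vct{v}}\le\sqrt{d(\mathcal{T}_\mathcal{F})d(\mathcal{T}_\mathcal{H})}$, so Bernstein's inequality gives a pointwise bound exhibiting both a sub-Gaussian and a sub-exponential regime. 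The goal is to promote this pointwise estimate to a uniform one over $\mathcal{F}\times\mathcal{H}$ with complexity governed by the Gaussian widths $\omega(\mathcal{T}_\mathcal{F}),\omega(\mathcal{T}_\mathcal{H})$.

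The first step is polarization: the identity
\begin{align*}
4fh=(f+h)^2-(f-h)^2
\end{align*}
reduces the problem to uniform concentration of empirical second moments $\frac{1}{m}\sum_{r=1}^m g^2(\vct{a}_r)-\E[g^2]$ over the sum and difference classes indexed by $(\vct{u},\vct{v})\in\mathcal{T}_\mathcal{F}\times\mathcal{T}_\mathcal{H}$. For each such $g$ I would truncate at a level $\tau\asymp t\sqrt{d(\mathcal{T}_\mathcal{F})d(\mathcal{T}_\mathcal{H})}$, writing $g^2=g^2\mathbb{1}_{\{g\le\tau\}}+g^2\mathbb{1}_{\{g>\tau\}}$. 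The bounded piece admits a sub-Gaussian chaining bound: its increments in the index $(\vct{u},\vct{v})$ are sub-Gaussian with variance proxy controlled by $\tau\sqrt{d(\mathcal{T}_\mathcal{F})d(\mathcal{T}_\mathcal{H})}$, so Talagrand's generic chaining (equivalently a Dudley entropy integral, using that the natural process metric on the indexing set coincides with the Euclidean metric up to constants in the Gaussian setting) delivers a deviation of order $\omega(\mathcal{T}_\mathcal{F})\omega(\mathcal{T}_\mathcal{H})/m$, together with a $t\cdot d(\mathcal{T}_\mathcal{H})\cdot\omega(\mathcal{T}_\mathcal{F})/\sqrt{m}$ Bernstein-type deviation term. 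The unbounded piece $g^2\mathbb{1}_{\{g>\tau\}}$ is handled separately via moment bounds for Gaussian polynomials: for $\tau$ sufficiently large, both $\E[g^2\mathbb{1}_{\{g>\tau\}}]$ and the empirical fluctuations of this truncated tail are of strictly lower order, a fact that can be packaged using Mendelson's small-ball / moment-control arguments. Collecting these estimates yields the claimed two-term bound.

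The main obstacle is precisely the sub-Gaussian/sub-exponential mismatch: the product class $\mathcal{F}\cdot\mathcal{H}$ is only sub-exponential, so one cannot directly apply a Talagrand--Sudakov type comparison to replace the empirical process by its Gaussian analogue and read off a pure $\omega\cdot\omega/m$ rate. The truncation level must be chosen \emph{asymmetrically}, with respect to whichever of $\mathcal{T}_\mathcal{F},\mathcal{T}_\mathcal{H}$ has the smaller complexity ratio $\omega/d$; this is precisely what the without-loss-of-generality assumption $\omega(\mathcal{T}_\mathcal{F})/d(\mathcal{T}_\mathcal{F})\ge\omega(\mathcal{T}_\mathcal{H})/d(\mathcal{T}_\mathcal{H})$ in the statement encodes, and it fixes both the exponent $\min\bigl\{m,(\omega(\mathcal{T}_\mathcal{F})/d(\mathcal{T}_\mathcal{F}))^2\bigr\}$ in the failure probability and the relative sizes of the linear-in-$t$ and quadratic-in-$\omega$ terms in the deviation bound. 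Once the truncation is arranged correctly, the remainder of the argument is a fairly standard combination of Talagrand-type concentration for bounded empirical processes with Gaussian moment bounds, but identifying the correct asymmetric threshold is the heart of the proof.
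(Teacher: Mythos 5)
This lemma is stated with an explicit citation to \cite{mendelson2012oracle} and the paper provides \emph{no proof of it at all}: it is imported as a black-box result from Mendelson's work and applied immediately afterward to derive Lemma~\ref{RIP1}. There is therefore nothing in this paper to compare your sketch against; the relevant ``proof'' is entirely external.

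That said, a few cautions about the sketch itself. The broad outline — polarize $fh$ into squares, truncate at an asymmetric level keyed to the smaller ratio $\omega/d$, run a chaining argument on the bounded piece, and control the tail piece by Gaussian moment bounds — does capture the flavor of Mendelson's technique. But you are glossing over the step that actually carries the load: a plain Dudley entropy integral is typically too weak here, since after polarization the increments of the empirical process have mixed sub-Gaussian/sub-exponential tails, and the sharp form of the bound (in particular the exact shape of the two-term rate and the $\min\{m,(\omega/d)^2\}$ exponent in the failure probability) comes from a generic-chaining/$\gamma_2$ argument tailored to multiplier processes with a two-regime tail, not from ``Dudley plus a remark that the process metric is Euclidean.'' Likewise, ``packaging this using Mendelson's small-ball arguments'' is not quite right: small-ball estimates give one-sided lower bounds, whereas the tail piece here needs a direct $L_p$-moment control on $\abs{\langle\vct{u},\vct{a}\rangle}\,\abs{\langle\vct{v},\vct{a}\rangle}\mathbb{1}_{\{\cdot>\tau\}}$. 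If you intended to reproduce the lemma rather than cite it, those are the two places that would need genuine work; for this paper, however, the lemma is simply taken as given from \cite{mendelson2012oracle}.
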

We shall use this theorem with 
\begin{align*}
\mathcal{T}_\mathcal{F}=\Big\{\frac{\vct{u}}{\twonorm{\vct{u}}}:\text{ }\vct{u}\in\mathcal{C}\Big\}\quad\text{and}\quad\mathcal{T}_\mathcal{H}=\Big\{\frac{\vct{v}}{\twonorm{\vct{v}}}:\text{ }\vct{v}\in\mathcal{C}'\Big\}.
\end{align*}
Note that $d(\mathcal{T}_{\mathcal{F}})=d(\mathcal{T}_{\mathcal{H}})=1$. The proof is complete by plugging the stated choice of $m$.

\subsection{Proof of Lemma \ref{expRIP1}}
\label{pfgordontype6}
Define $\vct{x}=\frac{1}{2}(\vct{u}+\vct{v})$ and $\vct{y}=\frac{1}{2}(\vct{u}-\vct{v})$. We have
\begin{align*}
\mathbb{E}\big[\abs{\vct{u}^*\vct{a}\vct{a}^*\vct{v}}\big]=\mathbb{E}\big[\abs{\abs{\vct{a}^*\vct{x}}^2-\abs{\vct{a}^*\vct{y}}^2}\big].
\end{align*}
Let $\vct{x}\vct{x}^*-\vct{y}\vct{y}^*=\lambda_1\vct{w}_1\vct{w}_1^*-\lambda_2\vct{w}_2\vct{w}_2^*$ be the eigen-value decomposition of $\vct{x}\vct{x}^*-\vct{y}\vct{y}^*$.
Therefore,
\begin{align}
\E[\abs{\abs{\vct{a}^*\vct{x}}^2-\abs{\vct{a}^*\vct{y}}^2}]=&\E[\abs{\vct{a}^*\left(\vct{x}\vct{x}^*-\vct{y}\vct{y}^*\right)\vct{a}}]\nonumber\\
=&\E[\abs{\lambda_1\abs{\vct{a}^*\vct{w}_1}^2-\lambda_2\abs{\vct{a}^*\vct{w}_2}^2}]\nonumber\\
=&\E[\abs{\lambda_1X_1^2-\lambda_2X_2^2}],
\end{align}
where $X_1,X_2\in\R$ are independent $\mathcal{N}(0,1)$ random variables. Before we proceed further let us now calculate the eigenvalues $\lambda_1,\lambda_2$ and verify that $\lambda_1,\lambda_2\ge 0$. Without loss of generality we can assume
\begin{align*}
\vct{x}=\twonorm{\vct{x}}\vct{e}_1\quad\text{and}\quad\vct{y}=\twonorm{\vct{y}}\left(\rho\vct{e}_1+\sqrt{1-\rho^2}\vct{e}_2\right),
\end{align*}
where $\rho=\frac{\vct{x}^*\vct{y}}{\twonorm{\vct{x}}\twonorm{\vct{y}}}$. Since $\vct{x}\vct{x}^*-\vct{y}\vct{y}^*$ has rank two its eigenvalues are the same as the matrix
\begin{align*}
\begin{bmatrix}
\twonorm{\vct{x}}^2-\rho^2\twonorm{\vct{y}}^2& -\rho\sqrt{1-\rho^2}\twonorm{\vct{y}}^2\\
-\rho\sqrt{1-\rho^2}\twonorm{\vct{y}}^2 & (\rho^2-1)\twonorm{\vct{y}}^2.
\end{bmatrix}
\end{align*}
Algebraic manipulations show that the eigenvalues of the above matrix are $\lambda_1,-\lambda_2$ where
\begin{align*}
\lambda_1=&\frac{1}{2}\left(\twonorm{\vct{x}}^2-\twonorm{\vct{y}}^2+\sqrt{\left(\twonorm{\vct{x}}^2-\twonorm{\vct{y}}^2\right)^2+4(1-\rho^2)\twonorm{\vct{x}}^2\twonorm{\vct{y}}^2}\right),\\
=&\frac{1}{2}\left(\twonorm{\vct{x}}^2-\twonorm{\vct{y}}^2+\twonorm{\vct{x}-\vct{y}}\twonorm{\vct{x}+\vct{y}}\right),\\
\lambda_2=&\frac{1}{2}\left(\twonorm{\vct{y}}^2-\twonorm{\vct{x}}^2+\sqrt{\left(\twonorm{\vct{x}}^2-\twonorm{\vct{y}}^2\right)^2+4(1-\rho^2)\twonorm{\vct{x}}^2\twonorm{\vct{y}}^2}\right),\\
=&\frac{1}{2}\left(\twonorm{\vct{y}}^2-\twonorm{\vct{x}}^2+\twonorm{\vct{x}-\vct{y}}\twonorm{\vct{x}+\vct{y}}\right).
\end{align*}
It is easy to verify that $\lambda_1,\lambda_2\ge 0$. Set
\begin{align*}
\eta=\frac{\lambda_1-\lambda_2}{\lambda_1+\lambda_2}=\frac{\twonorm{\vct{x}}^2-\twonorm{\vct{y}}^2}{\twonorm{\vct{x}-\vct{y}}\twonorm{\vct{x}+\vct{y}}}=\frac{\vct{u}^T\vct{v}}{\twonorm{\vct{u}}\twonorm{\vct{v}}}\quad\text{and}\quad \cos \theta =\eta,
\end{align*}
in which $\theta\in[0,\pi]$. Similar to the argument on page 13 of \cite{candes2012phaselift} by using polar coordinates, we have
\begin{align*}
\E[\abs{\abs{\vct{a}^*\vct{x}}^2-\abs{\vct{a}^*\vct{y}}^2}]=\E[\abs{\lambda_1X_1^2-\lambda_2X_2^2}]=&\frac{1}{2\pi}\left(\int_0^\infty r^3e^{-r^2/2}dr\right)\left(\int_0^{2\pi}\abs{\lambda_1\cos^2\phi-\lambda_2\sin^2\phi}d\phi\right)\\
=&\frac{1}{\pi}\int_0^{2\pi}\abs{\lambda_1\cos^2\phi-\lambda_2\sin^2\phi}d\phi\\
=&\frac{2}{\pi}\int_0^\pi\abs{\lambda_1\cos^2\phi-\lambda_2\sin^2\phi}d\phi.
\end{align*}
Now using the identities $\cos^2\phi=(1+\cos 2\phi)/2$ and $\sin^2\phi=(1-\cos 2\phi)/2$, we have
\begin{align*}
\E[\abs{\abs{\vct{a}^*\vct{x}}^2-\abs{\vct{a}^*\vct{y}}^2}]=&\frac{\lambda_1+\lambda_2}{\pi}\int_0^\pi\abs{\cos 2\phi+\eta}d\phi\\
=&\frac{\lambda_1+\lambda_2}{2\pi}\int_0^{2\pi}\abs{\cos \phi+\eta}d\phi\\
=&\frac{\lambda_1+\lambda_2}{2\pi}\left(\int_0^{\pi}\abs{\cos \phi+\eta}d\phi+\int_0^{\pi}\abs{-\cos \phi+\eta}d\phi\right)\\
=&\frac{2}{\pi}(\lambda_1+\lambda_2)\left(\sqrt{1-\eta^2}+\eta\sin^{-1}(\eta)\right)\\
=&\frac{2}{\pi}\twonorm{\vct{u}}\twonorm{\vct{v}}\left(\sin\theta+\cos\theta(\pi/2-\theta)\right).
\end{align*}

\end{document}